\documentclass[final,12pt]{jmlr}

\title{Infinitely Divisible Noise in the Low Privacy Regime}

\usepackage{times}
\usepackage{thm-restate}
\usepackage{graphicx}

\usepackage[symbol]{footmisc}

\DeclareMathOperator{\E}{\operatorname{E}}
\DeclareMathOperator{\Var}{\operatorname{Var}}
\DeclareMathOperator{\DGam}{\zeta_\Gamma}
\DeclareMathOperator{\Du}{\zeta_u}

\newtheorem{claim}{Claim}
\newtheorem{note}{Note}

\graphicspath{ {./plots/} }

\author{%
 \Name{Rasmus Pagh} \Email{pagh@di.ku.dk}\\
 \addr BARC and University of Copenhagen
 \AND
 \Name{Nina Mesing Stausholm} \Email{nimn@itu.dk}\\
 \addr IT University of Copenhagen%
}

\jmlrvolume{}
\jmlryear{}
\jmlrproceedings{}{}

\begin{document}

\maketitle

\begin{abstract}
Federated learning, in which training data is distributed among users and never shared, has emerged as a popular approach to privacy-preserving machine learning.
Cryptographic techniques such as secure aggregation are used to aggregate contributions, like a model update, from all users.
A robust technique for making such aggregates differentially private is to exploit \emph{infinite divisibility} of the Laplace distribution, namely, that a Laplace distribution can be expressed as a sum of i.i.d.~noise shares from a Gamma distribution, one share added by each user.

However, Laplace noise is known to have suboptimal error in the low privacy regime for $\varepsilon$-differential privacy, where $\varepsilon > 1$ is a large constant. In this paper we present the first infinitely divisible noise distribution for real-valued data that achieves $\varepsilon$-differential privacy and has expected error that decreases exponentially with $\varepsilon$.
\end{abstract}

\begin{keywords}%
  Differential privacy, federated learning.
\end{keywords}

\section{Introduction}

\emph{Differential privacy}, a state-of-the-art privacy definition, is a formal constraint on randomized mechanisms for privately releasing results of computations. It gives a formal framework for quantifying how well the privacy of an individual, whose data is part of the input, is preserved. %
In recent years, differentially private algorithms for machine learning have been developed and made available, for example, in TensorFlow/privacy and the Opacus library for PyTorch. 
Using such algorithms ensures that the presence or absence of a single data record in a database does not significantly affect the distribution of the model produced.

Concurrently, the area of \emph{federated learning}~\citep{McMahanMRHA17} has explored how to carry out machine learning in settings where training data is distributed among $n$ users and never shared.
Cryptographic techniques such as secure aggregation are used to aggregate contributions, like a model update, from all users~\citep{BonawitzIKMMPRS17}.
This setup is used, for example, in the federated learning system run by Google on data from Android phones.\footnote{\url{https://ai.googleblog.com/2017/04/federated-learning-collaborative.html}}
A robust technique for making such aggregates differentially private is to exploit \emph{infinite divisibility} of the Laplace distribution, namely, that a Laplace distribution can be expressed as a sum of $n$ i.i.d.~noise shares from a Gamma distribution, one share added to the input of each user~\citep{GoryczkaX17}.
That is, if user $i$ holds $x_i\in [0,\Delta]$, the input provided to the secure aggregation is $x_i + \eta_i$, where $\eta_i$ is sampled from a suitable Gamma distribution.
Infinitely divisible noise is resistent to \emph{dropout}, where some users never contribute to the sum, if we set $n$ to be a lower bound on the number of fully participating users.

However, the Laplace noise needed for $\varepsilon$-differential privacy yields expected error $\Theta(\Delta/\varepsilon)$, which is not optimal in the ``low privacy regime'' when $\varepsilon \gg 1$.
\cite*{GengKOV15} and \cite*{GengV16} presented the \emph{Staircase} mechanism  (see Lemma \ref{lem:staircasemech}), which can be parameterized to obtain expected error $\Theta(\Delta e^{-\varepsilon/2})$ or variance $\Theta(\Delta^2 e^{-2\varepsilon/3})$, thus outperforming the Laplace mechanism for $\varepsilon$ larger than some constant.
However, the noise distribution used by this mechanism is not infinitely divisible, so it cannot replace Laplace noise in federated settings.

In this paper we present the first infinitely divisible noise distribution for real-valued data that achieves $\varepsilon$-differential privacy and has expected error that decreases exponentially with $\varepsilon$.
Our new noise distribution, the \emph{Arete}\footnote[1]{The name \emph{Arete} is inspired by the word arête (pronounced "ah-ray't"), which is a sharp-crested mountain ridge, while also a concept from Greek mythology, Arete (pronounced "ah-reh-'tay") referring to moral virtue and excellence: the notion of the fulfillment of purpose or function and the act of living up to one's full potential \citep{AreteDefWiki}.} distribution, has expected absolute value and variance exponentially decreasing in $\varepsilon$, and thus comparable to that of the Staircase distribution up to constant factors in~$\varepsilon$. 
Figure~\ref{fig:densities} illustrates the shape of each of the three distributions.
\begin{figure}[t]
\centering
\subfigure[Laplace distribution]{
\includegraphics[width=.31\textwidth]{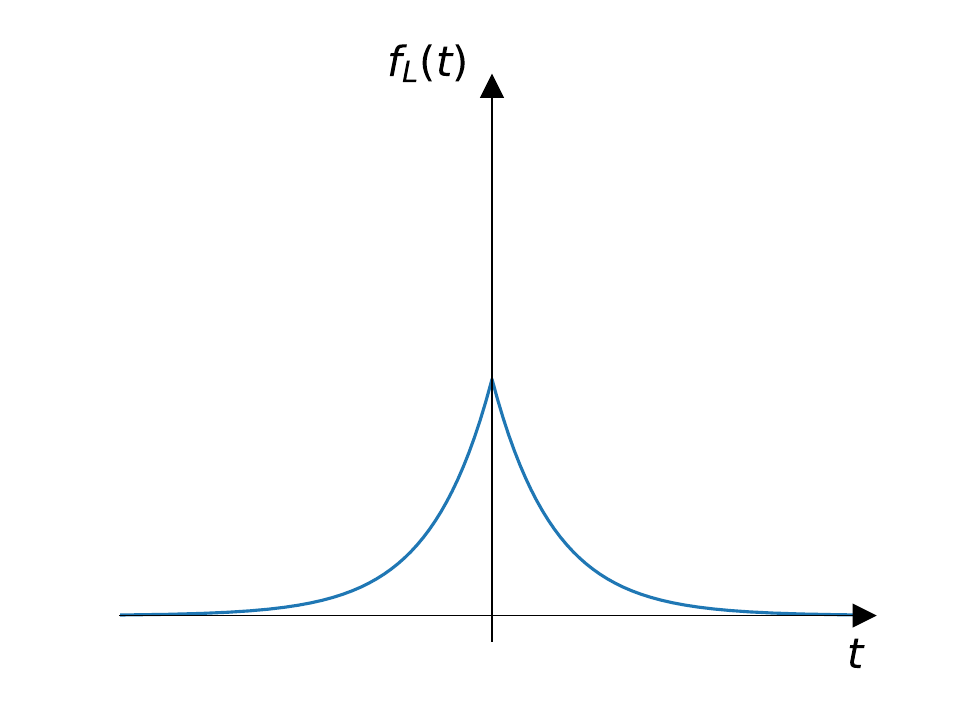}
}
\subfigure[Staircase distribution]{
\includegraphics[width=.31\textwidth]{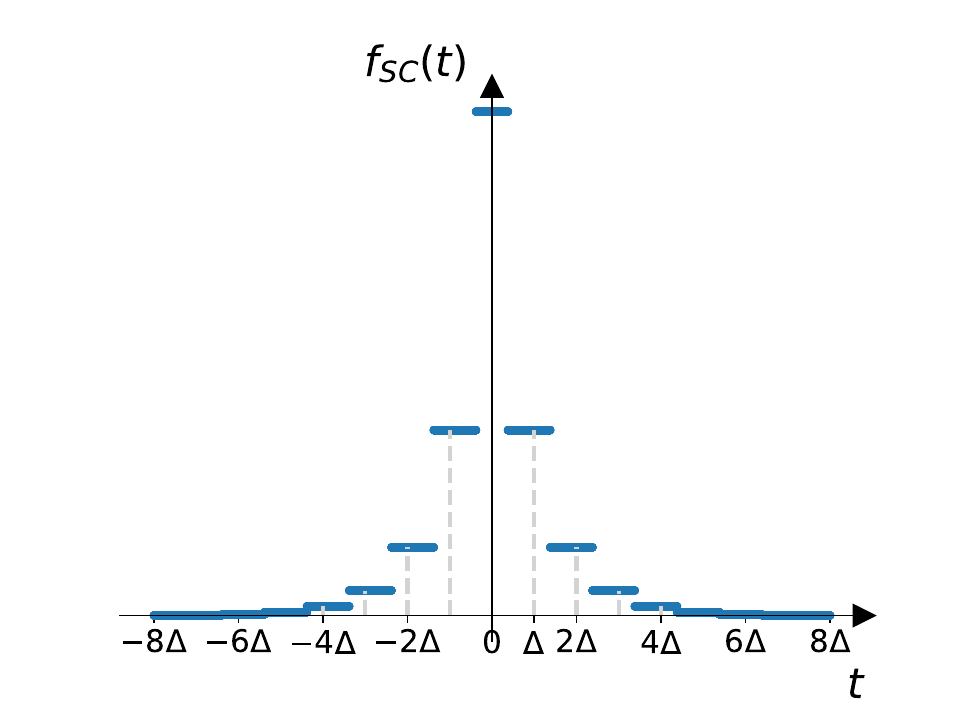}
}
\subfigure[Arete distribution]{
\includegraphics[width=.31\textwidth]{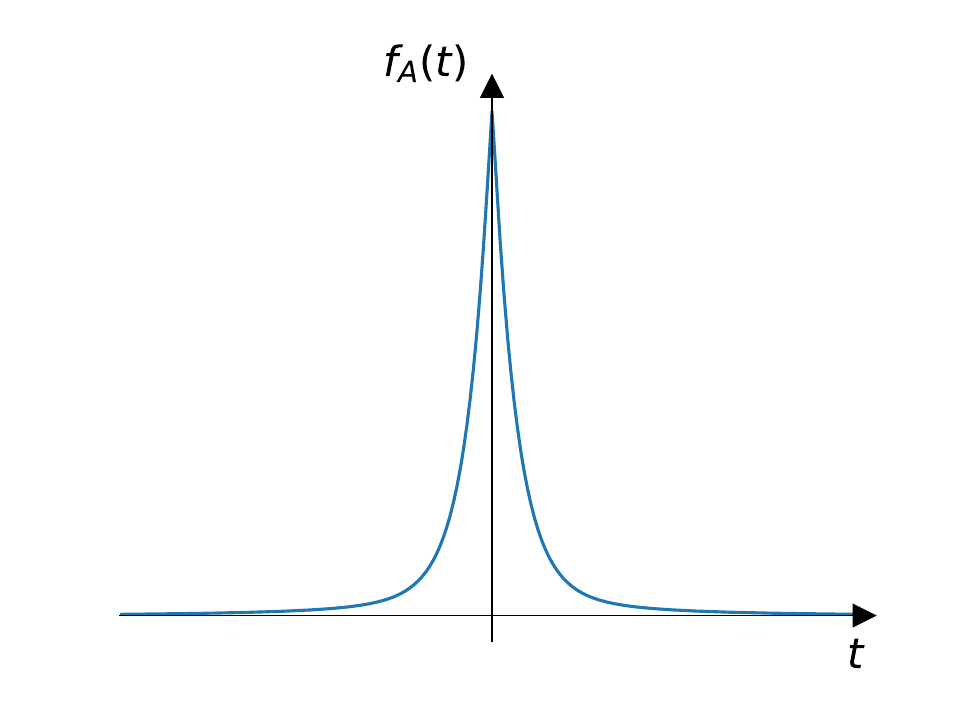}
}
\caption{\footnotesize{Illustration of the density functions for the Laplace, Staircase and Arete distributions.}}
\label{fig:densities}
\end{figure}
The Arete distribution has a continuous density function implying that the privacy level decreases more smoothly with sensitivity, in contrast to the Staircase distribution (see discussion in Section~\ref{sec:relatedwork}).

The main part of this paper is devoted to the accuracy and privacy analysis of the Arete mechanism. Appendix~\ref{sec:applications} discusses applications in distributed private data analysis.

\subsection*{The Arete Distribution}
For simplicity, we will limit ourselves to 1-dimensional setting. In order to deal with vectors (with $\ell_\infty$ sensitivity bounded by $\Delta$), we may simply add independent noise from the Arete distribution to each coordinate.
Our goal is to approximate the staircase distribution with an infinitely divisible distribution, so it is instructive to understand the essential properties of the staircase distribution: Only probability mass $\exp({-\Omega(\varepsilon)})$ is placed in the tails, which can be seen as a piece-wise uniform version of a scaled Laplace distribution. The majority of the probability mass is placed in a uniform distribution on an interval around zero of length $\exp({-\Omega(\varepsilon)})$. 

\begin{definition}[Arete distribution, informal]
\label{def:areteinformal}
Let independent random variables $X_1,X_2\sim\Gamma(\alpha,\theta)$ and $Y\sim Laplace(\lambda)$. Then $Z:=X_1-X_2+Y$ has Arete distribution with parameters $\alpha,\theta$ and $\lambda$, denoted $Arete(\alpha,\theta,\lambda)$. When the parameters $\alpha$, $\theta$ and $\lambda$ are understood from the context, we use $f_A(t)$, $t\in\mathbb{R}$, to denote the density function of $Z$.
\end{definition}

Since the $\Gamma$ and Laplace distributions are continuous and infinitely divisible, and the Laplace distribution is symmetric, it follows that the Arete distribution also has these properties. In Section~\ref{sec:mainAreteProperties} we show:
\begin{restatable}{lemma}{mainAreteProperties}
\label{lem:mainAreteProperties}
For any choice of parameters $\alpha,\theta,\lambda>0$, the $Arete(\alpha,\theta,\lambda)$ distribution is infinitely divisible and has density $f_A(t)$ that is continuous, symmetric around 0, and monotonely decreasing for $t>0$.
\end{restatable}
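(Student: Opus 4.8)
The plan is to handle the four assertions in increasing order of difficulty, with the bulk of the work in the monotonicity claim.

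\emph{Infinite divisibility, continuity, symmetry.} Infinite divisibility is immediate from the construction: $\Gamma(\alpha,\theta)$ is infinitely divisible (it is the $n$-fold convolution of $\Gamma(\alpha/n,\theta)$), $\mathrm{Laplace}(\lambda)$ is infinitely divisible since it is the law of the difference of two i.i.d.\ $\Gamma(1,\lambda)$ variables, the class of infinitely divisible laws on $\mathbb{R}$ is closed under independent sums and under reflection $x\mapsto -x$, and $Z=X_1+(-X_2)+Y$ is such a sum. For the density I would condition on $W:=X_1-X_2$: since $Y$ has the bounded, Lipschitz, even density $f_Y(u)=\tfrac{1}{2\lambda}e^{-|u|/\lambda}$, Fubini gives that $Z$ has density $f_A(t)=\E[f_Y(t-W)]$. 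Boundedness of $f_Y$ plus dominated convergence (using uniform continuity of $f_Y$) makes $f_A$ continuous --- indeed Lipschitz --- and evenness of $f_Y$ together with $W$ being equal in law to $-W$ (as $X_1,X_2$ are i.i.d.) gives $f_A(-t)=f_A(t)$.

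\emph{Reducing monotonicity to unimodality of $W$.} The plan is to show (i) $W$ is symmetric unimodal, meaning its density $f_W$ (which exists, since $X_1$ has a density) is symmetric and non-increasing on $[0,\infty)$; and (ii) convolving a symmetric unimodal law with $\mathrm{Laplace}(\lambda)$ preserves this. For (ii): $f_A=f_W * f_Y$ is Lipschitz, hence absolutely continuous, and differentiating under the integral and using symmetry of $f_W$ yields, for a.e.\ $t>0$,
\[
 f_A'(t)= -\frac{1}{2\lambda^{2}}\int_{0}^{\infty} e^{-s/\lambda}\bigl(f_W(t-s)-f_W(t+s)\bigr)\,ds .
\]
Since $|t-s|\le t+s$ for $t,s>0$, symmetry and monotonicity of $f_W$ give $f_W(t-s)=f_W(|t-s|)\ge f_W(t+s)$, so the integrand is $\ge 0$, hence $f_A'\le 0$ a.e.\ on $(0,\infty)$; absolute continuity then gives that $f_A$ is non-increasing on $(0,\infty)$. (Alternatively, one may invoke Wintner's theorem that a convolution of symmetric unimodal laws is symmetric unimodal.)

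\emph{Unimodality of $W$ --- the main obstacle.} This is where the difficulty sits, because for $\alpha<1$ the Gamma density is unbounded at $0$ and not log-concave. I would split on $\alpha$. If $\alpha\ge1$, then $f_\Gamma(x)\propto x^{\alpha-1}e^{-x/\theta}$ on $(0,\infty)$ is log-concave; log-concavity is preserved by convolution and reflection, so $f_W$ is log-concave, hence unimodal. If $0<\alpha\le1$, then for $w>0$ we have $f_W(w)=\int_0^\infty f_\Gamma(w+u)f_\Gamma(u)\,du$, and for each fixed $u>0$ the map $w\mapsto f_\Gamma(w+u)$ is non-increasing on $(0,\infty)$, since $\tfrac{d}{dw}\log f_\Gamma(w+u)=\tfrac{\alpha-1}{w+u}-\tfrac1\theta\le0$; integrating over $u$ shows $f_W$ is non-increasing on $(0,\infty)$ (with a harmless pole at $0$ when $\alpha\le\tfrac12$). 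Either way $W$ is symmetric unimodal, which combined with the previous paragraph completes the proof. A slicker, case-free alternative: $\Gamma(\alpha,\theta)$ is self-decomposable for every $\alpha>0$ (its L\'evy density $\tfrac{\alpha}{x}e^{-x/\theta}$ on $(0,\infty)$ has the required monotonicity), self-decomposability is closed under reflection and independent sums, so $Z$ is self-decomposable, and Yamazato's theorem gives unimodality of $Z$ directly.
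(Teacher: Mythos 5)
Your proof is correct and follows the paper's overall decomposition --- establish symmetric unimodality of $W=X_1-X_2$, then argue that adding independent Laplace noise preserves it --- but you execute the two hard steps more carefully, and in doing so you repair genuine gaps in the published argument. The paper concludes $f_A(t)\ge f_A(t')$ for $|t|\le|t'|$ ``using that $f_{\Gamma-\Gamma}$ is monotonely decreasing,'' but the pointwise inequality $f_{\Gamma-\Gamma}(|t|-x)\ge f_{\Gamma-\Gamma}(|t'|-x)$ does not hold for all $x$ (take $x=|t'|$), so this is not a monotone comparison of integrands; your identity
\[
f_A'(t)=-\frac{1}{2\lambda^{2}}\int_{0}^{\infty}e^{-s/\lambda}\bigl(f_W(t-s)-f_W(t+s)\bigr)\,ds,
\]
combined with $|t-s|\le t+s$, supplies exactly the missing argument (a concrete instance of Wintner's convolution theorem). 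Likewise, the paper's monotonicity of $f_{\Gamma-\Gamma}$ rests on ``$f_\Gamma$ is monotonely decreasing,'' which needs $\alpha\le 1$, whereas the lemma is stated for all $\alpha>0$; your log-concavity case for $\alpha\ge 1$, or the case-free self-decomposability/Yamazato route, covers the full parameter range. Finally, your continuity argument is also cleaner: the paper claims $f_{\Gamma-\Gamma}$ itself is continuous by the convolution lemma, but $f_{\Gamma-\Gamma}(0)=\int_0^\infty f_\Gamma(x)^2\,dx=\infty$ once $\alpha\le 1/2$, so that claim fails; only continuity of $f_A$ is needed, and it follows, as you observe, from $f_A(t)=\E[f_Y(t-W)]$ with $f_Y$ bounded and Lipschitz, giving $|f_A(t)-f_A(t')|\le \frac{1}{2\lambda^2}|t-t'|$. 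In short, same skeleton, but your fleshing-out is the one that actually works for the lemma as stated.
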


Next, we discuss the intuition behind the noise and privacy properties of the Arete distribution:
For privacy parameter $\varepsilon>0$ and sensitivity $\Delta> 0$ we concern ourselves with distributions $\mathcal{D}$ with support $S$ and density function $f_\mathcal{D}$ satisfying
\begin{align}
\label{eq:DPratio}
e^{-\varepsilon}\le \frac{f_\mathcal{D}(t)}{f_\mathcal{D}(t+a)}\le e^\varepsilon, \qquad \forall t, a\in \mathbb{R}, |a|\leq\Delta
\end{align}
as this property is sufficient to ensure differential privacy, which is our main goal. We will refer to the property (\ref{eq:DPratio}) as the \emph{differential privacy constraint}.
In order to minimize the magnitude of the noise, the goal is to find a distribution with minimal expected (absolute) value while satisfying (\ref{eq:DPratio}). 

The difference of two $\Gamma$ distributed random variables can be parameterized to have similar tails and to ``peak'' in an interval around zero of the same width as the staircase distribution. But this does not provide differential privacy since the density function has a singularity at zero. To achieve differential privacy we add a small amount of Laplace noise that ``smooths out'' the singularity.
In more detail, the $\Gamma(\alpha,\theta)$-distribution (see Definition \ref{def:gammadistr}) with shape $\alpha<1$, has most of its probability mass on an interval $(0,O(\alpha))$. The difference of two $\Gamma$ distributions does not satisfy (\ref{eq:DPratio}) for any choice of $\alpha<1$, as the density tends to infinity for values going to zero. To fix this we need to ``flatten the curve'' of the density function in the neighborhood of 0. Consider $Z':=X+Y$ for independent  $X\sim\Gamma(\alpha,\theta)$ and $Y\sim Exp(\lambda)$. The Exponential distribution, with a suitable choice of parameter $\lambda$, is used to flatten the density function of the $\Gamma$ distribution close to 0. In order to get a noise distribution that is symmetric around zero, we further consider $Z=X_1+Y_1-(X_2+Y_2)$ for $X_1,X_2\sim\Gamma(\alpha,\theta)$ and $Y_1,Y_2\sim Exp(\lambda)$. 
Our definition of the Arete distribution follows from the fact that if $Y_1,Y_2\sim Exp(\lambda)$, then $Y=Y_1-Y_2\sim Laplace(\lambda)$. We provide an explicit setting for the parameters $\alpha,\theta,\lambda$ in Lemma \ref{lem:main}.

We note that the Arete distribution generalizes the Laplace distribution in the sense that we obtain $Laplace(\lambda)$ as the limiting distribution for $\alpha \rightarrow 0$. In this sense, the Arete distribution is suitable also for $\varepsilon$ close to zero, where it may simply be used to implement Laplace noise.

\subsection*{Main Results}

\begin{figure}[t]
    \centering
    \includegraphics[width=0.45\linewidth]{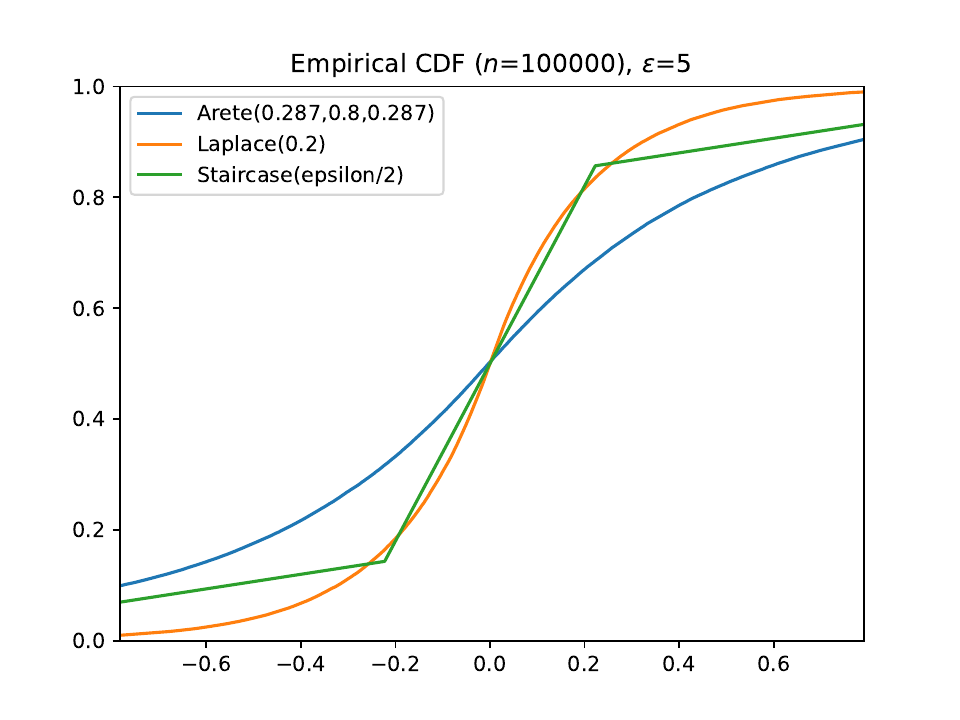}
    \includegraphics[width=0.45\linewidth]{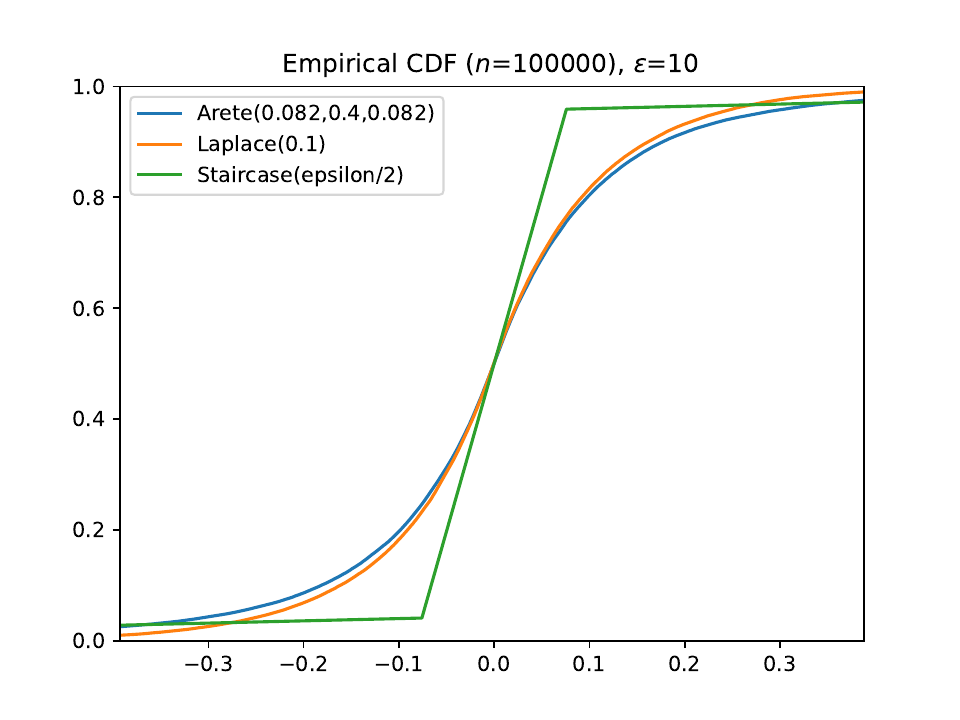}
    \includegraphics[width=0.45\linewidth]{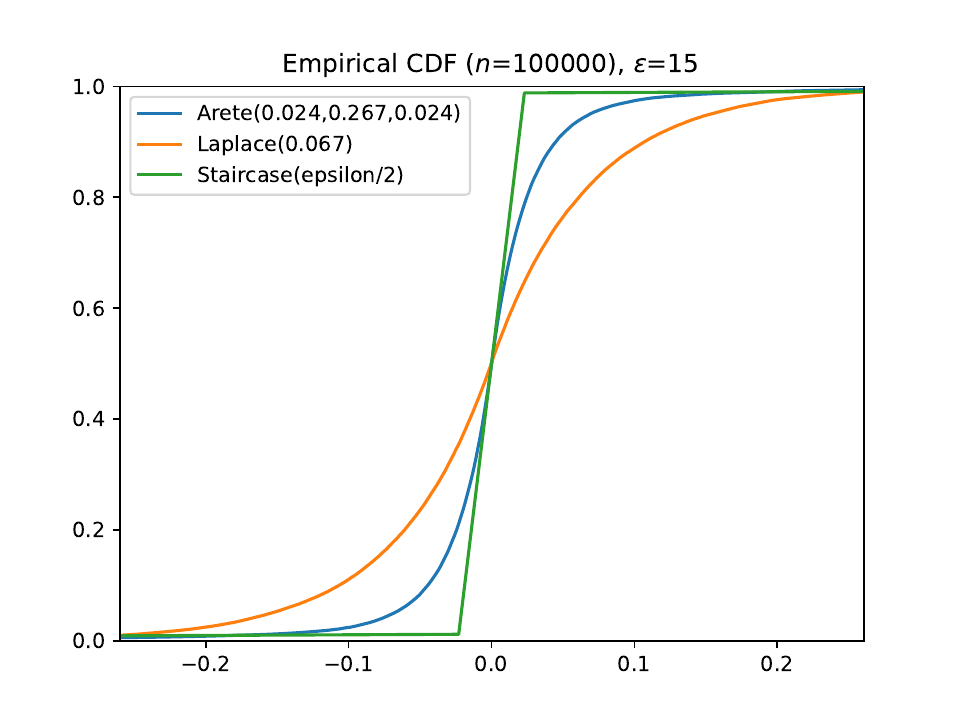}
    \includegraphics[width=0.45\linewidth]{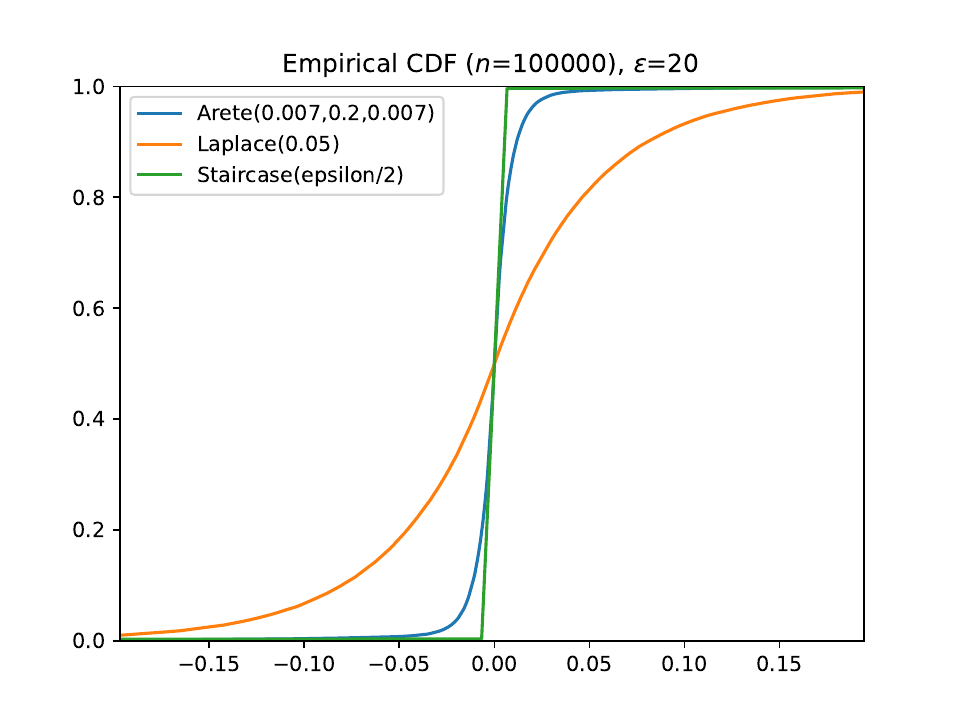}
    \caption{\footnotesize{Empirical cumulative distribution functions for Arete distributions parameterized according to Lemma~\ref{lem:main} for various values of $\varepsilon$, compared to two known $\varepsilon$-differentially noise distributions: Laplace($1/\varepsilon$), which is infinitely divisible, and a Staircase distribution (not infinitely divisible) parameterized to be $(\varepsilon + o(1))$-differentially private at neighbor distance $\Delta = 1+o(1)$. The values of $\varepsilon < 20$ are for illustration only, since they are too small for Lemma~\ref{lem:main} to apply.}}
    \label{fig:arete_cdf}
\end{figure}

Let the Arete distribution $Arete(\alpha,\theta,\lambda)$ be as in Definition \ref{def:areteinformal} (and formally, Definition \ref{def:aratedistr}) with density function $f_A$.
In Section~\ref{sec:puttingThingsTogether} we show the following result:
\begin{restatable}{lemma}{main}
\label{lem:main}
For every choice of $\Delta\ge 2/e$ and $\varepsilon\ge 20+4\ln(\Delta)$ there exist parameters $\alpha,\beta,\lambda>0$ such that:
\begin{itemize}
    \item For every choice of $t, a\in\mathbb{R}$ with $|a|\leq\Delta$, $e^{-\varepsilon}\le \frac{f_A(t)}{f_A(t+a)}\le e^\varepsilon$:  %
    \item For $Z\sim Arete(\alpha,\theta,\lambda)$, $\E[\vert Z\vert]=O(\Delta e^{-\varepsilon/4})$ and $\Var[Z]=O(\Delta^2e^{-\varepsilon/4})$.
\end{itemize}
Parameters $\alpha=e^{-\varepsilon/4}, \theta=\frac{4\Delta}{\varepsilon}$ and  $\lambda=e^{-\varepsilon/4}$ suffice.
\end{restatable}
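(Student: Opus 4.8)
The plan is to verify the two bullets separately for the stated parameters $\alpha=\lambda=e^{-\varepsilon/4}$ and $\theta=4\Delta/\varepsilon$, invoking the hypotheses $\Delta\ge 2/e$ and $\varepsilon\ge 20+4\ln\Delta$ only at the end to supply numerical slack. It helps to record the regime these parameters put us in: $\alpha=\lambda<e^{-5}<\tfrac12$, $\Delta/\theta=\varepsilon/4$, and $\theta/\lambda=\tfrac{4\Delta}{\varepsilon}e^{\varepsilon/4}$ is large, so the $Laplace(\lambda)$ component lives on a much finer scale than the $\Gamma$ component.

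The moment bounds are immediate. With $Z=X_1-X_2+Y$ as in Definition~\ref{def:areteinformal}, linearity gives $\E[\vert Z\vert]\le 2\E[X_1]+\E[\vert Y\vert]=2\alpha\theta+\lambda=e^{-\varepsilon/4}\bigl(\tfrac{8\Delta}{\varepsilon}+1\bigr)=O(\Delta e^{-\varepsilon/4})$, and independence gives $\Var[Z]=2\Var[X_1]+\Var[Y]=2\alpha\theta^2+2\lambda^2=\tfrac{32\Delta^2}{\varepsilon^2}e^{-\varepsilon/4}+2e^{-\varepsilon/2}=O(\Delta^2 e^{-\varepsilon/4})$, using $\Delta\ge 2/e$ and $\varepsilon\ge 20$ to absorb the additive constants into $\Delta$ (respectively $\Delta^2$).

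For the privacy bullet I would first reduce to a one-sided, fixed-shift inequality. By Lemma~\ref{lem:mainAreteProperties}, $f_A$ is symmetric and nonincreasing on $[0,\infty)$, so for any $s,t$ with $\vert s-t\vert\le\Delta$ we have $\vert\ln f_A(s)-\ln f_A(t)\vert=\ln\bigl(f_A(u)/f_A(u+\delta)\bigr)$ with $u:=\min(\vert s\vert,\vert t\vert)\ge 0$ and $\delta:=\bigl\vert\,\vert s\vert-\vert t\vert\,\bigr\vert\le\vert s-t\vert\le\Delta$; since $f_A$ is nonincreasing on $[0,\infty)$ this is at most $\ln\bigl(f_A(u)/f_A(u+\Delta)\bigr)$, so the whole constraint~(\ref{eq:DPratio}) — both directions, all $\vert a\vert\le\Delta$ — follows once we show
\[
f_A(u)/f_A(u+\Delta)\le e^{\varepsilon}\qquad\text{for every }u\ge 0 .
\]
To analyse $f_A$ I would use the convolution structure $f_A=f_D*\ell_\lambda$, where $f_D$ is the density of $D:=X_1-X_2$ (symmetric, strictly decreasing on $(0,\infty)$ because $\alpha<\tfrac12$, with $f_D(s)\asymp\vert s\vert^{2\alpha-1}$ as $s\to 0$; this singularity is precisely why $f_D$ alone fails~(\ref{eq:DPratio})) and $\ell_\lambda$ is the $Laplace(\lambda)$ density. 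Two estimates carry the argument. \emph{(i) A pointwise log-derivative bound for $f_D$:} from $f_D(w)=\tfrac{e^{-w/\theta}}{\Gamma(\alpha)^2\theta^{2\alpha}}\int_0^\infty(w+x)^{\alpha-1}x^{\alpha-1}e^{-2x/\theta}\,dx$ one gets $(\ln f_D)'(w)=-\tfrac1\theta-(1-\alpha)\,\E_{\nu_w}\!\bigl[(w+X)^{-1}\bigr]\ge-\tfrac1\theta-\tfrac{1-\alpha}{w}$ for all $w>0$, where $\nu_w$ is the probability measure $\propto(w+x)^{\alpha-1}x^{\alpha-1}e^{-2x/\theta}\,dx$; integrating yields $\tfrac{f_D(u)}{f_D(u+\Delta)}\le e^{\Delta/\theta}\bigl(\tfrac{u+\Delta}{u}\bigr)^{1-\alpha}\le e^{\varepsilon/4}\bigl(1+\tfrac{\Delta}{u}\bigr)$ for $u>0$. \emph{(ii) Convolving with $\ell_\lambda$ only smooths $f_D$ near the origin:} $f_A(0)=\tfrac1{2\lambda}\E\bigl[e^{-\vert D\vert/\lambda}\bigr]=\Theta(1/\lambda)$ since $\vert D\vert$ concentrates at a scale $\ll\lambda$, while for $u$ past a threshold $u_0\asymp\varepsilon\lambda$ one has $f_A(u)=\Theta(f_D(u))$ — the part of $\int f_D(w)\ell_\lambda(u-w)\,dw$ with $\vert w\vert\le u/2$ is at most $\tfrac1{2\lambda}e^{-u/(2\lambda)}\ll f_D(u)$, and the part with $\vert u-w\vert\le u/2$ is $\Theta(f_D(u))$ by~(i) and $\lambda\ll\theta$. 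Combining, for $0\le u\le u_0$ bound $\tfrac{f_A(u)}{f_A(u+\Delta)}\le\tfrac{f_A(0)}{f_A(u_0+\Delta)}\asymp\tfrac1{\lambda f_D(\Delta)}\asymp\Delta\, e^{3\varepsilon/4}$ (using $u_0\ll\Delta$ and $f_D(\Delta)\asymp e^{-\varepsilon/2}/\Delta$), and for $u\ge u_0$ bound $\tfrac{f_A(u)}{f_A(u+\Delta)}\le\Theta(1)\cdot e^{\varepsilon/4}\bigl(1+\tfrac{\Delta}{u_0}\bigr)\asymp\Delta\, e^{\varepsilon/2}$. Both are below $e^{\varepsilon}$ exactly when $\varepsilon$ exceeds $4\ln\Delta$ plus an absolute constant — which is what $\varepsilon\ge 20+4\ln\Delta$ gives, the $4\ln\Delta$ absorbing the $\Delta^{\pm1}$ factors and the additive constant absorbing $\Gamma(\alpha)^{-1}\asymp\alpha$, $2^{\pm\alpha}$, $\Delta^{\pm\alpha}$ and numerical constants.

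The step I expect to be the main obstacle is making estimate~(ii) rigorous with explicit constants: controlling the convolution $f_D*\ell_\lambda$ through the integrable-but-unbounded singularity of $f_D$ at $0$, so that $f_A(u)$ and $f_A(u+\Delta)$ are provably within absolute constant factors of $f_D$ at those points over the relevant ranges of $u$ (this is what transfers the clean ratio bound~(i) to $f_A$), while tracking the $\alpha\to0$ behaviour of the Gamma- and Beta-function constants and the $\theta^{\pm\alpha},\Delta^{\pm\alpha}$ factors closely enough that $\varepsilon\ge 20+4\ln\Delta$ suffices. An alternative that sidesteps the singularity is to work with $W:=X+Y$ (so $f_W$, the density of $\Gamma(\alpha,\theta)*Exp(\lambda)$, satisfies $f_W(w)\asymp w^{\alpha}$ near $0$) via the identity $f_A(u)=\int_0^\infty f_W(u+v)f_W(v)\,dv$ for $u\ge 0$; such density estimates presumably underlie the auxiliary results assembled in Sections~\ref{sec:mainAreteProperties} and~\ref{sec:puttingThingsTogether}.
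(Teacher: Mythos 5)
Your moment computation and the reduction (via symmetry and monotonicity of $f_A$) to bounding $f_A(u)/f_A(u+\Delta)$ for $u\ge 0$ exactly match the paper. The log-derivative bound in your estimate (i) is correct and is a genuinely different — and sharper — mechanism than the paper's: the paper sandwiches $f_{\Gamma-\Gamma}$ as $c_{\DGam}f_\Gamma(|t|+\DGam)\le f_{\Gamma-\Gamma}(t)\le f_\Gamma(|t|)$ (Lemma~\ref{lem:boundsonGammaGamma}) and propagates the ratio $\frac{e^{(\Delta+\DGam)/\theta}}{c_{\DGam}}\bigl(\frac{u+\Delta+\DGam}{u}\bigr)^{1-\alpha}$, whereas your $\frac{f_D(u)}{f_D(u+\Delta)}\le e^{\Delta/\theta}\bigl(1+\frac{\Delta}{u}\bigr)^{1-\alpha}$ carries no extra $c_{\DGam}$ and $e^{\DGam/\theta}$ losses. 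Carried through, this could plausibly yield a smaller threshold constant than $20+4\ln\Delta$.

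The real gap is estimate~(ii), and you have correctly identified it as the obstacle, but the sketch as written does not close it. Two concrete issues. First, the decomposition $\{|w|\le u/2\}\cup\{|u-w|\le u/2\}$ does not cover $w<-u/2$ or $w>3u/2$, so the two-term bound on $f_A(u)=\int f_D(w)\ell_\lambda(u-w)\,dw$ is incomplete (those tails are small, but must be said). Second and more substantively, your claim that the piece over $|u-w|\le u/2$ is $\Theta(f_D(u))$ is not justified by what you have written. The lower bound is fine (restrict to $|u-w|\lesssim\lambda$ and use $\lambda\ll\theta$), but for the upper bound, $f_D(w)$ on that set can be as large as $f_D(u/2)$, which by your own (i) can exceed $f_D(u)$ by a factor $2^{1-\alpha}e^{u/(2\theta)}$ — unbounded as $u\to\infty$. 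You must instead write $f_D(w)\ell_\lambda(u-w)\le f_D(u)\cdot e^{(u-w)/\theta}\bigl(\tfrac{u}{w}\bigr)^{1-\alpha}\cdot\tfrac{1}{2\lambda}e^{-(u-w)/\lambda}$ and integrate the $e^{-(u-w)(1/\lambda-1/\theta)}$ factor, exploiting $1/\lambda>1/\theta$. This is exactly the technical content of the paper's Lemma~\ref{lem:maximizingfraction} and its use in the proofs of Lemmas~\ref{lem:DPratioLarget} and~\ref{lem:DPratioSmallt}; the paper avoids stating a clean $f_A\asymp f_D$ comparability altogether by instead proving a one-sided lower bound $f_A(t+\Delta)\ge c_{\DGam}c_L\,f_\Gamma(|t+\Delta|+\DGam)$ (Lemma~\ref{lem:lowerboundArete}, via the nice observation that $\mathrm{Laplace}(\lambda)$ puts mass $\ge 1/4$ on $(0,2(t+\Delta))$ when $\lambda\le\Delta/\ln 2$), and then bounding the numerator integral directly by splitting $t$ at $\Du=\alpha\theta$. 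So: the strategy you propose is sound and the log-derivative ingredient is a genuine improvement, but the "main obstacle" you flag is not a routine verification — it is precisely where the bulk of the paper's technical work lies, and your sketch of it needs the explicit $1/\lambda-1/\theta$ machinery to be made to stand.
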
%

The condition $\Delta\ge 2/e$ is not essential in the sense that we may always scale the noise by a factor $\Delta$, which means that it is enough to consider sensitivity~1. This also means that $\varepsilon\ge 20$ suffices if we use a scaled Arete distribution.

Figure~\ref{fig:arete_cdf} shows empirical cumulative distribution functions for Arete distributions derived from Lemma~\ref{lem:main}. The code for generating these plots, as well as all other plots in this paper, can be found on GitHub.\footnote{\url{https://github.com/rasmus-pagh/alt22-code}}

\medskip

The following corollary, shown in Section~\ref{sec:mainAreteProperties}, says that adding noise from the Arete distribution gives an $\varepsilon$-differentially private mechanism. We refer to Section~\ref{sec:DP} for details about differential privacy and the definition of the sensitivity of a query.
\begin{definition}[The Arete mechanism]
\label{def:aretemech}
Let $x\in\mathcal{X}^d$ be an input and $q:\mathcal{X}^d\rightarrow \mathbb{R}$ a query with sensitivity bounded by $\Delta\ge 2/e$. Given parameters $\alpha, \theta,\lambda$, the \emph{Arete mechanism} $\mathcal{M}_{Arete}(x)$ samples $Z\sim Arete(\alpha,\theta,\lambda)$ and returns $q(x)+Z$. %
\end{definition}
\begin{restatable}{corollary}{mainDP}
\label{cor:mainDP}
The Arete mechanism $\mathcal{M}_{Arete}$ with parameters as specified in Lemma \ref{lem:main} has expected error $O(\Delta e^{-\varepsilon/4})$ and is $\varepsilon$-differentially private. %
\end{restatable}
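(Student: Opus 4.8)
The plan is to derive the corollary almost immediately from Lemma~\ref{lem:main}, the only substantive ingredient being the textbook reduction from a pointwise density-ratio bound to $\varepsilon$-differential privacy for real-valued additive-noise mechanisms.

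For the accuracy claim I would note that the error of $\mathcal{M}_{Arete}(x)$ on the query $q$ is $\vert \mathcal{M}_{Arete}(x) - q(x)\vert = \vert Z\vert$ with $Z\sim Arete(\alpha,\theta,\lambda)$, so the expected error equals $\E[\vert Z\vert]$. Substituting the parameters $\alpha = e^{-\varepsilon/4}$, $\theta = 4\Delta/\varepsilon$, $\lambda = e^{-\varepsilon/4}$ prescribed by Lemma~\ref{lem:main}, the second bullet of that lemma gives $\E[\vert Z\vert] = O(\Delta e^{-\varepsilon/4})$ (and, if wanted, $\Var[Z] = O(\Delta^2 e^{-\varepsilon/4})$). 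Note that this invocation requires $\Delta \ge 2/e$ and $\varepsilon \ge 20 + 4\ln\Delta$, which are exactly the hypotheses carried over into the statement of the corollary via the reference to Lemma~\ref{lem:main}.

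For the privacy claim I would fix an arbitrary pair of neighboring inputs $x, x' \in \mathcal{X}^d$ and set $a := q(x) - q(x')$; since $q$ has sensitivity at most $\Delta$ we have $\vert a\vert \le \Delta$. The output of $\mathcal{M}_{Arete}$ on input $x$ has density $s\mapsto f_A(s - q(x))$, and on input $x'$ density $s\mapsto f_A(s - q(x')) = f_A\bigl((s-q(x)) + a\bigr)$. Applying the first bullet of Lemma~\ref{lem:main} with $t := s - q(x)$ and this $a$ yields $e^{-\varepsilon} \le f_A(s-q(x))/f_A(s-q(x')) \le e^{\varepsilon}$ for every $s \in \mathbb{R}$; this is legitimate because $f_A$ is strictly positive everywhere (inherited from the full support of the Laplace component in Definition~\ref{def:areteinformal}), so the ratio is always defined. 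Integrating this inequality over an arbitrary measurable $S \subseteq \mathbb{R}$ gives $\Pr[\mathcal{M}_{Arete}(x) \in S] \le e^{\varepsilon}\,\Pr[\mathcal{M}_{Arete}(x') \in S]$, and exchanging the roles of $x$ and $x'$ gives the matching bound in the other direction; since $x, x'$ were an arbitrary neighboring pair, this is exactly $\varepsilon$-differential privacy per the definition recalled in Section~\ref{sec:DP}.

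I do not expect a genuine obstacle: the only points that need care are matching the sign convention of the shift $a$ between the density-ratio inequality of Lemma~\ref{lem:main} and the definition of sensitivity, and the positivity remark that makes the pointwise ratios well defined. (For vector-valued queries one would apply the scalar argument coordinatewise as indicated after Definition~\ref{def:areteinformal}, but the one-dimensional case above is the crux.)
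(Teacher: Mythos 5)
Your proof is correct, and it is essentially the same reduction as the paper's: both derive differential privacy by applying the pointwise density-ratio bound from Lemma~\ref{lem:main} and integrating over the output set, and both get the error bound directly from the $\E[\vert Z\vert]$ estimate. The one material difference is that the paper's argument detours through symmetry, the triangle inequality, and monotonicity of $f_A$ to replace $z$ and $q(x')-q(x)$ with their absolute values before invoking Lemma~\ref{lem:main}, whereas you apply the lemma directly with $t = s - q(x)$ and $a = q(x)-q(x')$. Since Lemma~\ref{lem:main} already states a two-sided bound for all $t\in\mathbb{R}$ and $|a|\le\Delta$, your direct route is cleaner and loses nothing; the paper's extra step is not needed. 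Your added remark that $f_A$ is strictly positive (via the full-support Laplace component) so that the pointwise ratios are well defined is a small but worthwhile piece of hygiene that the paper leaves implicit.
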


\medskip

{\bf Discussion of Large Values of $\varepsilon$.}
\label{sec:largeeps}
Values of $\varepsilon$ larger than one often appear in practice. Examples of deployments using large values of~$\varepsilon$ include Google's RAPPOR with $\varepsilon$ up to 9, Apple's MacOS with $\varepsilon=6$, iOS 10 with $\varepsilon=14$ \citep{Greenberg}, and US Census Bureau with $\varepsilon$ up to $19.6$.\footnote{{\scriptsize\url{https://www.census.gov/newsroom/press-releases/2021/2020-census-key-parameters.html}}} %

These examples are not directly comparable to our setting since they deal with either local differential privacy (data of a single user), or with the release of many statistics (rather than a single aggregate). 
However, we note that mechanisms in the low-privacy regime can often be ``boosted'', e.g.~using sampling~\citep{BalleBG18} or shuffling~\citep{ErlingssonFMRTT19}, to obtain a mechanism with a better privacy parameter.
If sampling is used to improve privacy we will of course get noise due to sampling error, and this will dominate the total error in most cases.
One way to use our result (with secure aggregation) is to essentially match the error of estimating a sum from a sample, getting privacy almost for free when the sample is much smaller than the whole data set.

The lower bound on $\varepsilon$ in Lemma \ref{lem:main} is high, but we note that \emph{empirically} we achieve differential privacy for significantly lower values of $\varepsilon$ (see Section~\ref{sec:conclusion}).

\section{Related Works}
\label{sec:relatedwork}
A fundamental question is what can be said about the tradeoff between error and privacy. \cite*{HardtT10} study this tradeoff for linear queries, showing a lower bound of $\Omega(1/\varepsilon)$ for worst case expected $\ell_2$-norm of noise (std. deviation) %
under the constraint of $\varepsilon$-differential privacy for small $\varepsilon$. 
\cite{NikolovTZ13} extend the work of \citep{HardtT10} to the tradeoff between error and $(\varepsilon,\delta)$-differential privacy.
For error that can be a general function of the added noise, \cite*{GupteS10} and \cite*{GhoshRS12} introduced the Geometric Mechanism for \emph{counting queries} (integer valued) with sensitivity~1, showing that the optimal noise has a (symmetric) Geometric distribution with error (standard deviation) $\Theta(e^{-\varepsilon/2})$. %
\cite{BrennerN10} extend these results, showing that for general queries there is no optimal mechanism for $\varepsilon$-differential privacy.
In the high privacy regime, \cite{GengV16a} present a (near) optimal mechanism for integer-valued vector queries for $(\varepsilon,\delta)$-differential privacy, achieving error (for single-dimensional queries) $\Theta(\min\{1/\varepsilon,1/\delta\})$ for small $\varepsilon$ and $\delta$. %
Though the geometric mechanism yields optimal error in the \emph{discrete} setting, and is infinitely divisible~\citep{GoryczkaX17}, it does not seem to generalize to a differentially private, infinitely divisible noise distribution in the real-valued setting.
Recently, \cite{KairouzL021} studied a distributed, discrete version of the Gaussian mechanism, which has good composition properties.
This mechanism is not aimed at the low privacy regime, and does not have error that decreases exponentially as $\varepsilon$ grows.

Generalizing to \emph{real-valued} 1-dimensional queries with arbitrary sensitivity, \cite{GengV16} introduced the $\varepsilon$-differentially private Staircase mechanism (see Lemma \ref{lem:staircasemech}), which adds noise from the Staircase distribution -- a geometric mixture of uniform distributions. 
The density function of the Staircase distribution, $f_{SC}$, is a piece-wise continuous step (or ``staircase-shaped'') function, symmetric around zero, with geometrically decaying density as a function of the distance from zero.
The staircase mechanism circumvents a lower bound of \cite{KoufogiannisHP15} on ``Lipschitz'' differential privacy, which requires the privacy loss to be bounded by $\varepsilon |q(x)-q(y)|/\Delta$, by only bounding the worst case privacy loss for $|q(x)-q(y)|/\Delta \leq 1$.

\cite{GengV16} prove that the optimal $\varepsilon$-differentially private mechanism for single real-valued queries, measuring error as expected magnitude or variance of noise, is not Laplace but rather Staircase distributed: while the Laplace mechanism is asymptotically optimal as $\varepsilon\rightarrow 0$, the Staircase mechanism performs better in the low privacy regime (i.e., for large $\varepsilon$), as the expected magnitude of the noise is exponentially decreasing in $\varepsilon$. Specifically, for sensitivity $\Delta$ and for the parameter setting of $\gamma$ optimizing for expected noise magnitude, the Staircase mechanism achieves error $\Theta(\Delta e^{-\varepsilon/2})$. For the choice of $\gamma$ optimizing for variance, the Staircase mechanism ensures variance of the noise $\Theta(\Delta^2 e^{-2\varepsilon/3})$. We note that the $\gamma$ optimizing for noise magnitude is not generally the same a for optimizing for variance. The Laplace distribution has expected noise magnitude $\Theta(\Delta/\varepsilon)$ and variance $\Theta(\Delta^2/\varepsilon^2)$. In comparison, the expected noise magnitude and variance of the Arete distribution is also exponentially decreasing in $\varepsilon$, specifically $O(\Delta e^{-\varepsilon/4})$ and $O(\Delta^2e^{-\varepsilon/4})$, respectively, for our choice of parameters. The expected error and variance mentioned here are for a single parameter setting for both Laplace and Arete mechanisms.

As we want a noise distribution that is implementable in a distributed setting, we limit our interest to noise distributions that are oblivious of the input data and the query output. %
A nice property of the Arete distribution is that the density function is continuous and so we get a more graceful decrease in privacy than the Staircase mechanism for inputs that are not quite neighboring. 
We can measure this using the \emph{worst case privacy loss}, which is the logarithm of the largest ratio between the mechanism's density functions on inputs $x$ and $y$.
If $|q(x)-q(y)| \leq \Delta$, inequality (\ref{eq:DPratio}) implies that the worst case privacy loss is at most $\varepsilon$.
For inputs with $|q(x)-q(y)| > \Delta$ this is no longer guaranteed, but it is still interesting to study how the level of privacy decreases as a function of $|q(x)-q(y)|$.
The Staircase mechanism is \emph{exactly} fitted to the sensitivity of the query such that differential privacy is guaranteed for neighboring inputs, but as soon as $|q(x)-q(y)| > \Delta$ the worst case privacy loss is immediately doubled. The privacy loss increases in a smoother fashion when applying the Arete distribution due to the continuity of the density function (See Figure~\ref{fig:privacyloss}).

\begin{figure}[t]
\centering
\subfigure[Staircase mechanism]{
\includegraphics[width=.45\textwidth]{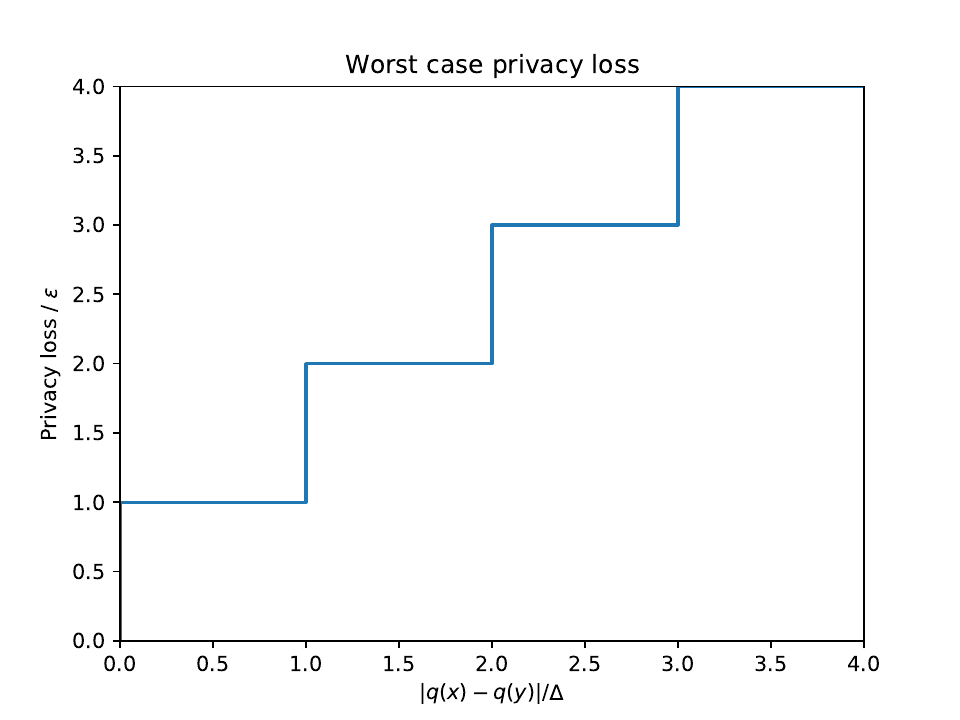}
}
\subfigure[Arete mechanism]{
\includegraphics[width=.45\textwidth]{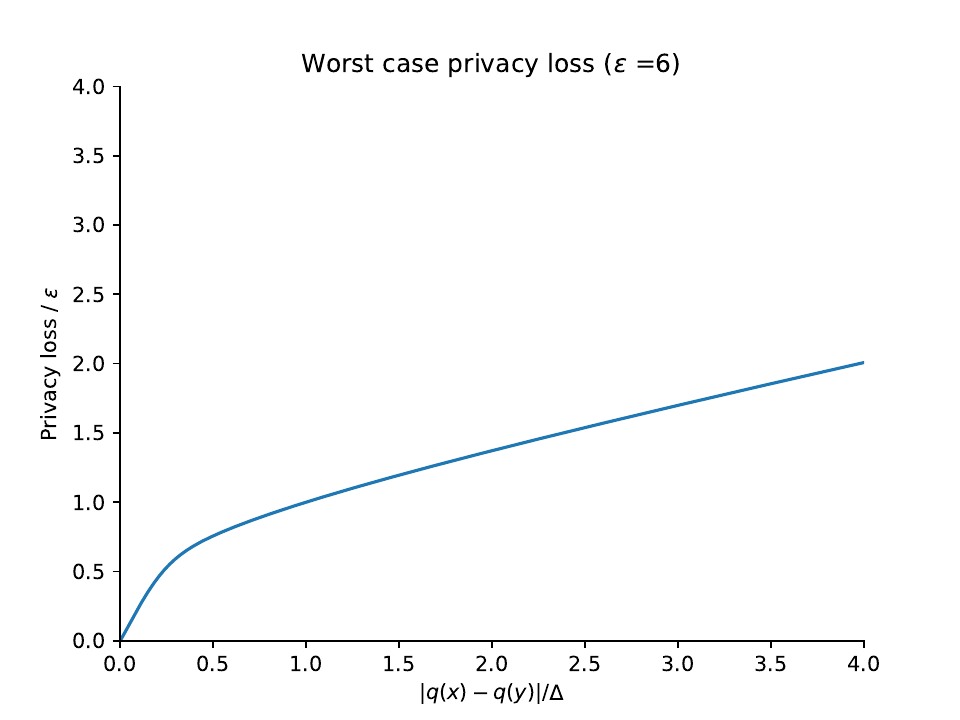}
}
\caption{\footnotesize{Worst case privacy loss of the Staircase and Arete mechanisms (the latter with $\varepsilon = 6$) as a function of the difference between query outputs. As we have no closed form for the density of the Arete distribution, the plot on the right hand side is a numerical approximation computed by discretizing the involved $\Gamma$ and Laplace distributions and computing the convolution of the discretizations. The privacy loss of the Arete distribution increases rapidly from distance $0$ to about $\Delta / 2$ and then increases more slowly from $\Delta / 2$ to $\Delta$ where it reaches $\varepsilon$.
As can be seen the Arete distribution has a smaller privacy loss than the staircase distribution for larger distances $|q(x)-q(y)|$, suggesting that it in fact adds more noise than necessary.}}
\label{fig:privacyloss}
\end{figure}
\cite{GengKOV15} extend the Staircase mechanism from  \citep{GengV16} to queries in multiple dimensions.

\section{The Arete Distribution}
This section introduces the Arete distribution and some of its properties.
We refer to Appendix~\ref{sec:basics} for definitions of probability distributions and differential privacy.
The following lemma is well-known from the probability theory literature:
\begin{lemma}
\label{lem:convolution}
If $X$ and $Y$ are independent, continuous random variables with density functions $f_X$ and $f_Y$, %
then $Z=X+Y$ is a continuous random variable where 
the density is the convolution
\[
f_Z(z)=\int_{-\infty}^\infty f_X(x)f_Y(z-x)dx\textbf=\int_{-\infty}^\infty f_X(z-x)f_Y(x)dx.
\]
\end{lemma}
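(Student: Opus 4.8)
The plan is to reduce the claim to the change-of-variables formula together with the Tonelli--Fubini theorem, exploiting that $f_X$ and $f_Y$ are nonnegative and integrate to $1$. First I would use independence: since $X$ and $Y$ are independent with densities $f_X$ and $f_Y$, the pair $(X,Y)$ has joint density $f_X(x)f_Y(y)$ on $\mathbb{R}^2$. Computing the distribution function of $Z$ and iterating the resulting double integral (legitimate by Tonelli, the integrand being nonnegative) gives, for every $z\in\mathbb{R}$,
\[
F_Z(z)=\Pr[X+Y\le z]=\int_{-\infty}^{\infty}\!\int_{-\infty}^{z-x} f_X(x)f_Y(y)\,dy\,dx=\int_{-\infty}^{\infty} f_X(x)\,F_Y(z-x)\,dx .
\]
This already shows that $Z$ has an absolutely continuous (hence continuous) distribution function.

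The next step is to identify the density. The direct route is to differentiate $F_Z$ under the integral sign, using $\frac{d}{dz}F_Y(z-x)=f_Y(z-x)$, to obtain $f_Z(z)=\int_{-\infty}^{\infty} f_X(x)f_Y(z-x)\,dx$. To sidestep any worry about pointwise differentiability of $F_Y$, a cleaner equivalent argument is to verify directly that $g(z):=\int_{-\infty}^{\infty} f_X(x)f_Y(z-x)\,dx$ serves as a density for $Z$: for every bounded measurable $h$,
\[
\E[h(Z)]=\int_{-\infty}^{\infty}\!\int_{-\infty}^{\infty} h(x+y)f_X(x)f_Y(y)\,dy\,dx=\int_{-\infty}^{\infty} f_X(x)\!\int_{-\infty}^{\infty} h(z)f_Y(z-x)\,dz\,dx=\int_{-\infty}^{\infty} h(z)\,g(z)\,dz ,
\]
where the inner step is the substitution $z=x+y$ (for fixed $x$) and the exchange of the order of integration is justified by Fubini's theorem, whose hypothesis holds because $\sup|h|<\infty$ and $\int f_X=\int f_Y=1$ force the double integral of $|h(x+y)|f_X(x)f_Y(y)$ to be finite. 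Taking $h=\mathbf{1}_A$ for Borel sets $A$ then identifies $g$ as the density of $Z$.

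Finally, the second expression for the convolution, $\int_{-\infty}^{\infty} f_X(z-x)f_Y(x)\,dx$, follows from the first by the change of variables $x\mapsto z-x$. I do not expect a real obstacle here: the only points needing care are the interchanges of the order of integration and of differentiation and integration, and these are routine --- Tonelli for the nonnegative integrands in the distribution-function computation, and Fubini (after inserting a bounded test function) for the identification of the density. Since the statement is classical, one could alternatively simply cite a standard probability reference.
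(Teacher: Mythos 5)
Your proof is correct and complete. The paper itself offers no proof of this lemma — it is introduced with the phrase ``The following lemma is well-known from the probability theory literature'' and stated without argument — so there is no proof to compare against; your Tonelli/Fubini test-function argument is a standard, rigorous derivation, and you rightly observe at the end that simply citing a reference would also be acceptable, which is exactly what the paper does. One small caution: the line claiming the distribution-function identity $F_Z(z)=\int f_X(x)F_Y(z-x)\,dx$ ``already shows'' absolute continuity is not itself a proof of that fact, but you immediately replace it with the clean $\E[h(Z)]$ computation, which does establish the density, so the argument as a whole is sound.
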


The following distribution will be useful in defining the Arete distribution:
\begin{definition}[The $\Gamma-\Gamma$ Distribution]
\label{def:gammagammadistr}
Let $X_1,X_2\sim \Gamma(\alpha,\theta)$ be independent and consider their difference $X~:=~X_1~-~X_2$. We say that $X$ has the $\Gamma-\Gamma(\alpha,\theta)$ distribution and the density of $X$ is
\[
f_{\Gamma-\Gamma(\alpha,\theta)}(t)=\int_{0}^\infty f_{\Gamma(\alpha,\theta)}(t+x)f_{\Gamma(\alpha,\theta)}(x)dx=\begin{cases}
\int_{0}^\infty f_{\Gamma}(t+x)f_{\Gamma}(x)dx,\qquad t\ge 0\\
\int_{\vert t\vert}^\infty f_{\Gamma}(t+x)f_{\Gamma}(x)dx,\qquad t<0.
\end{cases}
\]
where the integrals are reduced to the intervals where $f_{\Gamma}(t+x)f_{\Gamma}(x)$ is non-zero.
\end{definition}
\begin{lemma}
\label{lem:infdivGammagamma}
The $\Gamma-\Gamma$ distribution is infinitely divisible: For $2n$ independent random variables $X_i, Y_i\sim \Gamma(\alpha/n,\theta)$, we have $X=\sum_{i=1}^n(X_i-Y_i)\sim \Gamma-\Gamma\left(\alpha, \theta\right)$.
\end{lemma}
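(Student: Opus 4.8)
The plan is to reduce the claim to the classical additivity of the Gamma distribution in its shape parameter: if $U\sim\Gamma(a,\theta)$ and $V\sim\Gamma(b,\theta)$ are independent and share the same scale $\theta$, then $U+V\sim\Gamma(a+b,\theta)$. I would establish this in one line via characteristic functions, using $\varphi_{\Gamma(a,\theta)}(s)=(1-i\theta s)^{-a}$, so that the characteristic function of $U+V$ is $(1-i\theta s)^{-a}(1-i\theta s)^{-b}=(1-i\theta s)^{-(a+b)}$, which uniquely identifies the $\Gamma(a+b,\theta)$ law; alternatively one convolves the densities via Lemma~\ref{lem:convolution} and recognizes the Beta integral $\int_0^1 u^{a-1}(1-u)^{b-1}\,du$. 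This is the only genuinely analytic ingredient, and it is entirely standard, so there is no real obstacle here — the rest is bookkeeping.

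With that identity in hand, I would argue as follows. Let $X_1,\dots,X_n,Y_1,\dots,Y_n$ be the $2n$ independent $\Gamma(\alpha/n,\theta)$ variables in the statement. Applying the additivity identity inductively to the $X_i$ gives $S_X:=\sum_{i=1}^n X_i\sim\Gamma(\alpha,\theta)$, and likewise $S_Y:=\sum_{i=1}^n Y_i\sim\Gamma(\alpha,\theta)$. Since $S_X$ is a function of $(X_1,\dots,X_n)$ and $S_Y$ a function of the disjoint block $(Y_1,\dots,Y_n)$, the pair $(S_X,S_Y)$ consists of two independent $\Gamma(\alpha,\theta)$ variables. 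Rearranging the sum, $X=\sum_{i=1}^n(X_i-Y_i)=S_X-S_Y$ is therefore exactly a difference of two independent $\Gamma(\alpha,\theta)$ variables, which by Definition~\ref{def:gammagammadistr} means $X\sim\Gamma-\Gamma(\alpha,\theta)$. (If one prefers to exhibit the density rather than invoke the definition, it follows from Lemma~\ref{lem:convolution} applied to $S_X$ and $-S_Y$ and matches the displayed formula in Definition~\ref{def:gammagammadistr}.)

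Finally, to read off infinite divisibility: the $n$ summands $W_i:=X_i-Y_i$ are i.i.d., and each $W_i$ is a difference of two independent $\Gamma(\alpha/n,\theta)$ variables, i.e.\ $W_i\sim\Gamma-\Gamma(\alpha/n,\theta)$, again directly by Definition~\ref{def:gammagammadistr}. Thus for every $n\ge 1$ we have written a $\Gamma-\Gamma(\alpha,\theta)$ variable as a sum of $n$ i.i.d.\ copies of a $\Gamma-\Gamma(\alpha/n,\theta)$ variable, which is precisely infinite divisibility. I would close by noting this is consistent with the general facts that the Gamma law is infinitely divisible and that sums and differences of independent infinitely divisible variables stay infinitely divisible, but the explicit $n$-term decomposition above is what the lemma asserts and is exactly what is needed downstream, since it lets each of $n$ users contribute one independent $\Gamma$-type share whose total has the desired $\Gamma-\Gamma$ law.
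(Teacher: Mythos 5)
Your proof is correct and takes the same route the paper does: the paper simply invokes infinite divisibility of the $\Gamma$ distribution, and you unpack that into the additivity $\Gamma(a,\theta)+\Gamma(b,\theta)\sim\Gamma(a+b,\theta)$, sum within each block to get $S_X,S_Y\sim\Gamma(\alpha,\theta)$ independent, and note the $W_i=X_i-Y_i$ are i.i.d.\ $\Gamma-\Gamma(\alpha/n,\theta)$. This is exactly the argument the paper leaves implicit, just written out in full.
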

\begin{proof}
The result follows immediately from infinite divisibility of the $\Gamma$-distribution.
\end{proof}

\begin{definition}[The Arete distribution]
\label{def:aratedistr}
Let $X\sim \Gamma-\Gamma(\alpha,\theta)$ and $Y\sim Laplace(\lambda)$ be independent. Define $Z:=X+Y$, then $Z\sim Arete(\alpha,\theta,\lambda)$ for $\alpha,\theta,\lambda>0$. The density of $Z$ is 
\[
f_{A(\alpha,\theta,\lambda)}(t)=\int_{-\infty}^\infty f_{\Gamma-\Gamma(\alpha,\theta)}(t-x)f_{L(\lambda)}(x)dx=\int_{-\infty}^\infty f_{L(\lambda)}(t-x)f_{\Gamma-\Gamma(\alpha,\theta)}(x)dx,\quad t\in\mathbb{R}.
\]
\end{definition}
\begin{lemma}
\label{lem:infdivArete}
The Arete distribution is infinitely divisible: For $4n$ independent random variables $X_{1i}, X_{2i}\sim \Gamma(\alpha/n,\theta)$ and $Y_{1i}, Y_{2i}\sim \Gamma(1/n,\lambda)$, we have $X=\sum_{i=1}^n(X_{1i}-X_{2i}+(Y_{1i}-Y_{2i}))\sim Arete\left(\alpha, \theta,\lambda\right)$.
\end{lemma}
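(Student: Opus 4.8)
The plan is to reduce the claim entirely to the infinite divisibility of the Gamma distribution, exactly as was done for the $\Gamma-\Gamma$ distribution in Lemma~\ref{lem:infdivGammagamma}. The two elementary facts I would use in addition are that $\Gamma(1,\lambda)=Exp(\lambda)$, and that the difference of two independent $Exp(\lambda)$ variables is $Laplace(\lambda)$ (the identity noted just above Definition~\ref{def:aratedistr}).

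First I would regroup the sum. Since the $4n$ variables $X_{1i},X_{2i},Y_{1i},Y_{2i}$ are mutually independent, the four partial sums
\[
A:=\sum_{i=1}^n X_{1i},\quad B:=\sum_{i=1}^n X_{2i},\quad C:=\sum_{i=1}^n Y_{1i},\quad D:=\sum_{i=1}^n Y_{2i}
\]
are mutually independent, and $X=(A-B)+(C-D)$. By infinite divisibility of the Gamma distribution, $A,B\sim\Gamma(\alpha,\theta)$, and $C,D\sim\Gamma(1,\lambda)=Exp(\lambda)$. Hence, by Definition~\ref{def:gammagammadistr}, the independent difference satisfies $A-B\sim\Gamma-\Gamma(\alpha,\theta)$, while $C-D$ is a difference of two independent $Exp(\lambda)$ variables, so $C-D\sim Laplace(\lambda)$. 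Moreover $A-B$ and $C-D$ are independent, being measurable functions of the disjoint independent families $\{X_{1i},X_{2i}\}_i$ and $\{Y_{1i},Y_{2i}\}_i$. Finally, by Definition~\ref{def:aratedistr}, the sum of an independent $\Gamma-\Gamma(\alpha,\theta)$ variable and a $Laplace(\lambda)$ variable has the $Arete(\alpha,\theta,\lambda)$ distribution, so $X\sim Arete(\alpha,\theta,\lambda)$; since $n$ was arbitrary, the Arete distribution is infinitely divisible.

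There is essentially no obstacle here: the statement has no analytic content beyond the convolution structure already recorded in Definitions~\ref{def:gammagammadistr} and~\ref{def:aratedistr}. The only point requiring a word of care — and it is only bookkeeping — is checking the independence of $A-B$ and $C-D$ before invoking the density formula of Definition~\ref{def:aratedistr}, which is immediate from the disjointness of the underlying independent families.
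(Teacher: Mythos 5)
Your proof is correct and takes essentially the same approach as the paper: the paper's one-line proof cites infinite divisibility of the Laplace distribution together with Lemma~\ref{lem:infdivGammagamma}, and your argument simply unrolls the Laplace step into the underlying Gamma decomposition (the very decomposition the paper records in Section~\ref{sec:preliminaries}), so the mathematical content is identical.
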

\begin{proof}
The result follows immediately from infinite divisibility of the Laplace distribution and Lemma \ref{lem:infdivGammagamma}.
\end{proof}

\begin{note}
\label{note:densityofGammaGamma}
We remark that we are only interested in $0<\alpha<1$. Furthermore, we do not explicitly state the density of the Arete distribution, as there is no simple closed form for the density of the $\Gamma-\Gamma$ distribution. (It can, however, be expressed in terms of Bessel functions -- see \citep{mathai1993noncentral}.) A similar intuitive way of defining our distribution would be to use a symmetric version of the $\Gamma$-distribution (two halved $\Gamma$-distributions put back-to-back at zero), instead of the $\Gamma-\Gamma$-distribution. An important property of our distribution is infinite divisibility such that we can draw independent noise shares that sum to a random variable following the Arete distribution. As opposed to our $\Gamma-\Gamma$ distribution, it is not clear whether a symmetric $\Gamma$-distribution is infinitely divisible.
\end{note}

\subsection{Symmetric Density Functions}
We observe some simple properties of the Arete distribution.

\begin{restatable}{lemma}{integralsymmetricfunctions}
\label{lem:integralsymmetricfunctions}
For $f,g:\mathbb{R}\rightarrow\mathbb{R}$, that are symmetric around 0, i.e., $f(x)=f(-x)$ and $g(x)~=~g(-x)$, we have for any $t\in\mathbb{R}$
\[
\int_{-\infty}^{\infty}f(x)g(t-x)dx=\int_{-\infty}^{\infty}f(x)g(\vert t\vert-x)dx.
\]
In particular, the convolution $f\ast g$ is symmetric around 0.
\end{restatable}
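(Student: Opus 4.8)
The plan is to split on the sign of $t$ and then use a single change of variables together with the symmetry hypotheses. When $t \ge 0$ we have $|t| = t$, so the two sides coincide trivially and there is nothing to do. Hence I would assume $t < 0$, so that $|t| = -t$, and the claim reduces to showing $\int_{-\infty}^{\infty} f(x)\,g(t-x)\,dx = \int_{-\infty}^{\infty} f(x)\,g(-t-x)\,dx$.

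To prove this reduced identity, I would substitute $x = -u$ in the left-hand integral. Since the domain of integration is all of $\mathbb{R}$, the substitution merely reverses the orientation while $dx = -\,du$ supplies a compensating sign, so the integral becomes $\int_{-\infty}^{\infty} f(-u)\,g(t+u)\,du$. Applying $f(-u) = f(u)$ gives $\int_{-\infty}^{\infty} f(u)\,g(t+u)\,du$, and then $g(t+u) = g\bigl(-(t+u)\bigr) = g(-t-u)$ by symmetry of $g$ turns this into $\int_{-\infty}^{\infty} f(u)\,g(-t-u)\,du$, which is exactly the right-hand side after renaming $u$ as $x$. For the ``in particular'' statement, note that $(f \ast g)(t) = \int_{-\infty}^{\infty} f(x)\,g(t-x)\,dx$, and the identity just established reads $(f \ast g)(t) = (f \ast g)(|t|)$; applying it once more with $-t$ in place of $t$ yields $(f \ast g)(-t) = (f \ast g)(|-t|) = (f \ast g)(|t|) = (f \ast g)(t)$, so $f \ast g$ is symmetric around $0$.

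There is no real obstacle here: the only point requiring a word of care is that the integrals be well defined (absolutely convergent) so that the change of variables is legitimate, which in all our uses is automatic since $f$ and $g$ are probability density functions. I would state that hypothesis explicitly and otherwise treat the lemma as the routine bookkeeping fact it is.
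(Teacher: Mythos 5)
Your proof is correct and uses essentially the same argument as the paper: split on the sign of $t$, apply the substitution $x \mapsto -x$, and invoke the symmetry of $f$ and $g$ (you merely apply the three steps in a slightly different order). The remark that well-definedness of the integrals justifies the change of variables is a fine bit of extra care but not strictly needed given the paper's use cases.
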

\begin{proof}
The statement is immediate for $t\ge 0$, so suppose $t<0$. Then for any $a,b\in\mathbb{R}$
\[
\int_{-a}^{-b}f(x)g(t-x)dx=\int_{-a}^{-b}f(x)g(\vert t\vert+x)dx=\int_{b}^{a}f(-x)g(\vert t\vert-x)dx=\int_{b}^{a}f(x)g(\vert t\vert-x)dx
\]
where first step is by symmetry of $g$, the second step follows from integration by substitution and the last step is by symmetry of $f$.
In particular, we may let $a$ and $b$ be $\pm\infty$.
\end{proof}

\begin{restatable}{lemma}{GammaGammaSymmetric}
\label{lem:GammaGammaSymmetric}
$f_{\Gamma-\Gamma}$ is symmetric around 0.
\end{restatable}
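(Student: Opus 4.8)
The statement to prove is Lemma~\ref{lem:GammaGammaSymmetric}: that $f_{\Gamma-\Gamma}$ is symmetric around $0$.

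\medskip

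\textbf{Proof proposal.} The plan is to reduce this to Lemma~\ref{lem:integralsymmetricfunctions}, which already establishes that the convolution of two functions symmetric around $0$ is itself symmetric around $0$. The obstacle is that $f_{\Gamma-\Gamma}$ is \emph{not} literally a convolution of symmetric functions: by Definition~\ref{def:gammagammadistr} it is the cross-correlation $\int_0^\infty f_\Gamma(t+x)f_\Gamma(x)\,dx$ of the $\Gamma(\alpha,\theta)$ density with itself, and $f_\Gamma$ is supported only on $[0,\infty)$, hence not symmetric. So the first step is to rewrite this integral as a genuine convolution. Since $X = X_1 - X_2$ with $X_1, X_2 \sim \Gamma(\alpha,\theta)$ independent, and $-X_2$ has density $x \mapsto f_\Gamma(-x)$, Lemma~\ref{lem:convolution} gives $f_{\Gamma-\Gamma}(t) = \int_{-\infty}^\infty f_\Gamma(x)\, f_\Gamma(x - t)\,dx$ (writing the density of $-X_2$ evaluated at $t - x$ as $f_\Gamma(x-t)$). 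One checks this matches Definition~\ref{def:gammagammadistr}: the integrand is nonzero only when $x \ge 0$ and $x - t \ge 0$, i.e. $x \ge \max(0,t)$, recovering the case split on the sign of $t$ in the definition.

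\medskip

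The second step is the symmetry argument itself. From the representation $X = X_1 - X_2$ with $X_1,X_2$ i.i.d., we have $-X = X_2 - X_1 \stackrel{d}{=} X_1 - X_2 = X$ by exchangeability of the i.i.d. pair, so the distribution of $X$ is symmetric and hence $f_{\Gamma-\Gamma}(t) = f_{\Gamma-\Gamma}(-t)$ for all $t$. Alternatively, and more in the spirit of the surrounding lemmas, one can argue directly at the level of integrals: substituting $x \mapsto x + t$ in $f_{\Gamma-\Gamma}(t) = \int_{-\infty}^\infty f_\Gamma(x) f_\Gamma(x-t)\,dx$ yields $\int_{-\infty}^\infty f_\Gamma(x+t) f_\Gamma(x)\,dx = f_{\Gamma-\Gamma}(-t)$ after matching against the definition with $-t$ in place of $t$. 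Either route is short; I would present the distributional one as the clean argument and note the integral substitution as the explicit verification.

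\medskip

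I do not expect any real difficulty here — the ``hard part,'' such as it is, is purely bookkeeping: being careful that the cross-correlation in Definition~\ref{def:gammagammadistr} is correctly identified with a convolution of $f_\Gamma$ with the reflected density $f_\Gamma(-\,\cdot\,)$, and that the reduced integration limits (the intervals where the integrand is nonzero) are consistent under the change of variables. Since $f_\Gamma$ is continuous and integrable, all the integral manipulations (substitution, Fubini if needed) are justified without fuss. The result is then immediate, and it feeds into Lemma~\ref{lem:mainAreteProperties}: combined with symmetry of the Laplace density and Lemma~\ref{lem:integralsymmetricfunctions}, it gives that $f_A = f_{\Gamma-\Gamma} \ast f_{L(\lambda)}$ is symmetric around $0$.
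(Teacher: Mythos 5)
Your proposal is correct. Your primary argument — that $X = X_1 - X_2$ with $X_1, X_2$ i.i.d.\ implies $-X = X_2 - X_1 \stackrel{d}{=} X_1 - X_2 = X$ by exchangeability, whence the density is symmetric — is a genuinely different and arguably cleaner route than the paper's. The paper works directly from the case-split formula in Definition~\ref{def:gammagammadistr}: for $t<0$ it starts from $f_{\Gamma-\Gamma}(t)=\int_{|t|}^\infty f_\Gamma(t+x)f_\Gamma(x)\,dx$, substitutes $x\mapsto x - |t|$, and reads off $f_{\Gamma-\Gamma}(|t|)$. Your distributional argument buys immediacy and avoids any bookkeeping with integration limits; the paper's integral substitution buys explicit consistency with the exact form of Definition~\ref{def:gammagammadistr} (which is used verbatim in later bounds), and avoids even implicitly invoking uniqueness of densities. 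Your secondary ``explicit verification'' via the substitution $x\mapsto x+t$ in the convolution form is essentially the paper's proof, just preceded by an extra step of rewriting the cross-correlation as a convolution against the reflected density — a step the paper skips by working with the reduced-limit formula directly. Either version you give would be a complete proof; nothing is missing.
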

\begin{proof}
We prove that $f_{\Gamma-\Gamma}(t)=f_{\Gamma-\Gamma}(\vert t\vert)$ for all $t\in\mathbb{R}$. Clearly, this is the case if $t\ge 0$, so suppose $t<0$. 
By Definition \ref{def:gammagammadistr}
\[
f_{\Gamma-\Gamma}(t)=\int_{\vert t\vert}^\infty f_{\Gamma}(t+x)f_{\Gamma}(x)dx=\int_{\vert t\vert}^\infty f_{\Gamma}(x-\vert t\vert)f_{\Gamma}(x)dx=\int_{0}^\infty f_{\Gamma}(x)f_{\Gamma}(\vert t\vert+x)dx=f_{\Gamma-\Gamma}(\vert t\vert).
\]
where the penultimate step follows from integration by substitution with $x-\vert t\vert$.
\end{proof}

\begin{corollary}
\label{cor:symmetryArete}
$f_A$ is symmetric around 0. %
\end{corollary}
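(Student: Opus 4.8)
The plan is to reduce the statement to the two symmetry facts already established, using the convolution representation of $f_A$. By Definition~\ref{def:aratedistr}, if $X\sim\Gamma-\Gamma(\alpha,\theta)$ and $Y\sim Laplace(\lambda)$ are independent, then $Z=X+Y\sim Arete(\alpha,\theta,\lambda)$, and by Lemma~\ref{lem:convolution} its density is the convolution $f_A=f_{\Gamma-\Gamma(\alpha,\theta)}\ast f_{L(\lambda)}$. So it suffices to show that a convolution of two densities, each symmetric around $0$, is symmetric around $0$.

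First I would record that $f_{\Gamma-\Gamma}$ is symmetric around $0$; this is exactly Lemma~\ref{lem:GammaGammaSymmetric}. Next I would note that the Laplace density $f_{L(\lambda)}(x)\propto e^{-|x|/\lambda}$ is symmetric around $0$, which is immediate from $|-x|=|x|$ (referring to the definition of the Laplace distribution in Appendix~\ref{sec:preliminaries}). With both factors symmetric around $0$, the final sentence of Lemma~\ref{lem:integralsymmetricfunctions} states precisely that the convolution $f_{\Gamma-\Gamma}\ast f_{L(\lambda)}$ is symmetric around $0$. Hence $f_A$ is symmetric around $0$, as claimed.

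There is essentially no obstacle here: the corollary is a direct composition of Lemmas~\ref{lem:GammaGammaSymmetric} and~\ref{lem:integralsymmetricfunctions} together with the elementary symmetry of the Laplace density. The only point requiring minor care is to invoke the correct form of the convolution integral from Lemma~\ref{lem:convolution} (either order of the two densities works, since convolution is commutative) and to make sure the symmetry of $f_{L(\lambda)}$ is stated explicitly before applying Lemma~\ref{lem:integralsymmetricfunctions}.
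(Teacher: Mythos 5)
Your proposal is correct and follows exactly the paper's own argument: it invokes Lemma~\ref{lem:GammaGammaSymmetric} for symmetry of $f_{\Gamma-\Gamma}$, the elementary symmetry of the Laplace density, and the final sentence of Lemma~\ref{lem:integralsymmetricfunctions} to conclude that the convolution $f_A$ is symmetric. The paper's proof is just a one-line version of the same reasoning.
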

\begin{proof}
The result follows directly from symmetry of the density of the Laplace distribution, $f_L$, and Lemmas \ref{lem:integralsymmetricfunctions} and \ref{lem:GammaGammaSymmetric}.
\end{proof}

\subsection{Properties of the Arete Distribution}
\label{sec:mainAreteProperties}
We restate the lemma here for convenience:
\mainAreteProperties*
\begin{proof}
Symmetry of the density function $f_A$ is proven in Corollary \ref{cor:symmetryArete} and infinite divisibility in Lemma \ref{lem:infdivArete}. Since $f_\Gamma$ and $f_L$ are continuous, $f_{\Gamma-\Gamma}$ and $f_A$ are also continuous by Lemma \ref{lem:convolution}. %
We prove that $f_A$ is monotonely decreasing, i.e., for $\vert t\vert\le \vert t'\vert$ we have $f_A(t)\ge f_A(t')$.
First, we argue that $f_{\Gamma-\Gamma}$ is monotonely decreasing. Recall Definition \ref{def:gammagammadistr} and observe
\[
f_{\Gamma-\Gamma}(t)=\int_{0}^\infty f_{\Gamma}(\vert t\vert+x)f_\Gamma(x)dx,\qquad \forall t\in\mathbb{R}
\]
which is immediate for $t\ge 0$ while for $t<0$
\[
f_{\Gamma-\Gamma}(t)=\int_{\vert t\vert}^\infty f_{\Gamma}(-\vert t\vert+x)f_\Gamma(x)dx=\int_{0}^\infty f_{\Gamma}(x')f_\Gamma(x'+\vert t\vert)dx'
\]
where we substituted $x':=x-\vert t\vert$.
So assume $\vert t\vert\le \vert t'\vert$. Then, since $f_\Gamma$ is monotonely decreasing
\begin{align*}
    f_{\Gamma-\Gamma}(t)=\int_{0}^\infty f_{\Gamma}(\vert t\vert+x)f_\Gamma(x)dx\ge\int_{0}^\infty f_{\Gamma}(\vert t'\vert+x)f_\Gamma(x)dx=f_{\Gamma-\Gamma}(t').
\end{align*}
We prove that $f_A$ is also monotonely decreasing: 
Assuming that $\vert t\vert\le \vert t'\vert$ we prove that $f_A(t)~\ge~f_A(t')$. Recall Definition \ref{def:aratedistr} and observe
\[
f_A(t)=\int_{-\infty}^\infty f_{\Gamma-\Gamma}(\vert t\vert-x)f_L(x)dx=\int_{-\infty}^\infty f_{\Gamma-\Gamma}(x)f_L(\vert t\vert-x)dx
\]
which is obvious for $t\ge0$ and since for $t<0$:
\begin{align*}
    f_A(t)=\int_{-\infty}^\infty f_{\Gamma-\Gamma}(t-x)f_L(x)dx=\int_{-\infty}^\infty f_{\Gamma-\Gamma}(\vert t\vert+x)f_L(x)dx=\int_{-\infty}^\infty f_{\Gamma-\Gamma}(x')f_L(\vert t\vert -x')dx'
\end{align*}
using that $f_{\Gamma-\Gamma}$ and $f_L$ are symmetric and a substitution with $x':=\vert t\vert+x$. A similar argument can be made if the convolution is flipped. We conclude that
\begin{align*}
    f_A(t)=\int_{-\infty}^\infty f_{\Gamma-\Gamma}(\vert t\vert-x)f_L(x)dx\ge \int_{-\infty}^\infty f_{\Gamma-\Gamma}(\vert t'\vert-x)f_L(x)dx=f_A(t')
\end{align*}
using that $f_{\Gamma-\Gamma}$ is monotonely decreasing.
\end{proof}

We finally assume Lemma \ref{lem:main} and prove Corollary \ref{cor:mainDP}, restated here for convenience. The proof of Lemma \ref{lem:main} is given in Section~\ref{sec:proofofmainlemma}.
\mainDP*
\begin{proof} 
	The expected error bound follows directly from the bound on $\E[|Z|]$ in Lemma~\ref{lem:main}.
	For the claim of differential privacy, let $x,x'\in\mathbb{R}$ with $\vert x-x'\vert\le 1$. We show that for any subset $S\subset \mathbb{R}$ 
\begin{align}
\label{eq:proveDP}
\Pr[\mathcal{M}_{Arete}(x)\in S]\le e^\varepsilon \Pr[\mathcal{M}_{Arete}(x')\in S].
\end{align}
Let noise $Z\sim Arete(\alpha,\theta,\lambda)$ for parameters $\alpha,\theta,\lambda$ as in Lemma \ref{lem:main}. \\Define $S'~:=~S-q(x)~=~\{s~-~q(x):~s\in S\}$, then:
\begin{align*}
    \frac{\Pr[\mathcal{M}_{Arete}(x)\in S]}{\Pr[\mathcal{M}_{Arete}(x')\in S]}=\frac{\int_{S'}f_A(z) dz}{\int_{S'}f_A(z+q(x')-q(x)) dz}\le \frac{\int_{S'}f_A(\vert z\vert) dz}{\int_{S'}f_A(\vert z\vert+\vert q(x')-q(x)\vert) dz}
\end{align*}
where we used symmetry of $f_A$, the triangle inequality, and the fact that $f_A(t)$ is decreasing for $t > 0$.
By assumption $\vert q(x)-q(x')\vert\le \Delta$. Lemma \ref{lem:main} says that $f_A(t)/f_A(t+a)\le e^\varepsilon$ for all $t\in \mathbb{R}$ and $a \leq \Delta$, and so we get
\begin{align*}
   \frac{\int_{S'}f_A(\vert z\vert) dz}{\int_{S'}f_A(\vert z\vert+\vert q(x')-q(x)\vert) dz}\le \frac{\int_{S'}e^\varepsilon f_A(\vert z\vert+\Delta) dz}{\int_{S'}f_A(\vert z\vert+\vert q(x')-q(x)\vert) dz}\le e^\varepsilon.
\end{align*}
The first inequality in (\ref{eq:proveDP}) follows by symmetry. %
\end{proof}

\section{Proof of Main Lemma}
\label{sec:proofofmainlemma}
In the remaining part of this section we prove a number of theoretical lemmas, that will help prove our main result, Lemma \ref{lem:main}.
The bulk of the analysis is the proof of the first bullet point of Lemma~\ref{lem:main}, showing that the given parameters $\alpha, \theta,\lambda>0$ suffice to bound $f_A(t)/f_A(t+a)$ for all $t, a\in\mathbb{R}$, $|a|\leq \Delta$. We break this part of the analysis down in this section. The intuition behind the structure is as follows: We first remark that (to the best of our knowledge) there is no simple expression for the density of the $\Gamma-\Gamma$ distribution (see Note \ref{note:densityofGammaGamma}). Hence, we will show upper and lower bounds for $f_{\Gamma-\Gamma}$ and use these to bound the ratio $f_A(t)/f_A(t+a)$. 
As discussed earlier, we have not optimized for constants, and as our proof includes \emph{several} steps of bounding, our analysis may not be tight, thus leading to the high value of $\varepsilon$ required in Lemma \ref{lem:main}. A tighter analysis is likely to allow for a better setting of parameters $\alpha,\theta,\lambda$ and a smaller $\varepsilon$.
We give the proof of Lemma \ref{lem:main} in Section~\ref{sec:puttingThingsTogether}. %

\subsection{Bounds on Density of \texorpdfstring{$\Gamma-\Gamma$}{Gamma Minus Gamma} Distribution}
We first derive upper and lower bounds on the density function of the $\Gamma-\Gamma$ distribution (see Section~\ref{sec:basics} for definitions).
\begin{lemma}
\label{lem:boundsonGammaGamma}
For any $t\in\mathbb{R}$ and any $\DGam>0$
\[
f_{\Gamma}(\vert t\vert+\DGam)c_{\DGam}\le f_{\Gamma-\Gamma}(t)\le f_{\Gamma}(\vert t\vert)\qquad \text{where}\qquad c_{\DGam}:=\int_{0}^{\DGam}f_{\Gamma}(x)dx  \enspace .
\]
\end{lemma}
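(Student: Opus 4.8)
The plan is to work directly from the integral representation of $f_{\Gamma-\Gamma}$, combined with two elementary facts about the $\Gamma(\alpha,\theta)$ density for shape $0<\alpha<1$ (the only regime of interest, cf.\ Note~\ref{note:densityofGammaGamma}): it is monotonely decreasing on $(0,\infty)$, and it integrates to $1$ over $(0,\infty)$. Recall from Definition~\ref{def:gammagammadistr}, and the folding manipulation already carried out in the proof of Lemma~\ref{lem:mainAreteProperties}, that for every $t\in\mathbb{R}$,
\[
f_{\Gamma-\Gamma}(t)=\int_0^\infty f_\Gamma(|t|+x)\,f_\Gamma(x)\,dx ,
\]
where the lower limit is $0$ precisely because $f_\Gamma$ is supported on $[0,\infty)$. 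Both inequalities then follow by replacing the factor $f_\Gamma(|t|+x)$ by a quantity that is constant in $x$ and pulling it out of the integral.

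For the upper bound I would use $|t|+x\ge |t|$ for all $x\ge 0$, so by monotonicity $f_\Gamma(|t|+x)\le f_\Gamma(|t|)$, whence
\[
f_{\Gamma-\Gamma}(t)\le f_\Gamma(|t|)\int_0^\infty f_\Gamma(x)\,dx=f_\Gamma(|t|).
\]
For the lower bound I would first discard the (nonnegative) part of the integral over $x>\DGam$, and then, on the remaining interval $x\in[0,\DGam]$, use $|t|+x\le |t|+\DGam$ together with monotonicity to get $f_\Gamma(|t|+x)\ge f_\Gamma(|t|+\DGam)$:
\[
f_{\Gamma-\Gamma}(t)\ge\int_0^{\DGam} f_\Gamma(|t|+x)\,f_\Gamma(x)\,dx\ge f_\Gamma(|t|+\DGam)\int_0^{\DGam} f_\Gamma(x)\,dx=f_\Gamma(|t|+\DGam)\,c_{\DGam}.
\]

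There is no genuine obstacle here; this is a routine monotonicity estimate. The only points that warrant care are invoking the correct one-sided integral representation of $f_{\Gamma-\Gamma}$ (so that the integration variable $x$ ranges over $[0,\infty)$, which is what makes the truncation to $[0,\DGam]$ legitimate and the constants come out as stated), and recording that the decreasing-density property of $f_\Gamma$ is what is being used throughout, which holds exactly because $\alpha<1$.
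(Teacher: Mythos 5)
Your proof is correct and takes essentially the same route as the paper's: both bounds follow from the one-sided representation $f_{\Gamma-\Gamma}(t)=\int_0^\infty f_\Gamma(|t|+x)f_\Gamma(x)\,dx$ together with monotonicity of $f_\Gamma$, truncating the domain to $[0,\DGam]$ for the lower bound. The only difference is that you make explicit the implicit restriction $\alpha\le 1$ needed for $f_\Gamma$ to be decreasing, which the paper leaves to Note~\ref{note:densityofGammaGamma}.
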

\begin{proof}
Recall Definition \ref{def:gammagammadistr} and Lemma \ref{lem:GammaGammaSymmetric} and let $t\in\mathbb{R}$. For the upper bound, we have
\begin{align*}
    f_{\Gamma-\Gamma}(t)&=\int_{0}^\infty f_{\Gamma}(\vert t\vert+x)f_\Gamma(x)dx< f_{\Gamma}(\vert t\vert)\int_{0}^\infty f_\Gamma(x)dx=f_{\Gamma}(\vert t\vert).
\end{align*}

\noindent
For the lower bound we have for any $\DGam>0$
\begin{align*}
f_{\Gamma-\Gamma}(t)=
\int_{0}^{\infty}f_{\Gamma}(\vert t\vert+x)f_{\Gamma}(x)dx\ge \int_{0}^{\DGam}f_{\Gamma}(\vert t\vert+x)f_{\Gamma}(x)dx\ge f_{\Gamma}(\vert t\vert+\DGam)\int_{0}^{\DGam}f_{\Gamma}(x)dx \enspace .
\end{align*}
\end{proof}

\subsection{Bounds on Density of Arete Distribution}
\label{sec:boundsondensityArete}
In this section we show that for $\Delta>0$ and setting of parameters $\alpha,\theta,\lambda$ and for large enough $\varepsilon$:
\begin{align}
\label{eq:ratio}
e^{-\varepsilon}~\le~f_A(t)/f_A(t+\Delta)~\le~e^\varepsilon,\qquad \forall t\in\mathbb{R}.
\end{align}
We remark that by monotonicity of the density of the Arete distribution, if we show (\ref{eq:ratio}) it follows that $f_A$ satisfies (\ref{eq:DPratio}):
Take any $a\in\mathbb{R}$ such that $\vert a\vert\le \Delta$ and suppose without loss of generality that $f(t)\ge f(t+a)$ (if this is not the case, substitute $t':=\vert t\vert-a$, such that $f(t')\ge f(t'+a)$). Then $e^{-\varepsilon}\le \frac{f(t)}{f(t+a)}$. We prove that $f(t+a)\ge f(t+\Delta)$ ensuring $\frac{f(t)}{f(t+a)}\le \frac{f(t)}{f(t+\Delta)}\le e^\varepsilon$, which finishes the argument: by assumption $f(t)\ge f(t+a)$ and so $\vert t\vert\le \vert t+a\vert$, further implying that $t\ge -a/2$. Hence, as $\vert a\vert \le \Delta$ we have $\vert t+\Delta\vert\ge \vert t+a\vert$ and so we conclude that $f(t+a)\ge f(t+\Delta)$ as wanted.

Throughout the section we assume that $\vert t\vert\le \vert t+\Delta\vert$ (and so $t\ge -\Delta/2$). For such $t$, $f_A(t)~\ge~f_A(t+\Delta)$ and so the first inequality in (\ref{eq:ratio}) is immediate. Hence, we put our focus toward proving the latter inequality. If $\vert t+\Delta\vert\le \vert t\vert$, the result follows by symmetry of $f_A$ (Corollary \ref{cor:symmetryArete}).

We 
start with the following lower bound on the density $f_A$:
\begin{restatable}{lemma}{lowerboundArete}
\label{lem:lowerboundArete}
Let $\DGam$ and $c_{\DGam}$ be as in Lemma \ref{lem:boundsonGammaGamma} and assume $\lambda~\le~ \Delta/\ln(2)$ for $\Delta>0$. For $-\Delta/2~\le~ t\in\mathbb{R}$
\[
f_{A}(t+\Delta)\ge f_{\Gamma}(\vert t+\Delta\vert+\DGam)c_{\DGam} c_L\qquad \text{where}\qquad c_L:=1/4.
\]
\end{restatable}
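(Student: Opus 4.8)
The plan is to reduce the claimed lower bound on $f_A(t+\Delta)$ to the lower bound on $f_{\Gamma-\Gamma}$ from Lemma~\ref{lem:boundsonGammaGamma} combined with a constant-probability bound on the Laplace contribution. Starting from Definition~\ref{def:aratedistr}, write
\[
f_A(t+\Delta)=\int_{-\infty}^{\infty} f_{\Gamma-\Gamma}(t+\Delta-x)\, f_L(x)\, dx .
\]
Rather than integrating over all of $\mathbb{R}$, I would restrict the integral to a well-chosen interval of $x$ near $0$ on which two things hold simultaneously: first, $f_L(x)$ is not much smaller than its peak value $\tfrac{1}{2\lambda}$ (so the Laplace density is ``close to'' its mode), and second, the shift $t+\Delta-x$ stays in a region where we can apply the monotonicity of $f_{\Gamma-\Gamma}$ to bound $f_{\Gamma-\Gamma}(t+\Delta-x)$ from below by $f_{\Gamma-\Gamma}(|t+\Delta|+\text{something})$.

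Concretely, since $t\ge-\Delta/2$ we have $t+\Delta\ge\Delta/2>0$; taking $x\in[0,\Delta]$ keeps $t+\Delta-x\ge t\ge-\Delta/2$ and also $|t+\Delta-x|\le |t+\Delta|$ is not guaranteed, so more carefully I would take $x\in[-\Delta,0]$, which gives $t+\Delta-x\ge t+\Delta>0$ and $t+\Delta-x\le t+2\Delta$; hmm, this needs the right window. The cleanest choice is $x\in[0,\lambda\ln 2]\subseteq[0,\Delta]$ (using $\lambda\le\Delta/\ln 2$): on this interval $f_L(x)=\tfrac{1}{2\lambda}e^{-x/\lambda}\ge\tfrac{1}{2\lambda}\cdot\tfrac12=\tfrac{1}{4\lambda}$, and $0\le t+\Delta-x$ with $t+\Delta-x\le t+\Delta=|t+\Delta|$, so by monotonicity of $f_{\Gamma-\Gamma}$ (Lemma~\ref{lem:mainAreteProperties}, or rather the $\Gamma-\Gamma$ part proved there) we get $f_{\Gamma-\Gamma}(t+\Delta-x)\ge f_{\Gamma-\Gamma}(t+\Delta)$ — wait, monotonicity says the density is \emph{larger} closer to $0$, so $f_{\Gamma-\Gamma}(t+\Delta-x)\ge f_{\Gamma-\Gamma}(t+\Delta)$ holds precisely when $|t+\Delta-x|\le|t+\Delta|$, i.e. when $x\in[0,2(t+\Delta)]$, which is implied by $x\le\Delta\le 2(t+\Delta)$. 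Thus on $x\in[0,\lambda\ln 2]$ both bounds hold. Then
\[
f_A(t+\Delta)\ \ge\ \int_0^{\lambda\ln 2} f_{\Gamma-\Gamma}(t+\Delta)\cdot\frac{1}{4\lambda}\,dx\ =\ f_{\Gamma-\Gamma}(t+\Delta)\cdot\frac{\ln 2}{4}\ \ge\ \frac14\, f_{\Gamma-\Gamma}(t+\Delta),
\]
and then applying the lower bound $f_{\Gamma-\Gamma}(t+\Delta)\ge f_\Gamma(|t+\Delta|+\DGam)c_{\DGam}$ from Lemma~\ref{lem:boundsonGammaGamma} yields exactly $f_A(t+\Delta)\ge f_\Gamma(|t+\Delta|+\DGam)\,c_{\DGam}\,c_L$ with $c_L=1/4$ (absorbing $\ln 2/4\ge 1/4$).

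The main thing to get right — and the one place the argument could slip — is the bookkeeping on which interval of $x$ simultaneously keeps the Laplace density above a constant fraction of its peak \emph{and} keeps $|t+\Delta-x|\le|t+\Delta|$ so monotonicity of $f_{\Gamma-\Gamma}$ applies in the favorable direction; the hypotheses $t\ge-\Delta/2$ and $\lambda\le\Delta/\ln 2$ are exactly what make the window $[0,\lambda\ln 2]$ work, so I would double-check the chain of inequalities $0\le\lambda\ln2\le\Delta\le 2(t+\Delta)$ and the resulting $f_L(x)\ge\tfrac{1}{4\lambda}$ carefully, and confirm that $\tfrac{\ln 2}{4}\ge\tfrac14$ is not what we want — in fact $\ln 2\approx0.693<1$ so $\tfrac{\ln2}{4}<\tfrac14$; to land on exactly $c_L=1/4$ I would instead use the interval $x\in[0,\lambda]$, on which $f_L(x)\ge\tfrac{1}{2\lambda e}$, giving factor $\tfrac{1}{2e}<\tfrac14$ — still not $1/4$. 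The honest fix is to note any such constant works and the lemma just fixes the convenient value $c_L=1/4$; I would pick the interval $x\in[0,\lambda\ln 2]$ giving constant $\tfrac{\ln 2}{4}$ and remark $\tfrac{\ln 2}{4}>1/4$ is \emph{false}, so actually use $x\in[-\lambda\ln2,\lambda\ln2]\cap[\text{valid range}]$ or simply state $c_L$ can be taken to be any constant below $\tfrac{\ln 2}{2}$ and $1/4$ suffices. This constant-chasing is the only real subtlety; the structural argument (restrict the convolution integral, lower-bound each factor, invoke Lemma~\ref{lem:boundsonGammaGamma}) is routine.
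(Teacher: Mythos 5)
Your structural plan is right and matches the paper's: restrict the convolution integral for $f_A(t+\Delta)$ to an interval of $x$ where $t+\Delta-x$ stays at most as far from zero as $t+\Delta$ itself (so monotonicity applies), lower-bound the $\Gamma-\Gamma$ factor via Lemma~\ref{lem:boundsonGammaGamma}, and collect a constant from the Laplace factor. The chain $0\le x\le\Delta\le 2(t+\Delta)$ (using $t\ge-\Delta/2$) and $\lambda\ln 2\le\Delta$ is exactly the right bookkeeping.

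Where you lose the stated constant is the last step: you lower-bound $f_L(x)$ \emph{pointwise} on a sub-interval $[0,\lambda\ln2]$, getting $\int_0^{\lambda\ln 2}f_L\ge\tfrac{\ln2}{4}\approx0.173$, and you correctly notice this is below $1/4$, but the fix you gesture at ("any constant works") is not quite satisfactory because the lemma is stated with $c_L=1/4$ and that value is used verbatim in Lemma~\ref{lem:DPratioGeneralt} and Lemma~\ref{lem:parametersetting}. The paper avoids the pointwise loss by bounding the integral over the \emph{full} window $[0,\Delta]$ directly: $\int_0^\Delta f_L(x)\,dx=\tfrac12\int_0^\Delta f_{\mathrm{Exp}(\lambda)}(x)\,dx\ge\tfrac12\cdot\tfrac12=\tfrac14$, since $\lambda\le\Delta/\ln2$ means the Exponential median $\lambda\ln2$ lies in $[0,\Delta]$. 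Equivalently, $\int_0^\Delta f_L = \tfrac12\bigl(1-e^{-\Delta/\lambda}\bigr)\ge\tfrac12\bigl(1-\tfrac12\bigr)=\tfrac14$. In short: compute the CDF mass, do not lower-bound the density pointwise, and $c_L=1/4$ falls out cleanly.
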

\begin{proof} By Definition \ref{def:aratedistr} and Lemma \ref{lem:boundsonGammaGamma} we have 
\begin{align*}
    f_{A}(t+\Delta)&=\int_{-\infty}^\infty f_{\Gamma-\Gamma}(t+\Delta-x)f_L(x)dx\ge \int_{-\infty}^\infty f_{\Gamma}(\vert t+\Delta-x\vert+\DGam)c_{\DGam}f_L(x)dx\\
    &\ge c_{\DGam}\int_{0}^{2(t+\Delta)} f_{\Gamma}(\vert t+\Delta-x\vert+\DGam)f_L(x)dx\ge c_{\DGam}f_{\Gamma}(t+\Delta+\DGam)\int_{0}^{2(t+\Delta)} f_L(x)dx,
\end{align*}
where we used that $f_\Gamma(\vert t+\Delta-x\vert+\DGam)\ge f_\Gamma(\vert t+\Delta\vert+\DGam)$ for $x\in(0,2(t+\Delta))$ and that by assumption $t+\Delta\ge \Delta/2$ allowing us to remove the absolute value signs.
Again using that $t+\Delta\ge \Delta/2$
\[
\int_{0}^{2(t+\Delta)} f_L(x)dx\ge \int_{0}^{\Delta} f_{L(\lambda)}(x)dx=\frac{1}{2}\int_{0}^{\Delta} f_{Exp(\lambda)}(x)dx\ge \frac{1}{4},\qquad \lambda<\Delta/\ln(2)
\]
where we noticed that on the positive reals, the density function of the Laplace distribution is $1/2$ times the density function of the Exponential distribution, and used that the median of the latter is $\ln(2)\lambda$, so that the last inequality is true as long as $\ln(2)\lambda\le \Delta$. Hence,
\[
f_{A}(t+\Delta)\ge c_{\DGam}f_{\Gamma}(t+\Delta+\DGam)1/4.
\]
Defining $c_L:=1/4$ finishes the proof.
\end{proof}

The following three lemmas are technical and give upper bounds for the ratio $f_A(t)~/~f_A(t~+~\Delta)$; first for large and small $\vert t\vert$ separately in Lemmas \ref{lem:DPratioLarget} and \ref{lem:DPratioSmallt} (i.e., for $t$ close to and far from 0, where ''close to/far from'' is quantified by a parameter $\Du$, which we will set in Lemma \ref{lem:parametersetting}). We combine these results to an upper bound for general $t$ in Lemma \ref{lem:DPratioGeneralt} (still assuming $t$ is s.t. $f_A(t)\ge f_A(t+\Delta)$) and finally choose parameters to ensure an upper bound of $e^\varepsilon$ in Lemma \ref{lem:parametersetting}, thus satisfying the second inequality of (\ref{eq:ratio}). Throughout the next three lemmas we make use the variables $\DGam,c_{\DGam}$ (from Lemma \ref{lem:boundsonGammaGamma}) and $c_L$ (from Lemma \ref{lem:lowerboundArete}), all of which will be handled in the proof of Lemma \ref{lem:parametersetting}.
\begin{restatable}{lemma}{dpratioLarget}
\label{lem:DPratioLarget}
Let $\Du>0$ be given and assume $0<\alpha\le 1$. Let $\DGam, c_{\DGam}$ be as in Lemma \ref{lem:boundsonGammaGamma} and $c_L$ as in Lemma \ref{lem:lowerboundArete}. 
Assume $1/\lambda-1/\theta\ge 1/(\Du+\Delta+\DGam)$ and $\lambda\le \Delta/\ln(2)$ for $\Delta> 0$. For $-\Delta/2<t\in\mathbb{R}$ with $\vert t\vert\ge \Du$ we have
\begin{align*}
    \frac{f_{A}(t)}{f_{A}(t+\Delta)}
    \le \frac{e^{(\DGam+\Delta)/\theta}}{c_{\DGam}}\left(\left(1+\frac{\Delta+\DGam}{\Du}\right)+\frac{c_{\Du}\Gamma(\alpha)\theta^\alpha}{2\lambda c_L}e^{\Du/\theta}(\DGam+\Delta+\Du)^{1-\alpha}\right)
\end{align*}
where $c_{\Du}:=2\int_{0}^{\Du}f_{\Gamma}(x)dx$. 
\end{restatable}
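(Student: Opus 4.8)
The plan is to bound the numerator $f_A(t)$ from above and the denominator $f_A(t+\Delta)$ from below, using the already-established bounds on $f_{\Gamma-\Gamma}$ (Lemma~\ref{lem:boundsonGammaGamma}) and the lower bound on $f_A$ (Lemma~\ref{lem:lowerboundArete}). For the denominator, Lemma~\ref{lem:lowerboundArete} directly gives $f_A(t+\Delta)\ge f_{\Gamma}(\vert t+\Delta\vert+\DGam)c_{\DGam}c_L$, so the real work is an upper bound on $f_A(t)$ that, after dividing, produces the claimed expression. Since $\vert t\vert\ge\Du$ and $t\ge -\Delta/2$, we actually have $t\ge\Du>0$, so all absolute values on the relevant range disappear.

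For the numerator I would start from $f_A(t)=\int_{-\infty}^\infty f_{\Gamma-\Gamma}(t-x)f_L(x)\,dx$ and split the integral at a natural point — presumably around $x = t$ or $x$ comparable to $\Du$ — into a ``bulk'' region where $f_{\Gamma-\Gamma}(t-x)$ is controlled by the monotone upper bound $f_\Gamma(\vert t-x\vert)$, and a ``tail'' region where $f_L(x)$ is exponentially small. In the bulk region one uses $f_{\Gamma-\Gamma}(t-x)\le f_\Gamma(\vert t-x\vert)$ and the fact that $f_\Gamma$ is decreasing to pull out $f_\Gamma$ evaluated near $t$; the ratio of $f_\Gamma(t)$ (roughly) to $f_\Gamma(\vert t+\Delta\vert+\DGam)$ is where the factors $e^{(\DGam+\Delta)/\theta}$ and $(1+(\Delta+\DGam)/\Du)^{1-\alpha}$-type terms come from, since for the Gamma density $f_\Gamma(s)\propto s^{\alpha-1}e^{-s/\theta}$ one has $f_\Gamma(s)/f_\Gamma(s')=(s/s')^{\alpha-1}e^{(s'-s)/\theta}$, and with $\alpha\le 1$ the power $(s/s')^{\alpha-1}$ is bounded when $s\le s'$, while $s'/s=(\vert t+\Delta\vert+\DGam)/\vert t\vert\le 1+(\Delta+\DGam)/\Du$. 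The $1/c_{\DGam}$ prefactor and the overall shape then follow from dividing by the Lemma~\ref{lem:lowerboundArete} lower bound. The second summand, carrying $c_{\Du}$, $\Gamma(\alpha)\theta^\alpha$, $e^{\Du/\theta}$ and $(\DGam+\Delta+\Du)^{1-\alpha}$, should come from the contribution of the integration region near $x\approx t$ (equivalently $t-x$ near $0$), where the $f_{\Gamma-\Gamma}$ upper bound $f_\Gamma(\vert t-x\vert)$ has an integrable singularity; there I would bound $\int f_\Gamma(\vert t-x\vert)f_L(x)\,dx$ over a window of radius $\Du$ by $\sup_{\vert x-t\vert\le\Du} f_L(x) \cdot \int_{-\Du}^{\Du} f_\Gamma(\vert u\vert)\,du \le \tfrac{1}{2\lambda}\cdot c_{\Du}$ (using $f_L(x)\le 1/(2\lambda)$, though more likely the sharper $e^{\Du/\theta}$-type control on where $f_L$ is large relative to $f_\Gamma$ near $t$), then convert $f_\Gamma(\vert t+\Delta\vert+\DGam)$ in the denominator into the explicit $\Gamma(\alpha)\theta^\alpha(\cdots)^{1-\alpha}e^{(\cdots)/\theta}$ expression using the closed form of $f_\Gamma$. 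The hypothesis $1/\lambda - 1/\theta \ge 1/(\Du+\Delta+\DGam)$ is exactly what is needed so that the exponential decay of $f_L$ dominates the exponential growth of $1/f_\Gamma$ on the relevant range, keeping the tail integral convergent and bounded.

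The main obstacle I expect is bookkeeping: choosing the split point and the window radius so that both pieces come out in precisely the stated closed form, and correctly tracking the Gamma-density normalization constant $\Gamma(\alpha)\theta^\alpha$ through the division $f_\Gamma(\text{small})/f_\Gamma(\vert t+\Delta\vert+\DGam)$ — in particular getting the exponents $1-\alpha$ (rather than $\alpha-1$) and the signs in the $e^{\pm(\DGam+\Delta+\Du)/\theta}$ factors right, since $\alpha<1$ flips several monotonicity directions. The conceptual content — numerator split into a far part handled by monotonicity of $f_\Gamma$ and a near part handled by the integrable singularity plus the boundedness of $f_L$, then divided by the Lemma~\ref{lem:lowerboundArete} bound — is routine once the regions are fixed; the hard part is purely making the constants match the asserted inequality, and the condition on $1/\lambda-1/\theta$ is the key technical lever that makes the tail estimate go through.
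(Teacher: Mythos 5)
Your near‑$t$ piece is essentially the right idea: over $\{x:|x-t|\le\Du\}$ bound $f_L$ by its largest value on that window (which, since $|t|\ge\Du$, is $f_L(|t|-\Du)$, not merely $1/(2\lambda)$), integrate $f_\Gamma$ to produce $c_{\Du}$, divide by the scalar lower bound $f_\Gamma(|t+\Delta|+\DGam)c_{\DGam}c_L$, and then observe that the resulting $t$‑dependent expression $(\DGam+\Delta+|t|)^{1-\alpha}e^{-|t|(1/\lambda-1/\theta)}$ is maximized at $|t|=\Du$ under the hypothesis $1/\lambda-1/\theta\ge 1/(\Du+\Delta+\DGam)$; this is exactly the paper's Claim plus Lemma~\ref{lem:maximizingfraction}, and it produces the second summand including the factor $e^{\Du/\theta}$.

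The gap is in the far region, and it is structural rather than bookkeeping. You commit at the outset to replacing the denominator $f_A(t+\Delta)$ by the scalar $f_\Gamma(|t+\Delta|+\DGam)c_{\DGam}c_L$ from Lemma~\ref{lem:lowerboundArete}, but the first summand in the target, $\tfrac{e^{(\DGam+\Delta)/\theta}}{c_{\DGam}}\bigl(1+\tfrac{\Delta+\DGam}{\Du}\bigr)$, carries no factor $1/c_L$, which already signals that the scalar lower bound cannot be what is used there. More to the point, the far‑piece numerator $\int_{|t-x|>\Du}f_\Gamma(|t-x|)f_L(x)\,dx$ admits no bound of the form (constant)$\times f_\Gamma(|t+\Delta|+\DGam)$ uniformly in $t$: the largest value of the $\Gamma$‑factor on that region is $f_\Gamma(\Du)$, and $f_\Gamma(\Du)/f_\Gamma(|t+\Delta|+\DGam)$ grows without bound as $|t|\to\infty$. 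Nor is ``pulling out $f_\Gamma$ evaluated near $t$'' a valid upper bound, since $f_\Gamma(|t-x|)>f_\Gamma(|t|)$ for $x$ between $0$ and $t$, a subset of the far region. The paper avoids this by lower‑bounding the denominator only as far as the integral $c_{\DGam}\int f_\Gamma(|t+\Delta-x|+\DGam)f_L(x)\,dx$ (one application of Lemma~\ref{lem:boundsonGammaGamma}, not of Lemma~\ref{lem:lowerboundArete}) and then comparing numerator and denominator integrand‑by‑integrand: using $|t+\Delta-x|\le|t-x|+\Delta$ together with the identity $|t-x|^{\alpha-1}=(|t-x|+\DGam+\Delta)^{\alpha-1}\bigl(1+\tfrac{\Delta+\DGam}{|t-x|}\bigr)^{1-\alpha}$, the two integrands match up to the factor $e^{(\DGam+\Delta)/\theta}\bigl(1+\tfrac{\Delta+\DGam}{|t-x|}\bigr)^{1-\alpha}\le e^{(\DGam+\Delta)/\theta}\bigl(1+\tfrac{\Delta+\DGam}{\Du}\bigr)$ on the far region, after which the ratio of two integrals with identical integrands (numerator over a subset of the real line, denominator over all of it) is at most $1/c_{\DGam}$. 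That integral‑versus‑integral comparison is what makes the far term $t$‑uniform and $c_L$‑free, and it is not recoverable once the denominator has been collapsed to the scalar of Lemma~\ref{lem:lowerboundArete}.
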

\begin{proof}
The proof can be found in Appendix \ref{app:prooflemLarget}.
\end{proof}

\begin{restatable}{lemma}{dpratioSmallt}
\label{lem:DPratioSmallt}
Let $\Du>0$ be given and assume $0<\alpha\le 1$. Let $\DGam, c_{\DGam}$ be as in Lemma \ref{lem:boundsonGammaGamma} and $c_L$ as in Lemma \ref{lem:lowerboundArete}.
Assume $1/\lambda-1/\theta\ge 1/(\Delta+\DGam)$ and $\lambda\le \Delta/\ln(2)$ for $\Delta> 0$. For $-\Delta/2<t\in\mathbb{R}$ such that $\vert t\vert\le \Du$ we have
\begin{align*}
    \frac{f_{A}(t)}{f_{A}(t+\Delta)}\le \frac{e^{(\DGam+\Delta)/\theta}}{c_Lc_{\DGam}\lambda}\left(\frac{\Du+\DGam+\Delta}{\alpha}+\Gamma(\alpha)\theta^\alpha(\DGam+\Delta)^{1-\alpha}\right)
\end{align*}
\end{restatable}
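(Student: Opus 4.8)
The plan is to bound the numerator $f_A(t)$ and the denominator $f_A(t+\Delta)$ of the ratio separately, in each case first stripping the (closed-form--free) $\Gamma-\Gamma$ density down to a plain $\Gamma$ density via Lemma~\ref{lem:boundsonGammaGamma}. For the denominator, since $t\ge-\Delta/2$ we have $t+\Delta\ge\Delta/2>0$, so Lemma~\ref{lem:lowerboundArete} applies verbatim and gives $f_A(t+\Delta)\ge f_\Gamma(t+\Delta+\DGam)\,c_{\DGam}c_L$; writing out $1/f_\Gamma(s)=\Gamma(\alpha)\theta^\alpha s^{1-\alpha}e^{s/\theta}$ already shows which factors this contributes to the final bound, in particular the dangerous exponential $e^{(t+\Delta+\DGam)/\theta}$ whose $t$-dependence must be cancelled.

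For the numerator I would start from $f_A(t)=\int_{-\infty}^\infty f_{\Gamma-\Gamma}(t-x)f_L(x)\,dx\le\int_{-\infty}^\infty f_\Gamma(|t-x|)f_L(x)\,dx$ and then exploit that $f_L$ has a thinner tail than $f_\Gamma$: the hypothesis $1/\lambda-1/\theta\ge 1/(\Delta+\DGam)$ in particular gives $1/\lambda\ge 1/\theta$, so the product of exponentials in the integrand satisfies $e^{-|t-x|/\theta}e^{-|x|/\lambda}\le e^{-(|t-x|+|x|)/\theta}e^{-|x|(1/\lambda-1/\theta)}\le e^{-|t|/\theta}e^{-|x|(1/\lambda-1/\theta)}$ by the triangle inequality. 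This is the crucial manoeuvre: the extracted factor $e^{-|t|/\theta}\le e^{-t/\theta}$ is exactly what cancels the $e^{(t+\Delta+\DGam)/\theta}$ coming from the denominator, so that only $e^{(\DGam+\Delta)/\theta}$ survives and $\Du$ never reaches the exponent (contrast Lemma~\ref{lem:DPratioLarget}, where $|t|$ can be large and a factor $e^{\Du/\theta}$ is unavoidable).

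What remains is to bound $\int_{-\infty}^\infty|t-x|^{\alpha-1}e^{-|x|(1/\lambda-1/\theta)}\,dx$, which I would split at $|t-x|$ of order $\Du+\DGam+\Delta$: on the inner part the exponential is $\le 1$ and $\int|t-x|^{\alpha-1}\,dx$ is finite near the singularity because $\alpha>0$, producing a term proportional to $(\Du+\DGam+\Delta)^\alpha/\alpha$; on the outer part $|t-x|^{\alpha-1}$ is bounded (here $\alpha\le 1$ is used) and $\int e^{-|x|(1/\lambda-1/\theta)}\,dx$ is a constant multiple of $1/(1/\lambda-1/\theta)\le\Delta+\DGam$, producing the second term. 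Recombining with the $\Gamma(\alpha)\theta^\alpha$ and $(t+\Delta+\DGam)^{1-\alpha}$ factors coming from $1/f_\Gamma(t+\Delta+\DGam)$, and then using $|t|\le\Du$ together with $\alpha\le 1$ (so $s\mapsto s^{1-\alpha}$ is nondecreasing and $(\DGam+\Delta)^\alpha(\Du+\DGam+\Delta)^{1-\alpha}\le\Du+\DGam+\Delta$), gives the claimed bound; the factors $\lambda$, $c_{\DGam}$, $c_L$ are simply carried through and pinned down later in Lemma~\ref{lem:parametersetting}.

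The step I expect to be the main obstacle is the apportioning of the exponential decay around the singularity of $f_\Gamma$ at $0$ (present whenever $\alpha<1$): enough of the Gamma/Laplace decay must be spent to extract $e^{-|t|/\theta}$ for the cancellation above, yet enough must be retained for the $x$-integral to converge once $|t-x|^{\alpha-1}$ has been pulled out, and the split must be chosen so that the two surviving terms land in the stated form rather than a weaker one. Everything else is routine: identifying $|t+\Delta|=t+\Delta$, tracking constants, and using monotonicity of power functions with $0<\alpha\le 1$.
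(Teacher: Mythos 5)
Your triangle-inequality extraction of $e^{-|t|/\theta}$ is clean and the two-region split of the remaining integral is sound, but the inequality it yields is not the one stated. Trace it through: after replacing $f_{\Gamma-\Gamma}(t-x)$ by $f_\Gamma(|t-x|)$, \emph{every} piece of your upper bound on $f_A(t)$ still carries the normalizing factor $1/(\Gamma(\alpha)\theta^\alpha)$ from the Gamma density. Dividing by the lower bound $f_A(t+\Delta)\ge c_{\DGam}c_L f_\Gamma(t+\Delta+\DGam)$ then cancels $\Gamma(\alpha)\theta^\alpha$ throughout, and with your split at $R=\Du+\DGam+\Delta$ the computation lands on
\[
\frac{f_A(t)}{f_A(t+\Delta)} \le \frac{e^{(\DGam+\Delta)/\theta}}{c_Lc_{\DGam}\lambda}\left(\frac{\Du+\DGam+\Delta}{\alpha}+(\DGam+\Delta)\right).
\]
The second term here is $\DGam+\Delta$, not $\Gamma(\alpha)\theta^\alpha(\DGam+\Delta)^{1-\alpha}$, and under the lemma's hypotheses these are incomparable: with $\alpha=1$ and $\theta<\DGam+\Delta$ your term is strictly larger (the hypotheses permit this, e.g.\ $\alpha=1$, $\theta=1$, $\DGam=1$, $\Delta=1$, $\lambda=1/2$), while for $\alpha$ small it is far smaller. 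So as written the plan does not establish the stated bound.

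The structural reason is that the paper never pushes the Laplace factor into Gamma form on the tails. It splits $f_A(|t|)\le\int f_\Gamma(|x|)f_L(|t|-x)\,dx$ at $x\in(0,2|t|)$, chosen precisely so that \emph{outside} this interval $f_L(|t|-x)\le f_L(t)$; the tail contribution is bounded by $2f_L(t)$, which has no $\Gamma(\alpha)\theta^\alpha$ in it. Only that term, once divided by $f_\Gamma(t+\Delta+\DGam)$, leaves $\Gamma(\alpha)\theta^\alpha$ uncancelled, and Lemma~\ref{lem:maximizingfraction} with $\kappa=0$ turns the residual $(\DGam+|t|+\Delta)^{1-\alpha}/e^{|t|(1/\lambda-1/\theta)}$ into the stated $(\DGam+\Delta)^{1-\alpha}$. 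Your route treats Gamma and Laplace symmetrically after the triangle-inequality step and so cannot reproduce that term. Your estimate is a perfectly valid bound on the ratio --- and for the paper's actual parameters ($\alpha=e^{-\varepsilon/4}$ tiny, $\Gamma(\alpha)\approx 1/\alpha$ huge) it is substantially sharper --- but accepting it would require restating Lemma~\ref{lem:DPratioSmallt} and propagating the changed form through Lemmas~\ref{lem:DPratioGeneralt} and~\ref{lem:parametersetting}. As a proof of the lemma as written, there is a gap.
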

\begin{proof}
The proof can be found in Appendix \ref{app:prooflemSmallt}.
\end{proof}

The following lemma combines Lemmas \ref{lem:DPratioLarget} and \ref{lem:DPratioSmallt} to give an upper bound for general $t~>~-~\Delta/2$:
\begin{restatable}{lemma}{dpratioGeneralt}
\label{lem:DPratioGeneralt}
Let $\DGam, c_{\DGam}$ be as in Lemma \ref{lem:boundsonGammaGamma} and $c_L$ as in Lemma \ref{lem:lowerboundArete}.
Assume $0<\alpha\le 1$, $\theta\le \DGam+1$, $\lambda\le \min\{\theta/2,\Delta/\ln(2)\}$ and $\Gamma(\alpha)\le 1/\alpha$ for $\Delta> 0$. For $-\Delta/2<t\in\mathbb{R}$
\begin{align*}
     \frac{f_{A}(t)}{f_{A}(t+\Delta)}
     &\le \frac{2e^{(\DGam+\Delta)/\theta}e^\alpha(\alpha\theta+\DGam+\Delta)}{\alpha c_{\DGam}c_L\lambda}
\end{align*}
\end{restatable}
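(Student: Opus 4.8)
The plan is to set the split parameter to $\Du:=\alpha\theta$ and to apply Lemma~\ref{lem:DPratioLarget} when $\vert t\vert\ge\Du$ and Lemma~\ref{lem:DPratioSmallt} when $\vert t\vert\le\Du$, bounding each of the two resulting expressions by the single bound claimed in the statement (recall we are throughout in the regime $t>-\Delta/2$, so $f_A(t)\ge f_A(t+\Delta)$ and only an upper bound on the ratio is needed). The point of the choice $\Du=\alpha\theta$ is threefold: the quantity $\Du+\DGam+\Delta$ appearing in both sub-lemmas becomes exactly $\alpha\theta+\DGam+\Delta$, matching the form of the target; the factor $e^{\Du/\theta}$ in Lemma~\ref{lem:DPratioLarget} becomes the harmless constant $e^\alpha$; and the mixed power $\Du^\alpha(\cdot)^{1-\alpha}$ collapses cleanly under the weighted AM--GM inequality. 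Before invoking the two lemmas I would verify that their hypotheses follow from the assumptions here: $0<\alpha\le1$ and $\lambda\le\Delta/\ln(2)$ are assumed directly, while $\lambda\le\theta/2$ gives $1/\lambda-1/\theta\ge1/\theta$, which combined with $\theta\le\DGam+1$ yields the lower bounds on $1/\lambda-1/\theta$ required by Lemmas~\ref{lem:DPratioSmallt} and~\ref{lem:DPratioLarget}.

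In the case $\vert t\vert\le\Du$, Lemma~\ref{lem:DPratioSmallt} bounds $f_A(t)/f_A(t+\Delta)$ by $\frac{e^{(\DGam+\Delta)/\theta}}{c_Lc_{\DGam}\lambda}$ times the bracket $\frac{\Du+\DGam+\Delta}{\alpha}+\Gamma(\alpha)\theta^\alpha(\DGam+\Delta)^{1-\alpha}$. For the second summand I would use the hypothesis $\Gamma(\alpha)\le1/\alpha$ together with the weighted AM--GM bound $\theta^\alpha(\DGam+\Delta)^{1-\alpha}\le\alpha\theta+(1-\alpha)(\DGam+\Delta)\le\alpha\theta+\DGam+\Delta$, so that (using $\Du=\alpha\theta$ in the first summand) the bracket is at most $\frac{2(\alpha\theta+\DGam+\Delta)}{\alpha}$. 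Since $e^\alpha\ge1$, the whole expression is then at most $\frac{2e^{(\DGam+\Delta)/\theta}e^\alpha(\alpha\theta+\DGam+\Delta)}{\alpha c_{\DGam}c_L\lambda}$, as required.

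In the case $\vert t\vert\ge\Du$, Lemma~\ref{lem:DPratioLarget} gives $\frac{e^{(\DGam+\Delta)/\theta}}{c_{\DGam}}$ times a bracket with three summands, and after cancelling the common prefactor $\frac{e^{(\DGam+\Delta)/\theta}}{c_{\DGam}}$ from the target it suffices to bound that bracket by $R:=\frac{2e^\alpha(\alpha\theta+\DGam+\Delta)}{\alpha c_L\lambda}$. Since $c_L=1/4$ and $\lambda\le\theta/2$ one has $R\ge16$, so the summand $1$ is at most $R/16$; the summand $\frac{\Delta+\DGam}{\Du}=\frac{\Delta+\DGam}{\alpha\theta}$ is compared to the piece $\frac{2e^\alpha(\DGam+\Delta)}{\alpha c_L\lambda}$ of $R$, the ratio being $\frac{c_L\lambda}{2e^\alpha\theta}\le\frac1{16}$; and for the last summand $\frac{c_{\Du}\Gamma(\alpha)\theta^\alpha}{2\lambda c_L}e^{\Du/\theta}(\DGam+\Delta+\Du)^{1-\alpha}$ I would first note $c_{\Du}\Gamma(\alpha)\theta^\alpha=2\int_0^{\Du}x^{\alpha-1}e^{-x/\theta}\,dx\le2\Du^\alpha/\alpha$, then use $e^{\Du/\theta}=e^\alpha$ and the weighted AM--GM step $\Du^\alpha(\DGam+\Delta+\Du)^{1-\alpha}=(\alpha\theta)^\alpha(\DGam+\Delta+\alpha\theta)^{1-\alpha}\le\alpha\theta+(1-\alpha)(\DGam+\Delta)\le\alpha\theta+\DGam+\Delta$, which bounds this summand by $\frac{e^\alpha(\alpha\theta+\DGam+\Delta)}{\alpha c_L\lambda}=R/2$. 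Summing, the bracket is at most $(\tfrac1{16}+\tfrac1{16}+\tfrac12)R<R$, which completes the case and hence the proof.

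The step I expect to be the main obstacle is the third summand in the large-$\vert t\vert$ case: one must absorb the Gamma-integral factor $c_{\Du}\Gamma(\alpha)\theta^\alpha$, the exponential $e^{\Du/\theta}$, and the mixed power $(\DGam+\Delta+\Du)^{1-\alpha}$ simultaneously into an expression linear in $\alpha\theta+\DGam+\Delta$ (up to the $e^\alpha$ and constant factors allowed by the statement). This is precisely what forces the choice $\Du=\alpha\theta$, and it is where the constant $2$ and the factor $e^\alpha$ in the claimed bound originate; the remaining estimates are routine applications of the hypotheses $c_L=1/4$, $\lambda\le\theta/2$, $\theta\le\DGam+1$ and $\Gamma(\alpha)\le1/\alpha$.
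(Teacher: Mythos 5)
Your proof is correct and follows the same overall strategy as the paper's: set $\Du=\alpha\theta$, verify the hypotheses of Lemmas~\ref{lem:DPratioLarget} and~\ref{lem:DPratioSmallt}, and drive both bounds to the common target. The algebra, however, differs in two worthwhile ways. First, where the paper merges the two sub-lemmas into a single term-by-term maximum and then simplifies, you handle the two regimes separately and estimate each of the three summands in the large-$|t|$ bracket individually against fractions of the target $R$; this is more transparent. Second, and more substantively, you replace the paper's use of $\theta\le\DGam+\Delta$ to control $\theta^\alpha(\DGam+\Delta)^{1-\alpha}$ (and the coarse bound $c_{\Du}\le 2$, $\Gamma(\alpha)\le 1/\alpha$) with the weighted AM--GM inequality and the direct integral estimate $c_{\Du}\Gamma(\alpha)\theta^\alpha=2\int_0^{\Du}x^{\alpha-1}e^{-x/\theta}\,dx\le 2\Du^\alpha/\alpha$; these are cleaner and tighter. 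The one point to flag concerns the stated hypothesis $\theta\le\DGam+1$: as you observe, you need $1/\lambda-1/\theta\ge 1/(\DGam+\Delta)$ for Lemma~\ref{lem:DPratioSmallt}, and $1/\lambda-1/\theta\ge 1/\theta$ together with $\theta\le\DGam+1$ only yields this when $\Delta\ge 1$, whereas the statement allows any $\Delta>0$. The paper's own proof in fact uses $\theta\le\DGam+\Delta$ rather than $\theta\le\DGam+1$ (and so does Lemma~\ref{lem:parametersetting} when invoking this result), so the $1$ in the statement is evidently a typo for $\Delta$; with that correction your hypothesis-verification step is sound. Your claim that $\theta\le\DGam+1$ suffices as written should be corrected, but the rest of the argument is airtight.
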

\begin{proof}
The proof can be found in Appendix \ref{app:prooflemGeneralt}.
\end{proof}
\begin{note}
The fact that $\Gamma(\alpha) \leq 1/\alpha$ for $0<\alpha\le 1$ follows from Euler's definition of the Gamma function, 
$$\Gamma(\alpha) = \frac{1}{\alpha} \prod_{n=1}^{\infty} \frac{(1+\frac{1}{n})^\alpha}{1+\frac{\alpha}{n}}, $$
since $(1+\frac{1}{n})^\alpha \leq 1+\frac{\alpha}{n}$ for any $n>0$ and $0<\alpha\le 1$.
\end{note}

We finally choose parameters $\alpha,\theta,\lambda$ ensuring that the ratio $e^{-\varepsilon}\le f(t)/f(t+\Delta)\le e^\varepsilon$ for $\varepsilon$ large enough:
\begin{restatable}{lemma}{paramsetting}
\label{lem:parametersetting}
Suppose $\varepsilon\ge 20+4\ln(\Delta)$ for $\Delta\ge 2/e$. Let $\alpha=e^{-\varepsilon/4}, \theta=\frac{4\Delta}{\varepsilon}$ and  $\lambda=e^{-\varepsilon/4}$.
Then for $t\in\mathbb{R}$
\[
e^{-\varepsilon}\le \frac{f_{A}(t)}{f_{A}(t+\Delta)}\le e^\varepsilon.
\]
\end{restatable}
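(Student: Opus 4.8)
The plan is to derive both inequalities from Lemma~\ref{lem:DPratioGeneralt} after a good choice of the free parameter $\DGam$ and a routine check that that lemma's hypotheses hold for $\alpha=e^{-\varepsilon/4}$, $\theta=4\Delta/\varepsilon$, $\lambda=e^{-\varepsilon/4}$. I would first reduce to a one-sided statement on a restricted range of $t$. As explained at the start of Section~\ref{sec:boundsondensityArete}, monotonicity and symmetry of $f_A$ (Lemma~\ref{lem:mainAreteProperties}) let us assume $t\ge-\Delta/2$: then $|t|\le|t+\Delta|$, so $f_A(t)\ge f_A(t+\Delta)$ and the lower bound $e^{-\varepsilon}\le f_A(t)/f_A(t+\Delta)$ is immediate; for $t<-\Delta/2$ the ratio is at most $1\le e^\varepsilon$ (upper bound trivial), while the lower bound follows by applying the to-be-proved upper bound to $s:=-t-\Delta\ge-\Delta/2$ and using $f_A(t)=f_A(s+\Delta)$, $f_A(t+\Delta)=f_A(s)$. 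So it remains to show $f_A(t)/f_A(t+\Delta)\le e^\varepsilon$ for $t\ge-\Delta/2$ (the endpoint $t=-\Delta/2$ being trivial by symmetry), and for $t>-\Delta/2$ this is exactly the quantity Lemma~\ref{lem:DPratioGeneralt} bounds.

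The key step is to take $\DGam:=\theta=4\Delta/\varepsilon$, so that the potentially dangerous factor $e^{(\DGam+\Delta)/\theta}$ in Lemma~\ref{lem:DPratioGeneralt} collapses to $e^{1+\Delta/\theta}=e^{1+\varepsilon/4}$. Then I would verify that lemma's hypotheses: $0<\alpha\le 1$ and $\Gamma(\alpha)\le 1/\alpha$ hold since $\alpha=e^{-\varepsilon/4}\le e^{-5}<1$ (the Gamma inequality by the Note following Lemma~\ref{lem:DPratioGeneralt}); $\theta\le\DGam+1=\theta+1$ is trivial; $\lambda\le\Delta/\ln 2$ holds since $\lambda\le e^{-5}<(2/e)/\ln 2\le\Delta/\ln 2$; and the only hypothesis needing an argument is $\lambda\le\theta/2$, i.e.\ $\varepsilon e^{-\varepsilon/4}\le 2\Delta$ — as $\varepsilon\mapsto\varepsilon e^{-\varepsilon/4}$ is decreasing for $\varepsilon>4$ it suffices to check this at $\varepsilon=20+4\ln\Delta$, where it reads $(20+4\ln\Delta)e^{-5}\le 2\Delta^2$, which holds for all $\Delta\ge 2/e$ (it holds at $\Delta=2/e$ and the right side grows faster). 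I would also record the lower bound
\[
c_{\DGam}=c_\theta=\int_0^\theta f_{\Gamma(\alpha,\theta)}(x)\,dx=\frac{1}{\Gamma(\alpha)}\int_0^1 u^{\alpha-1}e^{-u}\,du\ge \frac{e^{-1}}{\alpha\,\Gamma(\alpha)}\ge e^{-1},
\]
obtained via the substitution $u=x/\theta$ and $\Gamma(\alpha)\le 1/\alpha$.

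Finally I would substitute into Lemma~\ref{lem:DPratioGeneralt}: using $c_L=1/4$, $e^\alpha\le e^{e^{-5}}\le 1.1$, $\alpha\theta+\DGam+\Delta\le 2\theta+\Delta\le\tfrac32\Delta$ (since $\alpha\le 1$ and $\theta\le\Delta/4$), $c_{\DGam}\ge e^{-1}$, and $\alpha=\lambda=e^{-\varepsilon/4}$, the bound becomes $f_A(t)/f_A(t+\Delta)\le C\,\Delta\,e^{3\varepsilon/4}$ for an absolute constant $C=2\cdot 1.1\cdot\tfrac32\cdot 4\cdot e^2<e^5$. Since $C<e^5$ gives $4\ln C<20$, the hypothesis $\varepsilon\ge 20+4\ln\Delta\ge 4\ln(C\Delta)$ yields $C\Delta\le e^{\varepsilon/4}$ and hence $f_A(t)/f_A(t+\Delta)\le e^{\varepsilon}$, which completes the proof. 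The only genuinely non-mechanical step is the choice $\DGam=\theta$ (and the matching lower bound $c_\theta\ge 1/e$): this is what keeps the $\varepsilon$-dependent blow-up at $e^{3\varepsilon/4}$, leaving $e^{\varepsilon/4}$ of budget which — thanks to the slack between $4\ln C\approx 18$ and $20$ — absorbs the absolute constants and the factor $\Delta$. Everything else is bookkeeping.
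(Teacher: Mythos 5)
Your proof is correct and follows essentially the same route as the paper's: reduce by symmetry and monotonicity to $t>-\Delta/2$, invoke Lemma~\ref{lem:DPratioGeneralt} after fixing the free parameter $\DGam$, bound $c_{\DGam}$ from below, and then plug in $\alpha=\lambda=e^{-\varepsilon/4}$, $\theta=4\Delta/\varepsilon$ to collect a $\Delta\,e^{3\varepsilon/4}$ term times an absolute constant under $e^5$, which the budget $\varepsilon\ge 20+4\ln\Delta$ absorbs. The one place you diverge is the choice of $\DGam$: the paper takes $\DGam=\alpha\theta$ (the mean of $\Gamma(\alpha,\theta)$) and gets $c_{\DGam}\ge 1/2$ from the mean--median inequality, keeping the exponential factor at $e^{2\alpha}e^{\varepsilon/4}$ and the constant at $48e$, whereas you take $\DGam=\theta$ and prove $c_\theta\ge e^{-1}$ by the substitution $u=x/\theta$ together with $\Gamma(\alpha)\le 1/\alpha$, paying an extra factor of roughly $e$ in both the exponential and in $1/c_{\DGam}$; your constant $\approx 13.2\,e^2$ is larger than the paper's but still below $e^5$, so the same threshold works. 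Both choices of $\DGam$ are valid and the rest is the same bookkeeping.
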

\begin{proof}
The proof can be found in Appendix \ref{app:proofparamsetting}.
\end{proof}

\subsection{Putting Things Together}
\label{sec:puttingThingsTogether}
We restate the lemma here for convenience:
\main*
\begin{proof}
The first bullet with the choice of parameters $\alpha=e^{-\varepsilon/4}, \theta=4\Delta/\varepsilon$ and $\lambda=e^{-\varepsilon/4}$ follow from Lemma \ref{lem:parametersetting}
and  monotonicity of $f_A$ (Lemma \ref{lem:mainAreteProperties}), as described at the beginning of Section~\ref{sec:boundsondensityArete}. 
The second bullet also follows from Lemma \ref{lem:parametersetting}, as the expected error of a random variable $Z~=~X~+~Y$, where $X\sim\Gamma-\Gamma(\alpha,\theta)$ and $Y\sim Laplace(\lambda)$, i.e., $Z\sim Arete(\alpha,\theta,\lambda)$,
is 
\[
\E[\vert Z\vert]=\E[\vert X_1-X_2+Y\vert]\le 2\E[ X_1]+\E[\vert Y\vert]=2\alpha\theta+\lambda=\frac{8\Delta e^{-\varepsilon/4}}{\varepsilon}+e^{-\varepsilon/4}=O\left(\frac{\Delta}{\varepsilon}e^{-\varepsilon/4}\right)
\]
where $X_1,X_2\sim\Gamma(\alpha,\theta)$, and similarly, by independence
\begin{align*}
    \Var[Z]&=\Var[X+Y]=\Var[X_1-X_2+Y]=2\Var[X_1]+\Var[ Y]=2\alpha\theta^2+2\lambda^2\\&=2\frac{16\Delta^2e^{-\varepsilon/4}}{\varepsilon^2}+2e^{-\varepsilon/2}=O\left(\frac{\Delta^2}{\varepsilon^2}e^{-\varepsilon/4}\right)
\end{align*}
with our choice of parameters. Finally, for $\varepsilon\ge 1/\sqrt{2}$ (which is significantly smaller than the values of $\varepsilon$ that we are interested in), we may simplify to 
\[
\E[\vert Z\vert]=O\left(\Delta e^{-\varepsilon/4}\right),\qquad \Var[Z]=O\left(\Delta^2e^{-\varepsilon/4}\right)
\]
thus finishing the proof.
\end{proof}

\begin{figure}[t]
\centering
\subfigure[Densities for $\varepsilon=6$]{
\includegraphics[width=.45\textwidth]{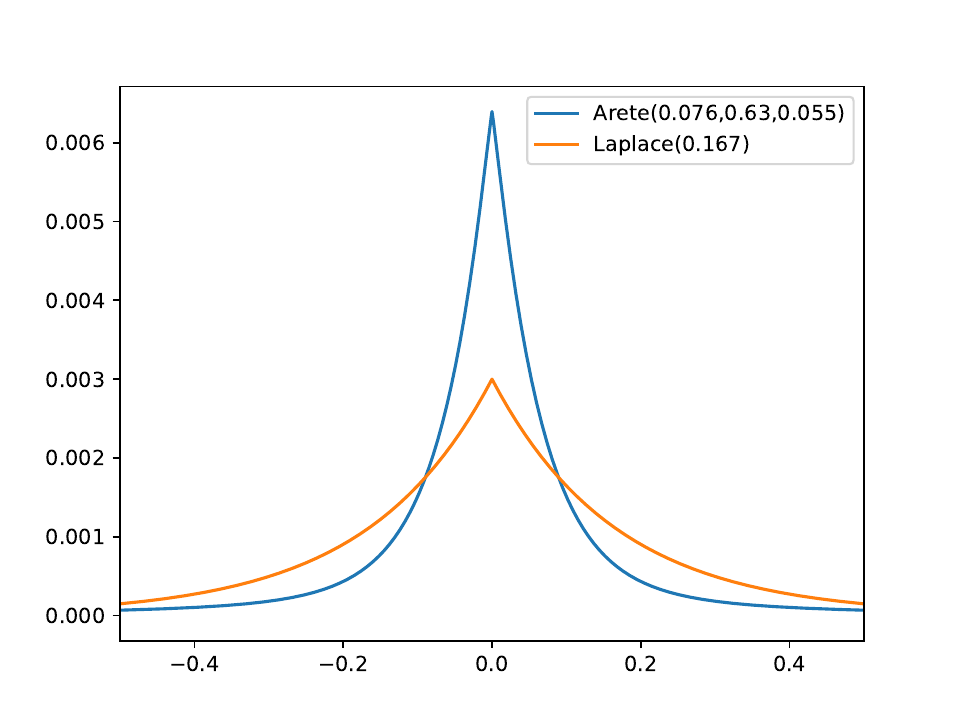}}
\subfigure[Densities for $\varepsilon=8$]{
\includegraphics[width=.45\textwidth]{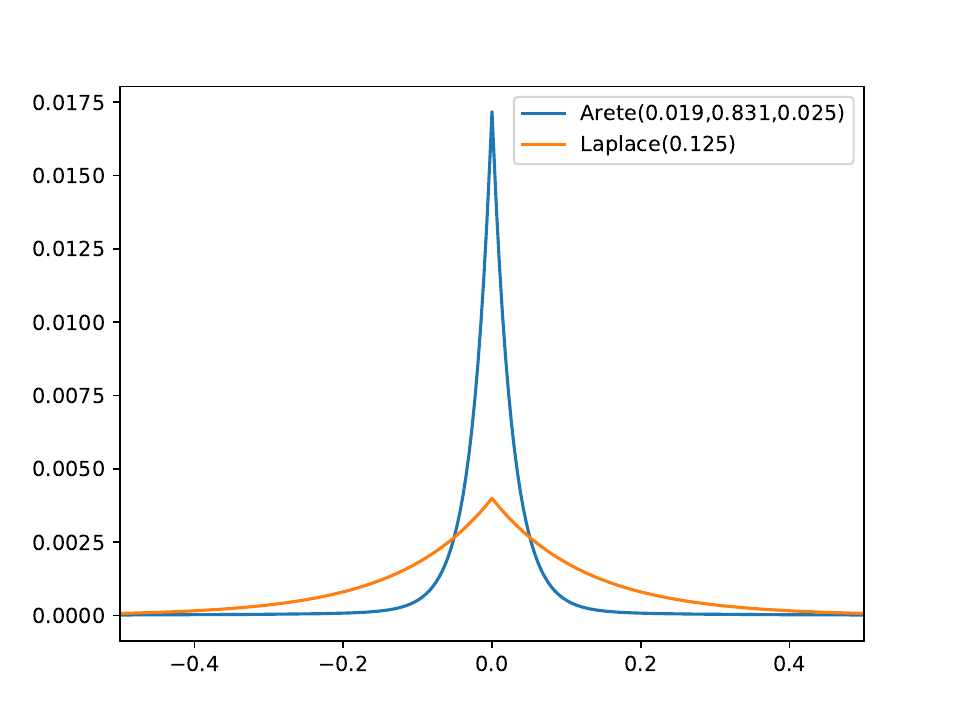}}
\caption{\footnotesize{Density functions for Arete distributions that empirically yield $\varepsilon$-DP with $\varepsilon = 6$ and $\varepsilon = 8$, respectively. The density functions were approximated by rounding the constituent $\Gamma$ and Laplace distribution values to a multiple of $0.001$ and computing the discrete convolution. Parameters were found using a local search heuristic. For comparison, Laplace distributions with the same privacy guarantee have been included and are clearly less concentrated around zero.}}\label{fig:empirical_arete}
\end{figure}

\section{Conclusion}
\label{sec:conclusion}
In this work we have seen a new noise distribution, the Arete distribution, which has a continuous density function, is symmetric around zero, monotonely decreasing for $t>0$, infinitely divisible and has expected absolute value and variance exponentially decreasing in $\varepsilon$. The Arete distribution yields an $\varepsilon$-differentially private mechanism, the Arete mechanism, which is an \emph{infinitely divisible} alternative to the Staircase mechanism \citep{GengV16} with a \emph{continuous} density function. The Arete mechanism achieves error comparable to the Staircase mechanism and outperforms the Laplace mechanism \citep{DworkMNS06} in terms of absolute error and variance in the low privacy regime (for large $\varepsilon$). 

Simulations suggest that the constant factors of the Arete mechanism, with parameters chosen as we have described, can be improved (see Figure~\ref{fig:empirical_arete}). We leave open the question of finding an optimal (up to lower order terms), infinitely divisible error distribution for differential privacy in the low privacy regime.

\acks{We thank Thomas Steinke for useful feedback on a previous version of this paper, and the anonymous reviewers for their constructive suggestions. This work was done while both authors were affiliated with BARC, supported by the VILLUM Foundation grant 16582.}

\bibliography{bibl}

\appendix

\section{Applications}
\label{sec:applications}
In differential privacy, two models are prevalent: the central model and the local model. In the \emph{central} model of differential privacy \citep{DworkKMMN06} all data is held by a single trusted unit who makes the result of a query differentially private before releasing it. This is often done by adding noise to the query result. The central model usually has a very high level of accuracy, but requires a high level of trust. Often, data is split among many players, we refer to these as data owners, and a trusted central unit is not available. This setting is commonly known as the \emph{local} model of differential privacy \citep{KasiviswanathanLNRS11,warner1965randomized,DuchiJW13}. In this model, each data owner must ensure privacy for their own data, and so applies a differentially private mechanism locally, which is then forwarded to an analyst who combines all reports to compute an approximate answer to the query. For many queries, the overall error in the local model grows rather quickly as a function of the number of players, significantly limiting utility.
For example, while we can achieve constant error in the central model \citep{DworkMNS06}, a count query requires $O(\sqrt{n})$ error for the same level of privacy as in the central model, where $n$ is the number of players \citep{CheuSUZZ19}. %
The local model is often attractive for data collection as the collecting organizations are not liable for storing sensitive user data in this model -- a few examples of deployment include Google's RAPPOR \citep{ErlingssonPK14}, Apple (several features such as Lookup Hints, Emoji suggestion etc.) \citep{AppleLDP} and Microsoft Telemetry \citep{DingKY17}.

In order to bridge this trust/utility gap, we may imitate the trusted unit from the centralized setting with cryptographic primitives \citep{WaghHMM21}, allowing for differentially private implementations with better utility than in the local model while having lower trust assumptions than in the centralized model. Cryptographic primitives ensure that all parties learn only the output of the computation, while differential privacy further bounds the information leakage from this output, and so the combination gives very strong guarantees.
We limit our discussion to the problem of computing the sum of real inputs, which is a basic building block in many other applications. If we can divide the noise among all players we can obtain the same accuracy in a distributed setting as in the central model without the assumption of a trusted aggregator. %
Luckily, we can divide the noise between the players if the noise distribution $\mathcal{D}$ is infinitely divisible, %
and so the Arete distribution can be applied in this model. 

We discuss differential privacy implementations with two cryptographic primitives: Secure Multiparty Aggregation and Anonymous Communication but note that such implementations come with assumptions about the computational power of the analyst, which are accepted by the security community, but limit the privacy guarantee to computational differential privacy \citep{WaghHMM21}.

\subsubsection*{Secure Multiparty Aggregation}
The cryptographic primitive secure multiparty Aggregation, rooted in the work of Yao \citep{Yao82b}, has often been combined with differential privacy to solve the problem of private real summation, see for example \citep{ShiCRCS11, BonawitzIKMMPRS17, ChanSS12_2}.
\cite{GoryczkaX17} 
give a comparative study of several protocols for private summation in a distributed setting. These protocols combine common approaches for achieving security (secret sharing, homomorphic encryption and perturbation-based) while each party adds noise shares whose sum follows the Laplace distribution before sharing their data, in order to ensure differential privacy. Continuing their line of work, we may exchange the Laplace noise in \citep{GoryczkaX17} with Arete distributed noise to achieve $\varepsilon$-differentially private protocols with error exponentially small in $\varepsilon$.

\subsubsection*{Anonymous Communication}
Another line of work that has received a lot of attention over the past few years is the \emph{shuffle} model of differential privacy \citep{BittauEMMRLRKTS17, CheuSUZZ19, ErlingssonFMRTT19}. %
Along with Google's Prochlo framework, \cite{BittauEMMRLRKTS17} introduced the ESA (Encode Shuffle Analyze) framework where each data owner encodes their data before releasing it to a shuffler. The shuffler randomly permutes the encoded inputs and releases the (private) permuted set of data to an (untrusted) analyst, who then performs statistical analysis on the encoded, shuffled data. %
For recent work on the problem of summation in the shuffle model and a discussion of error/privacy-tradeoff, we refer to for example \citep{BalleBGN19, BalleBGN20, GhaziGKMPV20, GhaziKMPS21, GhaziMPV20}.
\cite{GhaziKMPS21} propose an $(\varepsilon,\delta)$-differentially private protocol for summation in the shuffle model for summing reals or integers where each user sends expected $1+o(1)$ messages. The protocol adds discrete Laplace noise (also sometimes called Geometrically distributed noise) and achieves error arbitrarily close to that of the Laplace mechanism (applied in the central model), but do not address the problem of achieving error exponentially decreasing in $\varepsilon$ in the shuffle model. The Arete distribution solves this open problem: as it is infinitely divisible, simply exchange the discrete Laplace noise (the ``central'' noise distribution in the protocol) with the Arete distribution. This yields:
\begin{corollary}[Differentially private aggregation in the shuffle model.] Let $n$ be a positive integer, and let $\varepsilon$, $\delta$ be positive real numbers with $\varepsilon = O(\ln n)$. There is an $(\varepsilon,\delta)$-differentially private aggregation protocol in the shuffle model for inputs in $[0,1]$ having absolute error $\frac{1}{e^{\Omega(\varepsilon)} - 1}$ in expectation, using $O\left(1+\frac{\log(1/\delta)}{\log n}\right)$ messages per party, each consisting of $O(\log n)$ bits.
\end{corollary}

{\bf Note.} By the post-processing property of differential privacy, we still achieve differential privacy if more than $n$ players participate, and so we only need to choose the noise shares based on a lower bound on the number of players in order to ensure differential privacy. Hence, it is not strictly necessary to know the exact number of players in advance.

\section{Basic definitions}
\label{sec:basics}

\subsection{Probability Distributions}
In this section we state the definitions and basic facts that we need to analyze the Arete distribution. References to further information can be found in~\citep{GoryczkaX17}.

\begin{definition}[Infinite Divisibility]
\label{def:infdiv}
A distribution $\mathcal{D}$ is infinitely divisible if, for any random variable $X$ with distribution $\mathcal{D}$, then for every positive integer $n$ there exist $n$ i.i.d. random variables $X_1,...,X_n$ such that $\sum_{i=1}^n X_i$ has the same distribution as $X$. The random variables $X_i$ need not have distribution $\mathcal{D}$.
\end{definition}

We recall the definitions of the distributions that we use to define the Arete distribution and give a formal definition of the latter. Whenever the parameters are implicit we leave them out and simply write $f_{\Gamma}$, $f_L$, $f_{\Gamma-\Gamma}$ and $f_A$ for the densities of the $\Gamma$, Laplace, $\Gamma-\Gamma$ and Arete distributions, resp.
\begin{definition}[The $\Gamma$ Distribution]
\label{def:gammadistr}
A random variable $X$ has Gamma distribution with shape parameter $\alpha>0$ and scale parameter $\theta>0$, denoted $X\sim\Gamma(\alpha,\theta)$, if its density function is
\[
f_{\Gamma(\alpha,\theta)}(t)=\frac{e^{-t/\theta}t^{\alpha-1}}{\Gamma(\alpha)\theta^\alpha},\quad t>0.
\]
In the special case $\alpha=1$, the random variable $X$ has Exponential distribution with parameter $\theta$.
\end{definition}

The $\Gamma$-distribution is infinitely divisible: For $n$ independent random variables $X_i\sim \Gamma(\alpha_i,\theta)$, we have $X=\sum_{i=1}^nX_i\sim\Gamma\left(\sum_{i=1}^n \alpha_i,\theta\right)$.
Furthermore, for $X\sim\Gamma(\alpha,\theta)$ we have
$\E[X]=\alpha\theta$ and $\Var[X]=\alpha\theta^2$.

\begin{definition}[The Laplace Distribution]
\label{def:laplacedistr}
A random variable $X$ has Laplace distribution with location parameter $\mu$ and scale parameter $\lambda>0$, denoted $X\sim Laplace(\mu,\lambda)$, if its density function is
\[
f_{L(\mu,\lambda)}(t)=\frac{e^{-\vert t-\mu\vert/\lambda}}{2\lambda},\quad t\in\mathbb{R}.
\]
If $\mu=0$ we just write $Laplace(\lambda)$. 
\end{definition}

If $X\sim Laplace(\lambda)$, then $\vert X\vert\sim Exp(\lambda)$ and $\E[X]=0$ while $\E[\vert X\vert]=\lambda$. Similarly, $\Var[ X]=2\lambda^2$ while $\Var[ \vert X\vert]=\lambda^2$.\\
The Laplace distribution is infinitely divisible: For $2n$ independent random variables $X_i,~Y_i~\sim~ \Gamma(1/n,\lambda)$, we have $X=\sum_{i=1}^n(\mu/n+X_i-Y_i)\sim Laplace\left(\mu, \lambda\right)$.

\subsection{Differential Privacy}
\label{sec:DP}
Informally, differential privacy promises that an analyst cannot, given a query answer, decide whether the underlying data contains a specific data record or not, and so differential privacy relies on the notion of \emph{neighboring inputs}: datasets $x,y\in\mathcal{X}^d$ are neighbors, denoted $x\doteq y$, if they differ by one data record.
The \emph{sensitivity} of a query quantifies how much the output of the query can differ for neighboring inputs, and so describes how much difference the added noise needs to hide.
\begin{definition}[Sensitivity \citep{DworkMNS06}]
\label{def:sensitivity}
For a real-valued query $q:\mathcal{X}^d\rightarrow\mathbb{R}$, the sensitivity of $q$ is defined as\\ $\max_{x,y\in\mathcal{X}^d:\ x\doteq y}\vert q(x)-q(y)\vert$.
\end{definition}

\begin{definition}[Differential Privacy \citep{DworkMNS06,DworkKMMN06}]
Let $\mathcal{M}$ be a randomized mechanism. For privacy parameters $\varepsilon,\delta\ge 0$, %
we say that $\mathcal{M}$ is $(\varepsilon,\delta)$-differentially private if, for any neighboring inputs $x, y\in \mathcal{X}^d$ and all $S\in Range(\mathcal{M})$ we have 
\[
\Pr[\mathcal{M}(x)\in S]\le e^\varepsilon\Pr[\mathcal{M}(y)\in S]+\delta.
\]
If $\delta=0$ we say that $\mathcal{M}$ is $\varepsilon$-differentially private. 
\end{definition}
For more details about differential privacy, we refer the reader to \citep{Dwork08,DworkR14,Vadhan17}.

\begin{lemma}[The Laplace Mechanism \citep{DworkMNS06}]
\label{lem:lapmech}
For real-valued query $q:\mathcal{X}^d\rightarrow\mathbb{R}$ %
and input $x\in\mathcal{X}^d$, the Laplace mechanism outputs $q(x)+X$ where $X\sim Lap(\lambda)$. If $\Delta$ is the sensitivity of $q$, the Laplace mechanism with parameter $\lambda=\Delta/\varepsilon$ is $\varepsilon$-differentially private.
\end{lemma}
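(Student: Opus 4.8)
The plan is to give the standard argument for the Laplace mechanism, reducing the claim to a pointwise bound on the ratio of two shifted Laplace densities. First I would fix neighboring inputs $x \doteq y$ and write $\Delta_{xy} := |q(x) - q(y)|$; by Definition~\ref{def:sensitivity} we have $\Delta_{xy} \le \Delta$. Since $\mathcal{M}(x)$ outputs $q(x) + X$ with $X \sim Laplace(\lambda)$, the output $\mathcal{M}(x)$ has density $t \mapsto f_{L(\lambda)}(t - q(x))$, and likewise $\mathcal{M}(y)$ has density $t \mapsto f_{L(\lambda)}(t - q(y))$. The goal is then to verify $\Pr[\mathcal{M}(x)\in S] \le e^\varepsilon \Pr[\mathcal{M}(y)\in S]$ for every measurable $S \subseteq \mathbb{R}$, with the reverse inequality following by swapping the roles of $x$ and $y$; together these give $\varepsilon$-differential privacy (with $\delta = 0$).

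The key step is a pointwise density-ratio bound. Writing the probabilities as integrals,
\[
\Pr[\mathcal{M}(x)\in S] = \int_S f_{L(\lambda)}(t - q(x))\,dt, \qquad \Pr[\mathcal{M}(y)\in S] = \int_S f_{L(\lambda)}(t - q(y))\,dt,
\]
I would bound the integrand ratio for every $t \in \mathbb{R}$:
\[
\frac{f_{L(\lambda)}(t - q(x))}{f_{L(\lambda)}(t - q(y))} = \exp\!\left(\frac{|t - q(y)| - |t - q(x)|}{\lambda}\right) \le \exp\!\left(\frac{|q(x) - q(y)|}{\lambda}\right) \le e^{\Delta/\lambda} = e^{\varepsilon},
\]
where the first inequality is the reverse triangle inequality $\bigl||a| - |b|\bigr| \le |a - b|$ applied with $a = t - q(y)$ and $b = t - q(x)$, and the final equality substitutes $\lambda = \Delta/\varepsilon$. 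Because this bound holds for all $t$ simultaneously, integrating over $S$ yields $\Pr[\mathcal{M}(x)\in S] \le e^\varepsilon \Pr[\mathcal{M}(y)\in S]$, completing the argument.

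There is essentially no real obstacle here; the only points requiring a modicum of care are that the pointwise bound must be uniform in $t$ so that it survives integration against an arbitrary $S$, and that one applies the sensitivity bound $|q(x) - q(y)| \le \Delta$ at the right moment. Alternatively, one can observe that the displayed pointwise inequality is exactly the differential privacy constraint (\ref{eq:DPratio}) instantiated with $f_{\mathcal{D}} = f_{L(\lambda)}$: the Laplace density satisfies $e^{-\varepsilon} \le f_{L(\lambda)}(t)/f_{L(\lambda)}(t+a) \le e^{\varepsilon}$ for all $|a| \le \Delta$ precisely because $|a|/\lambda \le \Delta/\lambda = \varepsilon$, so the conclusion also follows from the same reasoning used later (in the proof of Corollary~\ref{cor:mainDP}) to pass from a density-ratio bound to differential privacy.
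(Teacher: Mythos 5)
Your proof is correct and is the standard textbook argument for the Laplace mechanism. Note that the paper does not actually prove this lemma---it is stated in the preliminaries and cited to Dwork, McSherry, Nissim, and Smith (2006)---so there is no proof in the paper to compare against; your pointwise density-ratio bound via the reverse triangle inequality, integrated over an arbitrary measurable set $S$, is exactly the canonical argument, and your closing observation that this is just constraint~(\ref{eq:DPratio}) instantiated at $f_{\mathcal{D}} = f_{L(\lambda)}$ correctly ties it to the reasoning the paper \emph{does} carry out in its proof of Corollary~\ref{cor:mainDP}.
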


\begin{lemma}[The Staircase Mechanism \citep{GengV16}]
\label{lem:staircasemech}
Let $q:\mathbb{R}\rightarrow\mathbb{R}$ be a real-valued query with sensitivity $\Delta$. Let random variable $X\sim SC(\gamma,\Delta)$ have Staircase distribution with parameters $\gamma\in[0,1]$ and $\Delta>0$ such that the density of $X$ is
\[
f_{SC}(t)=\begin{cases}
a(\gamma),\qquad\qquad\qquad\quad\  t\in[0,\gamma\Delta)\\
e^{-\varepsilon}a(\gamma),\qquad\qquad\qquad t\in[\gamma\Delta,\Delta)\\
e^{-k\varepsilon}f_{SC}(t-k\Delta),\qquad t\in[k\Delta, (k+1)\Delta),\ k\in\mathbb{N}\\
f_{SC}(-t),\qquad\qquad\quad\ \ t<0
\end{cases}
\]
where $a(\gamma)=\frac{1-e^{-\varepsilon}}{2\Delta(\gamma+e^{-\varepsilon}(1-\gamma))}$ is a normalization factor.
Then for input $x\in\mathbb{R}$, the Staircase mechanism which outputs $q(x)+X$ where $X\sim SC(\gamma,\Delta)$
is $\varepsilon$-differentially private. %
\end{lemma}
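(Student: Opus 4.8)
The statement asserts that the Staircase mechanism is $\varepsilon$-differentially private, and I would prove this exactly the way Corollary~\ref{cor:mainDP} is handled: reduce $\varepsilon$-DP to the pointwise density-ratio constraint (\ref{eq:DPratio}) for $f_{SC}$, and then verify that constraint directly from the explicit piecewise-constant, geometrically decaying form of $f_{SC}$. Concretely, since the noise $X\sim SC(\gamma,\Delta)$ is added obliviously to $q(x)$, for neighboring inputs $x,y$ (so $|q(x)-q(y)|\le\Delta$) and any measurable $S$, writing $a := q(y)-q(x)$ and substituting $S' = S - q(x)$ gives
\[
\frac{\Pr[\mathcal{M}(x)\in S]}{\Pr[\mathcal{M}(y)\in S]} = \frac{\int_{S'} f_{SC}(z)\,dz}{\int_{S'} f_{SC}(z+a)\,dz} \le \sup_{z\in\mathbb{R}}\frac{f_{SC}(z)}{f_{SC}(z+a)},
\]
with $|a|\le\Delta$, so it suffices to show $e^{-\varepsilon}\le f_{SC}(t)/f_{SC}(t+a)\le e^{\varepsilon}$ for every $t\in\mathbb{R}$ and every $a$ with $|a|\le\Delta$. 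As a preliminary (not needed for the ratio bound, only so that $f_{SC}$ is a genuine density) I would check that $a(\gamma)$ is the right normalization: summing the geometric series of per-period integrals, $\sum_{k\ge 0} e^{-k\varepsilon}\,\Delta\, a(\gamma)\bigl(\gamma + e^{-\varepsilon}(1-\gamma)\bigr)$, and doubling by symmetry, yields exactly $1$ for the stated $a(\gamma)$.

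The engine of the argument is a normal form for $f_{SC}$ on the half-line. Unrolling the recursion $f_{SC}(t) = e^{-k\varepsilon}f_{SC}(t-k\Delta)$, one sees that for $t\ge 0$ the density equals $a(\gamma)\,e^{-k(t)\varepsilon}$, where the ``level'' $k(t)$ is $0$ on $[0,\gamma\Delta)$ and equals $j$ on $[(j-1)\Delta+\gamma\Delta,\ j\Delta+\gamma\Delta)$ for $j\ge 1$; by symmetry $f_{SC}(t) = a(\gamma)\,e^{-k(|t|)\varepsilon}$ for all $t$. The one property I need is that $k$ is nondecreasing and rises by at most $1$ over any interval of length $\le\Delta$: every step beyond the first has width exactly $\Delta$ and the first step has width $\gamma\Delta\le\Delta$ (this is where $\gamma\in[0,1]$ is used), so a shift of at most $\Delta$ crosses at most one step boundary.

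Given this, the ratio bound follows cleanly. Fix $t$ and $a$ with $|a|\le\Delta$; swapping the two arguments if necessary, assume $f_{SC}(t)\ge f_{SC}(t+a)$, i.e.\ $|t|\le|t+a|$, so $e^{-\varepsilon}\le f_{SC}(t)/f_{SC}(t+a)$ is automatic. For the upper bound, set $s := |t|$ and $s' := |t+a|$; then $0\le s\le s'$ and, crucially, $s'\le |t|+|a|\le s+\Delta$ by the triangle inequality, so by the normal form $f_{SC}(t)/f_{SC}(t+a) = e^{(k(s')-k(s))\varepsilon}$ with $0\le k(s')-k(s)\le 1$, hence at most $e^{\varepsilon}$. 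Combining the two directions gives (\ref{eq:DPratio}) for $f_{SC}$, and with the reduction in the first paragraph this yields $\varepsilon$-differential privacy.

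The proof is essentially bookkeeping, so there is no deep obstacle; the one step that requires a little care is the last one: reducing the ratio $f_{SC}(t)/f_{SC}(t+a)$ with a shift $a$ that may straddle $0$ to a comparison of $f_{SC}$ at the two \emph{magnitudes} $|t|$ and $|t+a|$, and checking that their gap is still at most $\Delta$. Handling this via $|t+a|\le|t|+|a|$ is precisely what avoids an annoying case analysis around the asymmetry created by the central step of width $\gamma\Delta$ (e.g.\ when $\gamma<\tfrac12$). After that, everything is driven by the single fact that the staircase ``level'' increases by at most one per $\Delta$-shift.
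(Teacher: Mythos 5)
The paper does not prove Lemma~\ref{lem:staircasemech}: it is stated in Appendix~\ref{sec:preliminaries} with a citation to \citep{GengV16}, so there is no in-paper argument to compare against. Your proof is correct and self-contained, and its reduction step (privacy follows once the density satisfies the pointwise ratio bound~(\ref{eq:DPratio})) is the same one the paper uses for the Arete mechanism in Corollary~\ref{cor:mainDP}. The combinatorial heart --- unrolling the recursion to $f_{SC}(t)=a(\gamma)e^{-k(|t|)\varepsilon}$ with a nondecreasing level $k$ that changes by at most one across any $\Delta$-shift, because consecutive step boundaries are exactly $\Delta$ apart and the central step has width $\gamma\Delta\le\Delta$ --- is identified and argued correctly, and the normalization check for $a(\gamma)$ is right.

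One small tightening: ``$f_{SC}(t)\ge f_{SC}(t+a)$, i.e.\ $|t|\le|t+a|$'' is not an equivalence, since $f_{SC}$ is only weakly monotone --- one can have $f_{SC}(t)=f_{SC}(t+a)$ with $|t|>|t+a|$. The argument still closes (in that case $k(|t|)=k(|t+a|)$ and the ratio is $1$), but it reads cleaner if you instead WLOG $|t|\le|t+a|$ directly, replacing $(t,a)$ by $(t+a,-a)$ otherwise; this preserves $|a|\le\Delta$, inverts the ratio, and handles both directions of~(\ref{eq:DPratio}) at once.
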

For optimal parameter $\gamma$, the Staircase mechanism achieves expected absolute error $\Theta(\Delta e^{-\varepsilon/2})$ and variance $\Theta(\Delta^2e^{-2\varepsilon/3})$. We remark that the $\gamma$ optimizing for expected magnitude of the noise is not the same as the $\gamma$ optimizing for variance.

\section{Omitted Proofs for Technical Results}
\label{app:omittedproofs}
\subsection*{Supporting lemmas}
\begin{lemma}
\label{lem:maximizingfraction}
Let $\DGam$ be as in Lemma \ref{lem:boundsonGammaGamma} and $\Delta>0$.
Assume $0<\alpha<1$, $1/\lambda-1/\theta\ge \frac{1}{\kappa+\Delta+\DGam}.$
Then $\forall t\ge \kappa\ge 0$
\[
\frac{(\DGam+\Delta+ t)^{1-\alpha}}{e^{ t(1/\lambda-1/\theta)}}\le \frac{(\DGam+\Delta+ \kappa)^{1-\alpha}}{e^{\kappa(1/\lambda-1/\theta)}}.
\]
\end{lemma}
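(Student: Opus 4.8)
The plan is to analyze the single-variable function
\[
h(t):=\frac{(\DGam+\Delta+t)^{1-\alpha}}{e^{t(1/\lambda-1/\theta)}}
\]
on the interval $t\ge\kappa\ge 0$ and show it is non-increasing there; the claim then follows by comparing $h(t)$ with $h(\kappa)$. Since the numerator is a positive power of a positive quantity and the denominator is a positive exponential, $h$ is differentiable and positive, so it suffices to show $h'(t)\le 0$, equivalently that $(\ln h)'(t)\le 0$, on $[\kappa,\infty)$.

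First I would write $c:=1/\lambda-1/\theta>0$ (positivity of $c$ follows from the hypothesis $1/\lambda-1/\theta\ge 1/(\kappa+\Delta+\DGam)>0$) and compute
\[
(\ln h)'(t)=\frac{1-\alpha}{\DGam+\Delta+t}-c .
\]
Because $0<\alpha<1$, the first term is positive but strictly decreasing in $t$, so $(\ln h)'$ is itself decreasing; hence it is enough to verify $(\ln h)'(\kappa)\le 0$, i.e.
\[
\frac{1-\alpha}{\DGam+\Delta+\kappa}\le c .
\]
This is immediate from the assumption, since $1-\alpha<1$ and $c\ge \frac{1}{\kappa+\Delta+\DGam}\ge \frac{1-\alpha}{\kappa+\Delta+\DGam}$. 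Therefore $(\ln h)'(t)\le(\ln h)'(\kappa)\le 0$ for all $t\ge\kappa$, so $\ln h$ is non-increasing on $[\kappa,\infty)$, hence so is $h$, giving $h(t)\le h(\kappa)$ for all $t\ge\kappa$, which is exactly the claimed inequality.

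There is no real obstacle here: the only point requiring a little care is checking that the hypothesis $1/\lambda-1/\theta\ge 1/(\kappa+\Delta+\DGam)$ is precisely what is needed to kill the derivative at the left endpoint, together with the observation that the log-derivative is monotone (so the endpoint check suffices). An alternative, fully equivalent route avoiding calculus would be to reduce to showing $g(s):=(1-\alpha)\ln s - s/(\kappa+\Delta+\DGam)\le$ its value at $s=\kappa+\Delta+\DGam$ by concavity of $\ln$, but the derivative computation above is the cleanest.
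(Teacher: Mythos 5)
Your proposal is correct and takes essentially the same approach as the paper: the paper solves $(\ln g)'(t)=0$ to locate the critical point $t^*=\frac{1-\alpha}{1/\lambda-1/\theta}-(\DGam+\Delta)$ and then verifies $\kappa\ge t^*$ from the hypothesis, while you check directly that $(\ln h)'$ is non-positive on $[\kappa,\infty)$ — two phrasings of the same calculus argument.
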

\begin{proof}
The function
\[
g(t)=\frac{(\DGam+\Delta+ t)^{1-\alpha}}{e^{ t(1/\lambda-1/\theta)}}
\]
maximized for
\[
t^*=\frac{1-\alpha-(1/\lambda-1/\theta)(\DGam+\Delta)}{1/\lambda-1/\theta}=\frac{1-\alpha}{1/\lambda-1/\theta}-(\DGam+\Delta),\qquad 1/\lambda-1/\theta>0,\quad 0<\alpha<1,
\]
and monotonely decreasing for $t\ge t^*$.
By assumption
\[
\kappa\ge \frac{1}{1/\lambda-1/\theta}-(\Delta+\DGam)\ge t^*.
\]
and so $g(\kappa)\le g(t^*)$. Furthermore, for all $t\ge \kappa$
\[
g(t)=\frac{(\DGam+\Delta+ t)^{1-\alpha}}{e^{ t(1/\lambda-1/\theta)}}\le \frac{(\DGam+\Delta+ \kappa)^{1-\alpha}}{e^{\kappa(1/\lambda-1/\theta)}}=g(\kappa).
\]
\end{proof}

\subsection{Proof of Lemma \ref{lem:DPratioLarget}}
\label{app:prooflemLarget}
\dpratioLarget*
\begin{proof}[Proof of Lemma \ref{lem:DPratioLarget}] Suppose $\Du\le \vert t\vert$. By Lemma \ref{lem:boundsonGammaGamma}
\begin{align}
\label{eq:upperboundnumintegral}
    \frac{f_{A}(t)}{f_{A}(t+\Delta)}&=\frac{\int_{-\infty}^{\infty}f_{\Gamma-\Gamma}(t-x)f_{L}(x)dx}{\int_{-\infty}^{\infty}f_{\Gamma-\Gamma}(t+\Delta-x)f_{L}(x)dx}\le \frac{\int_{-\infty}^{\infty}f_{\Gamma}(\vert t-x\vert)f_{L}(x)dx}{\int_{-\infty}^{\infty}f_{\Gamma}(\vert t+\Delta-x\vert+\DGam)c_{\DGam}f_{L}(x)dx}.
\end{align}
Note that $\vert t+\Delta-x\vert\le \vert t-x\vert +\Delta$ and 
\begin{align}
\label{eq:rewritetminusx}
\vert t-x\vert^{\alpha-1}=(\vert t-x\vert+\DGam+\Delta)^{\alpha-1}\left(1+\frac{\Delta+\DGam}{\vert t-x\vert}\right)^{1-\alpha}.
\end{align}
So filling in the density $f_\Gamma$ and applying (\ref{eq:rewritetminusx}), we can write (\ref{eq:upperboundnumintegral}) as
\begin{align*}
    \frac{f_{A}(t)}{f_{A}(t+\Delta)}&\le \frac{\int_{-\infty}^{\infty}\frac{1}{\Gamma(\alpha)\theta^\alpha}e^{-\vert t-x\vert/\theta}(\vert t-x\vert+\DGam+\Delta)^{\alpha-1}\left(1+\frac{\Delta+\DGam}{\vert t-x\vert}\right)^{1-\alpha}f_{L}(x)dx}{c_{\DGam}\int_{-\infty}^{\infty}f_{\Gamma}(\vert t-x\vert+\Delta+\DGam)f_{L}(x)dx}.
\end{align*}
Since
$\left(1+\frac{\Delta+\DGam}{\vert t-x\vert}\right)^{1-\alpha}$ is maximized for $x\rightarrow t$, we can bound this term as long as $x$ is not too close to $t$. Hence, rewind to equation (\ref{eq:upperboundnumintegral}) and treat the cases where $x$ is far from $t$ and $x$ is close to $t$ separately by splitting the numerator from (\ref{eq:upperboundnumintegral}) at the intervals $x\in(-\infty, t-\Du)\cup(t+\Du,\infty)$ and $x\in(t-\Du,t+\Du)$ (these intervals are well-defined since $\Du>0$):
\begin{align*}
    \frac{f_{A}(t)}{f_{A}(t+\Delta)}&\le  \frac{\int_{-\infty}^{t-\Du}f_{\Gamma}(\vert t-x\vert)f_{L}(x)dx+\int_{t+\Du}^\infty f_{\Gamma}(\vert t-x\vert)f_{L}(x)dx}{c_{\DGam}\int_{-\infty}^{\infty}f_{\Gamma}(\vert t-x\vert+\Delta+\DGam)f_{L}(x)dx}\\
	& \qquad+ \frac{\int_{ t-\Du}^{t+\Du}f_{\Gamma}(\vert t-x\vert)f_{L}(x)dx}{c_{\DGam}\int_{-\infty}^{\infty}f_{\Gamma}(\vert t+\Delta-x\vert+\DGam)f_{L}(x)dx}\\
    &=\frac{\int_{-\infty}^{t-\Du}e^{-\vert t-x\vert/\theta}(\vert t-x\vert+\DGam+\Delta)^{\alpha-1}\left(1+\frac{\Delta+\DGam}{\vert t-x\vert}\right)^{1-\alpha}f_{L}(x)dx}{c_{\DGam}\int_{-\infty}^{\infty}e^{-(\vert t-x\vert+\Delta+\DGam)/\theta}(\vert t-x\vert+\Delta+\DGam)^{\alpha-1}f_{L}(x)dx}\\
    &\qquad+\frac{\int_{ t+\Du}^\infty e^{-\vert t-x\vert/\theta}(\vert t-x\vert+\DGam+\Delta)^{\alpha-1}\left(1+\frac{\Delta+\DGam}{\vert t-x\vert}\right)^{1-\alpha}f_{L}(x)dx}{c_{\DGam}\int_{-\infty}^{\infty}e^{-(\vert t-x\vert+\Delta+\DGam)/\theta}(\vert t-x\vert+\Delta+\DGam)^{\alpha-1}f_{L}(x)dx}\\
    &\qquad+\frac{\int_{t-\Du}^{ t+\Du}f_{\Gamma}(\vert t-x\vert)f_{L}(x)dx}{c_{\DGam}\int_{-\infty}^{\infty}f_{\Gamma}(\vert t+\Delta-x\vert+\DGam)f_{L}(x)dx}
\end{align*}
where we in the last step again filled in the density function $f_\Gamma$ and applied (\ref{eq:rewritetminusx}) in the first two terms and left the last term as it was. Note that the constant $\Gamma(\alpha)\theta^\alpha$ from the density $f_\Gamma$ cancels out in the fraction.

Now (still leaving the last term alone), for the first two terms upper bound the factor \[\left(1+\frac{\Delta+\DGam}{\vert t-x\vert}\right)^{1-\alpha}\le \left(1+\frac{\Delta+\DGam}{\Du}\right)^{1-\alpha}\] and pull out $e^{(\Delta+\DGam)/\theta}$ from the denominator, to see that
\begin{align*}
    &\frac{f_{A}(t)}{f_{A}(t+\Delta)}\\
    &\le e^{(\Delta+\DGam)/\theta}\left(1+\tfrac{\Delta+\DGam}{\Du}\right)^{1-\alpha}\frac{\left(\int_{-\infty}^{t-\Du}\frac{(\vert t-x\vert+\DGam+\Delta)^{\alpha-1}}{e^{\vert t-x\vert/\theta}}f_{L}(x)dx+\int_{ t+\Du}^\infty \frac{(\vert t-x\vert+\DGam+\Delta)^{\alpha-1}}{e^{\vert t-x\vert/\theta}}f_{L}(x)dx\right)}{c_{\DGam}\int_{-\infty}^{\infty}\frac{(\vert t-x\vert+\DGam+\Delta)^{\alpha-1}}{e^{\vert t-x\vert/\theta}}f_{L}(x)dx}\\
    &\qquad+\frac{\int_{t-\Du}^{ t+\Du}f_{\Gamma}(\vert t-x\vert)f_{L}(x)dx}{c_{\DGam}\int_{-\infty}^{\infty}f_{\Gamma}(\vert t+\Delta-x\vert+\DGam)f_{L}(x)dx}\\
    &\le \frac{1}{c_{\DGam}}\left(e^{(\Delta+\DGam)/\theta}\left(1+\frac{\Delta+\DGam}{\Du}\right)+\frac{\int_{t-\Du}^{ t+\Du}f_{\Gamma}(\vert t-x\vert)f_{L}(x)dx}{\int_{-\infty}^{\infty}f_{\Gamma}(\vert t+\Delta-x\vert+\DGam)f_{L}(x)dx}\right)
\end{align*}

\noindent
where in the last step we upper bounded the fraction in the first term by $1/c_{\DGam}$ and removed the $(1-\alpha)$-exponent for simpler notation.
The following claim handles the last term and finishes the proof:
\begin{claim}
Let $\Du>0$ be given. Let $\DGam, c_{\DGam}$ be as in Lemma \ref{lem:boundsonGammaGamma} and $c_L$ as in Lemma \ref{lem:lowerboundArete}.
Assume $\vert t\vert \ge \Du$, $\vert t\vert\le \vert t+\Delta\vert$, $\lambda\le \Delta/\ln(2)$ and $1/\lambda-1/\theta\ge 1/(\DGam+\Delta+\Du)$. Then
\begin{align*}
    \frac{\int_{t-\Du}^{ t+\Du}f_{\Gamma}(\vert t-x\vert)f_L(x)dx}{\int_{-\infty}^{\infty}f_{\Gamma}(\DGam+\vert t+1-x\vert)f_L(x)dx}\le\frac{c_{\Du}\Gamma(\alpha)\theta^\alpha}{2\lambda c_L}e^{(\DGam+\Du+\Delta)/\theta}(\DGam+\Delta+\Du)^{1-\alpha}
\end{align*}
where 
\[
c_{\Du}:=2\int_{0}^{\Du}f_{\Gamma}(x)dx.
\]
\end{claim}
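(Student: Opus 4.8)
The plan is to bound the numerator from above and the denominator from below, both in terms of $\Gamma$-density integrals, and then compare. For the numerator, on the interval $x\in(t-\Du,t+\Du)$ the Laplace factor satisfies $f_L(x)\le 1/(2\lambda)$ trivially, so $\int_{t-\Du}^{t+\Du}f_{\Gamma}(\vert t-x\vert)f_L(x)\,dx\le \frac{1}{2\lambda}\int_{t-\Du}^{t+\Du}f_{\Gamma}(\vert t-x\vert)\,dx = \frac{1}{2\lambda}\cdot 2\int_0^{\Du}f_{\Gamma}(u)\,du=\frac{c_{\Du}}{2\lambda}$, after substituting $u=\vert t-x\vert$ and using that the Gamma density is supported on the positive reals. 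This disposes of the numerator cleanly.

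For the denominator, I would restrict the integral to a well-chosen sub-interval of $x$ where I can lower bound $f_{\Gamma}(\DGam+\vert t+\Delta-x\vert)$ and $f_L(x)$ simultaneously. The natural choice, mirroring the proof of Lemma~\ref{lem:lowerboundArete}, is to integrate $x$ over an interval around zero — say $x\in(0,2(t+\Delta))$ or a subinterval of length $\Delta$ — where $\int f_L(x)\,dx\ge c_L=1/4$ by the median argument (valid since $\lambda\le\Delta/\ln 2$). On that interval $\vert t+\Delta-x\vert$ stays bounded by roughly $\Du+\Delta$ (using $\vert t\vert\ge\Du$ and $\vert t\vert\le\vert t+\Delta\vert$, i.e.\ $t\ge-\Delta/2$, so $t+\Delta\ge\Delta/2$ and $t+\Delta\le \vert t\vert+\Delta$), and hence $\DGam+\vert t+\Delta-x\vert\le \DGam+\Du+\Delta$. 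Then I lower bound the Gamma density at this point: writing out $f_{\Gamma}(s)=\frac{e^{-s/\theta}s^{\alpha-1}}{\Gamma(\alpha)\theta^\alpha}$ and using $e^{-s/\theta}\ge e^{-(\DGam+\Du+\Delta)/\theta}$ together with $s^{\alpha-1}\ge(\DGam+\Du+\Delta)^{\alpha-1}$ (since $\alpha-1<0$ makes $s^{\alpha-1}$ decreasing in $s$, and $s\le \DGam+\Du+\Delta$), gives the denominator a lower bound of the form $\frac{e^{-(\DGam+\Du+\Delta)/\theta}(\DGam+\Du+\Delta)^{\alpha-1}}{\Gamma(\alpha)\theta^\alpha}\cdot c_L$.

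Dividing the numerator bound $\frac{c_{\Du}}{2\lambda}$ by this denominator bound yields exactly $\frac{c_{\Du}\Gamma(\alpha)\theta^\alpha}{2\lambda c_L}e^{(\DGam+\Du+\Delta)/\theta}(\DGam+\Du+\Delta)^{1-\alpha}$, which is the claimed expression (noting $(\DGam+\Du+\Delta)^{1-\alpha}=1/(\DGam+\Du+\Delta)^{\alpha-1}$). The main obstacle I anticipate is getting the pointwise lower bound on $f_{\Gamma}(\DGam+\vert t+\Delta-x\vert)$ to hold \emph{uniformly} over the chosen $x$-interval without losing more than a constant factor — in particular, one must be careful that the argument $\DGam+\vert t+\Delta-x\vert$ is bounded away from $0$ (so the $s^{\alpha-1}$ factor doesn't blow up in the wrong direction) and bounded above by $\DGam+\Du+\Delta$; the hypotheses $\vert t\vert\ge\Du$, $t\ge-\Delta/2$ and the positivity of $\DGam$ are precisely what make both bounds work. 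The hypothesis $1/\lambda-1/\theta\ge 1/(\DGam+\Delta+\Du)$ is presumably needed only to justify discarding the far-from-$t$ contributions earlier in Lemma~\ref{lem:DPratioLarget}, and may not be essential to this particular claim, but I would keep it to match the lemma's statement.
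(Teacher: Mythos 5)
Your proposal has a genuine gap, concentrated in the denominator bound, and it propagates to the whole argument.

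For the numerator you use the global bound $f_L(x)\le 1/(2\lambda)$, giving $c_{\Du}/(2\lambda)$ — a quantity that does not decay with $|t|$. For the denominator you try to show $\int f_\Gamma(\DGam+|t+\Delta-x|)f_L(x)\,dx \ge f_\Gamma(\DGam+\Du+\Delta)\,c_L$, again a quantity with no $|t|$-dependence. But the denominator integral genuinely decays exponentially in $|t|$: when $|t|$ is large, $f_L(x)$ is concentrated near $0$ while $f_\Gamma(\DGam+|t+\Delta-x|)$ is concentrated near $x=t+\Delta$, and these cannot both be large simultaneously. Your claimed pointwise bound ``$|t+\Delta-x|\le\Du+\Delta$ on an $x$-interval near $0$'' is false once $t>\Du$ (e.g.\ $t=100\Du$ gives $|t+\Delta-x|\approx t+\Delta$ at $x=0$, far exceeding $\Du+\Delta$); the estimate $t+\Delta\le |t|+\Delta$ is true but useless because $|t|$ is unbounded in the regime $|t|\ge\Du$ that this claim covers. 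So the chain ``numerator $\le$ constant, denominator $\ge$ constant'' cannot be made to work.

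The paper keeps the $|t|$-dependence on both sides: the numerator is bounded by $f_L(|t|-\Du)\,c_{\Du}$ (since the interval $(t-\Du,t+\Du)$ misses $0$, the Laplace density on it is maximized at $|t|-\Du$), and the denominator is lower-bounded by $f_\Gamma(\DGam+|t+\Delta|)\,c_L$ via Lemma~\ref{lem:lowerboundArete}. The resulting ratio is a function of $|t|$ of the form $e^{\Du/\lambda}e^{(\DGam+\Delta)/\theta}\,(\DGam+\Delta+|t|)^{1-\alpha}\,e^{-|t|(1/\lambda-1/\theta)}$ (up to the explicit constant), and the hypothesis $1/\lambda-1/\theta\ge 1/(\DGam+\Delta+\Du)$ — which you dismissed as ``presumably not essential to this claim'' — is exactly what Lemma~\ref{lem:maximizingfraction} needs to conclude that this function is maximized at $|t|=\Du$. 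Evaluating at $|t|=\Du$ yields the stated bound. In short: the claim is true because the ratio is maximized at $|t|=\Du$, not because numerator and denominator are separately bounded by $|t|$-independent constants; the hypothesis you waved away is the load-bearing one. (As an aside, the ``$|t+1-x|$'' in the statement is a typo in the paper for $|t+\Delta-x|$.)
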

\end{proof}

\begin{proof}[Proof of Claim]
As $\vert t\vert>\Du$,  $\{0\}\not\in(t-\Du,t+\Du)$ and so $f_L(x)$ is maximal at
\[x~=~\min\{\vert t-\Du\vert,\vert t+\Du\vert\}~=~\min\{\vert t\vert+\Du,\vert t\vert-\Du\}~=~\vert t\vert-\Du.\] Hence
\begin{align}
\label{eq:upperboundclaim}
     \int_{t-\Du}^{ t+\Du}f_{\Gamma}(\vert t-x\vert)f_L(x)dx
     \le f_{L}( \vert t\vert-\Du)\int_{ t-\Du}^{ t+\Du}f_{\Gamma}(\vert t-x\vert)dx=f_{L}(\vert t\vert-\Du)c_{\Du}
\end{align}
where we defined
\[
c_{\Du}:=\int_{ t-\Du}^{ t+\Du}f_{\Gamma}(\vert t-x\vert)dx=\int_{-\Du}^{\Du}f_{\Gamma}(\vert x\vert)dx=2\int_{0}^{\Du}f_{\Gamma}(x)dx.
\]
Now, consider 
\[
\frac{\int_{t-\Du}^{ t+\Du}f_{\Gamma}(\vert t-x\vert)f_L(x)dx}{\int_{-\infty}^{\infty}f_{\Gamma}(\DGam+\vert t+\Delta-x\vert)f_L(x)dx}.
\]
Recalling the assumptions $\vert t\vert\le \vert t+\Delta\vert$ and $\lambda\le \Delta/\ln(2)$, apply Lemma \ref{lem:lowerboundArete} in the denominator and (\ref{eq:upperboundclaim}) in the numerator, we get
\begin{align*}
    \frac{\int_{t-\Du}^{ t+\Du}f_{\Gamma}(\vert t-x\vert)f_L(x)dx}{\int_{-\infty}^{\infty}f_{\Gamma}(\DGam+\vert t+\Delta-x\vert)f_L(x)dx}&\le \frac{f_{L}( \vert t\vert-\Du)c_{\Du}}{f_{\Gamma}(\DGam+\vert t+\Delta\vert)c_{L}}\stackrel{(\ast)}\le \frac{c_{\Du}e^{-\vert \vert t\vert-\Du\vert/\lambda}\Gamma(\alpha)\theta^\alpha}{2\lambda c_Le^{-(\DGam+\Delta+ \vert t\vert)/\theta}(\DGam+\Delta+\vert t\vert)^{\alpha-1}}\\
    &\le \frac{c_{\Du}\Gamma(\alpha)\theta^\alpha}{2\lambda c_L}e^{\Du/\lambda}e^{(\DGam+\Delta)/\theta}\frac{(\DGam+\Delta+ \vert t\vert)^{1-\alpha}}{e^{ \vert t\vert(1/\lambda-1/\theta)}}
\end{align*}
where we at $(\ast)$ filled in the density functions $f_\Gamma$ and $f_L$ and used that $\vert t+\Delta\vert\le \vert t\vert +\Delta$. In the last step, recall $\vert t\vert>\Du$, so $\vert t\vert-\Du>0$. Applying Lemma \ref{lem:maximizingfraction} (recall $0<\alpha<1$ and the assumption $1/\lambda-1/\theta\ge 1/(\Delta+\DGam+\Du)$) with $\kappa=\Du$, we get
\begin{align*}
    \frac{\int_{t-\Du}^{ t+\Du}f_{\Gamma}(\vert t-x\vert)f_L(x)dx}{\int_{-\infty}^{\infty}f_{\Gamma}(\DGam+\vert t+\Delta-x\vert)f_L(x)dx}&\le \frac{c_{\Du}\Gamma(\alpha)\theta^\alpha}{2\lambda c_L}e^{\Du/\lambda}e^{(\DGam+\Delta)/\theta}\frac{(\DGam+\Delta+\Du)^{1-\alpha}}{e^{\Du(1/\lambda-1/\theta)}}\\&=\frac{c_{\Du}\Gamma(\alpha)\theta^\alpha}{2\lambda c_L}e^{(\DGam+\Du+\Delta)/\theta}(\DGam+\Delta+\Du)^{1-\alpha}.
\end{align*}
\end{proof}

\subsection{Proof of Lemma \ref{lem:DPratioSmallt}}
\label{app:prooflemSmallt}
\dpratioSmallt*
\begin{proof}[Proof of Lemma \ref{lem:DPratioSmallt}]
Suppose $\vert t\vert <\Du$.
By Lemmas \ref{lem:integralsymmetricfunctions} and \ref{lem:boundsonGammaGamma} we have
\[
f_A(t)=f_A(\vert t\vert)=\int_{-\infty}^{\infty}f_{\Gamma-\Gamma}(x)f_L(\vert t\vert-x)dx\le \int_{-\infty}^{\infty}f_{\Gamma}(\vert x\vert)f_L( \vert t\vert-x)dx.
\]
Note that $f_L(\vert t\vert-x)\le f_L(t)$ when $\vert \vert t\vert -x\vert\ge \vert t\vert$, which is satisfied whenever $ x\not\in(0,2\vert t\vert)$.
\begin{align*}
    f_A(t)&=f_A(\vert t\vert)=\int_{-\infty}^{\infty}f_{\Gamma-\Gamma}(x)f_L(\vert t\vert-x)dx\le \int_{-\infty}^{\infty}f_{\Gamma}(\vert x\vert)f_L( \vert t\vert-x)dx\\&=\int_{-\infty}^{0}f_{\Gamma}(\vert x\vert)f_L( \vert t\vert-x)dx+\int_{0}^{2 \vert t\vert}f_{\Gamma}(\vert x\vert)f_L(\vert t\vert-x)dx+\int_{2 \vert t\vert}^{\infty}f_{\Gamma}(\vert x\vert)f_L(\vert t\vert-x)dx\\&\le f_L(t)\left(\int_{-\infty}^{0}f_{\Gamma}(\vert x\vert)dx+\int_{2 \vert t\vert}^{\infty}f_{\Gamma}(\vert x\vert)dx\right)+\int_{0}^{2 \vert t\vert}f_{\Gamma}(\vert x\vert)f_L(\vert t\vert-x)dx\\&\stackrel{(\ast)}\le 2f_L(t)+2\int_{0}^{\vert t\vert}f_{\Gamma}(\vert x\vert)f_L( \vert t\vert-x)dx\\&=2f_L(t)+\frac{2}{\Gamma(\alpha)\theta^\alpha2\lambda}\int_{0}^{\vert t\vert}e^{- x/\theta}x^{\alpha-1}e^{-\vert \vert t\vert-x\vert/\lambda}dx
    \\&=2f_L(t)+\frac{1}{\Gamma(\alpha)\theta^\alpha\lambda}e^{- \vert t\vert/\lambda}\int_{0}^{\vert t\vert}e^{x(1/\lambda- 1/\theta)}x^{\alpha-1}dx\\&\le 2f_L(t)+\frac{1}{\Gamma(\alpha)\theta^\alpha\lambda}e^{- \vert t\vert/\theta}\int_{0}^{\vert t\vert}x^{\alpha-1}dx\\&= 2f_L(t)+\frac{1}{\Gamma(\alpha)\theta^\alpha\lambda}e^{-\vert t\vert/\theta}\frac{\vert t\vert^\alpha}{\alpha}
\end{align*}
At $(\ast)$ we use that $f_\Gamma(\vert x\vert)$ is smaller on $(t,2t)$ than on $(0,t)$. In the last step we used that
\[
\int_{0}^\kappa x^{n}dx=\frac{\kappa^{n+1}}{n+1},\qquad n\neq -1.
\]

Recalling the assumptions $\vert t\vert\le \vert t+\Delta\vert$ and $\lambda\le \Delta/\ln(2)$, apply Lemma \ref{lem:lowerboundArete} to get 
\begin{align*}
    \frac{f_{A}(t)}{f_{A}(t+\Delta)}
    &\le \frac{2f_L(t)+\frac{1}{\Gamma(\alpha)\theta^\alpha\lambda}e^{-\vert t\vert/\theta}\frac{ \vert t\vert^\alpha}{\alpha}}{c_Lc_{\DGam}f_{\Gamma}(\DGam+\vert t+\Delta\vert)}=\tfrac{1}{c_Lc_{\DGam}}\left(\frac{2f_L(t)}{f_{\Gamma}(\DGam+\vert t+\Delta\vert)}+\tfrac{1}{\Gamma(\alpha)\theta^\alpha\lambda}\frac{e^{- \vert t\vert/\theta}\frac{ \vert t\vert^\alpha}{\alpha}}{f_{\Gamma}(\DGam+\vert t+\Delta\vert)}\right)\\
    &\stackrel{(\ast)}=\tfrac{1}{c_Lc_{\DGam}}\left(\frac{2e^{-\vert t\vert/\lambda}\Gamma(\alpha)\theta^\alpha}{2\lambda e^{-(\DGam+ \vert t+\Delta\vert)/\theta}(\DGam+ \vert t+\Delta\vert)^{\alpha-1}}+\tfrac{\Gamma(\alpha)\theta^\alpha}{\Gamma(\alpha)\theta^\alpha\lambda}\frac{e^{- \vert t\vert/\theta}\frac{ \vert t\vert^\alpha}{\alpha}}{e^{-(\DGam+\vert t+\Delta\vert)/\theta}(\DGam+\vert t+\Delta\vert)^{\alpha-1}}\right)\\
    &\stackrel{(\ast\ast)}\le\tfrac{1}{\lambda c_Lc_{\DGam}}\left(\frac{e^{-\vert t\vert/\lambda}\Gamma(\alpha)\theta^\alpha}{ e^{-(\DGam+ \vert t\vert+\Delta)/\theta}(\DGam+ \vert t\vert+\Delta)^{\alpha-1}}+\frac{e^{- \vert t\vert/\theta}\frac{ \vert t\vert^\alpha}{\alpha}}{e^{-(\DGam+\vert t\vert+\Delta)/\theta}(\DGam+\vert t\vert+\Delta)^{\alpha-1}}\right)\\
    &=\frac{e^{(\DGam+\Delta)/\theta}}{\lambda c_Lc_{\DGam}}\left(\frac{e^{-\vert t\vert/\lambda}\Gamma(\alpha)\theta^\alpha}{ e^{-\vert t\vert/\theta}(\DGam+ \vert t\vert+\Delta)^{\alpha-1}}+\frac{e^{- \vert t\vert/\theta}\frac{ \vert t\vert^\alpha}{\alpha}}{e^{-\vert t\vert/\theta}(\DGam+\vert t\vert+\Delta)^{\alpha-1}}\right)\\
    &=\frac{e^{(\DGam+\Delta)/\theta}}{\lambda c_Lc_{\DGam}}\left(\Gamma(\alpha)\theta^\alpha\frac{(\DGam+ \vert t\vert+\Delta)^{1-\alpha}}{ e^{\vert t\vert(1/\lambda-1/\theta)}}+(\DGam+\vert t\vert+\Delta)^{1-\alpha}\frac{ \vert t\vert^\alpha}{\alpha}\right).
\end{align*}
where we at $(\ast)$ filled in the density functions and at $(\ast\ast)$ used that $\vert t+\Delta\vert\le \vert t\vert +\Delta$.

Using the identity $\vert t\vert^\alpha=\vert t\vert/\vert t\vert^{1-\alpha}$ we see that
\begin{align*}
    \frac{f_{A}(t)}{f_{A}(t+\Delta)}
    &\le \frac{e^{(\DGam+\Delta)/\theta}}{\lambda c_Lc_{\DGam}}\left(\Gamma(\alpha)\theta^\alpha\frac{(\DGam+ \vert t\vert+\Delta)^{1-\alpha}}{ e^{\vert t\vert(1/\lambda-1/\theta)}}+\frac{\vert t\vert}{\alpha}\left(\frac{\DGam+\vert t\vert+\Delta}{\vert t\vert}\right)^{1-\alpha}\right)\\
    &\le \frac{e^{(\DGam+\Delta)/\theta}}{\lambda c_Lc_{\DGam}}\left(\Gamma(\alpha)\theta^\alpha\frac{(\DGam+ \vert t\vert+\Delta)^{1-\alpha}}{ e^{\vert t\vert(1/\lambda-1/\theta)}}+\frac{\DGam+\vert t\vert+\Delta}{\alpha}\right)\\
    &\le \frac{e^{(\DGam+\Delta)/\theta}}{\lambda c_Lc_{\DGam}}\left(\Gamma(\alpha)\theta^\alpha\frac{(\DGam+ \vert t\vert+\Delta)^{1-\alpha}}{ e^{\vert t\vert(1/\lambda-1/\theta)}}+\frac{\DGam+\Du+\Delta}{\alpha}\right),
\end{align*}
where in the last two steps, we removed the $(1-\alpha)$-exponent on the last term and used that $ \vert t\vert\le \Du$.
Finally applying Lemma \ref{lem:maximizingfraction} (recall $0<\alpha<1$ and the assumption $1/\lambda-1/\theta\ge 1/(\Delta+\DGam)$) with $\kappa =0$ finishes the proof:
\begin{align*}
    \frac{f_{A}(t)}{f_{A}(t+\Delta)}
    &\le \frac{e^{(\DGam+\Delta)/\theta}}{\lambda c_Lc_{\DGam}}\left(\Gamma(\alpha)\theta^\alpha(\DGam+\Delta)^{1-\alpha}+\frac{\DGam+\Du+\Delta}{\alpha}\right).
\end{align*}
\end{proof}

\subsection{Proof of Lemma \ref{lem:DPratioGeneralt}}
\label{app:prooflemGeneralt}
\dpratioGeneralt*
\begin{proof}
We first give the intuition behind the proof:
Lemmas \ref{lem:DPratioLarget} and \ref{lem:DPratioSmallt} give upper bounds on the ratio for certain values of $t$, assuming $1/\lambda-1/\theta\ge \max\{1/(\Du+\DGam+\Delta),1/(\DGam+\Delta) \}=1/(\DGam+\Delta)$. An upper bound on both of these bounds simultaneously gives us a bound on the ratio, which holds for general $t>-\Delta/2$. 
We note that 
\begin{align*}
    1/\lambda-1/\theta\ge 1/(\DGam+\Delta)\quad \Leftrightarrow\quad \lambda\le \frac{\theta}{\frac{\theta}{\DGam+\Delta}+1},
\end{align*}
so if $\theta\le \DGam+\Delta$, then $\lambda\le \theta/2$ suffices.
Hence, our assumptions $\lambda\le \theta/2$, $\lambda\le \Delta/\ln(2)$, $\theta\le \DGam+\Delta$ and $t>-\Delta/2$ ensure that we can use Lemmas \ref{lem:DPratioLarget} and \ref{lem:DPratioSmallt}.
\medskip

So, by Lemmas \ref{lem:DPratioLarget} and \ref{lem:DPratioSmallt} we have for $-\Delta/2<t\in\mathbb{R}$
\begin{align*}
     \frac{f_{A}(t)}{f_{A}(t+\Delta)}&\le \frac{e^{(\DGam+\Delta)/\theta}(\Du+\DGam+\Delta)}{c_{\DGam}}\max\left\{\frac{1}{\alpha c_L\lambda}, \frac{1}{\Du}\right\}\\&\qquad+\frac{e^{(\DGam+\Delta)/\theta}}{c_{\DGam}}\frac{\Gamma(\alpha)\theta^\alpha}{\lambda c_L}\max\left\{(\DGam+\Delta)^{1-\alpha},\frac{c_{\Du}e^{\Du/\theta}}{2}(\DGam+\Delta+\Du)^{1-\alpha}\right\}.
\end{align*}
As, by assumption, $\theta\le \DGam+\Delta$, we see
\[
\theta^\alpha (\DGam+\Delta)^{1-\alpha}<\theta^\alpha (\Du+\DGam+\Delta)^{1-\alpha}<\Du+\DGam+\Delta
\]
 and so we may simplify to
\begin{align*}
     \frac{f_{A}(t)}{f_{A}(t+\Delta)}&\le \frac{e^{(\DGam+\Delta)/\theta}}{c_{\DGam}}\left(\max\left\{\tfrac{1}{\alpha c_L\lambda}, \tfrac{1}{\Du}\right\}(\Du+\DGam+\Delta)+\frac{\Gamma(\alpha)(\DGam+\Delta+\Du)}{\lambda c_L}\max\left\{1,\tfrac{c_{\Du}e^{\Du/\theta}}{2}\right\}\right)\\
     &= \frac{e^{(\DGam+\Delta)/\theta}(\Du+\DGam+\Delta)}{c_{\DGam}}\left(\max\left\{\frac{1}{\alpha c_L\lambda}, \frac{1}{\Du}\right\}+\frac{\Gamma(\alpha)}{\lambda c_L}\max\left\{1,\frac{c_{\Du}e^{\Du/\theta}}{2}\right\}\right).
\end{align*}

Let $\Du=\alpha\theta$ (i.e., the mean of the $\Gamma$-distribution). Recalling that by definition $c_L=1/4$ and by assumption $\lambda\le \theta/2$, so $\alpha c_L\lambda\le \alpha\theta/8<\alpha\theta=\Du$:
\begin{align*}
     \frac{f_{A}(t)}{f_{A}(t+\Delta)}&\le \frac{e^{(\DGam+\Delta)/\theta}(\alpha\theta+\DGam+\Delta)}{c_{\DGam}}\left(\frac{1}{\alpha c_L\lambda}+\frac{\Gamma(\alpha)}{c_L\lambda}\max\left\{1,\frac{c_{\Du}e^{\alpha}}{2}\right\}\right)\\
     &\le \frac{e^{(\DGam+\Delta)/\theta}(\alpha\theta+\DGam+\Delta)}{c_{\DGam}c_L\lambda}\left(\frac{1}{\alpha}+\Gamma(\alpha)e^{\alpha}\right),
\end{align*}
where the last step follows from the observation that $1\le c_{\Du}\le 2$ (recall $c_{\Du}$ was defined in Lemma \ref{lem:DPratioLarget}) and $e^\alpha>1$ for $\alpha>0$.

By assumption $\Gamma(\alpha)<1/\alpha$, then \[
1/\alpha+\Gamma(\alpha) e^\alpha\le \frac{2e^\alpha}{\alpha}
\]
and so we conclude
\begin{align*}
     \frac{f_{A}(t)}{f_{A}(t+\Delta)}&\le \frac{2e^{(\DGam+\Delta)/\theta}e^\alpha(\alpha\theta+\DGam+\Delta)}{\alpha c_{\DGam}c_L\lambda}.
\end{align*}
\end{proof}

\subsection{Proof of Lemma \ref{lem:parametersetting}}
\label{app:proofparamsetting}
\paramsetting*
\begin{proof}
Suppose $\vert t\vert\le \vert t+\Delta\vert$. The first inequality is satisfied as $f_A(t)\ge f_A(t+\Delta)$. Let $\DGam$ be as in Lemma \ref{lem:boundsonGammaGamma}. We turn to prove the latter inequality:
In order to apply Lemma \ref{lem:DPratioGeneralt} we make the following assumptions:
\begin{align}
\label{eq:assumptions}
\theta\le \DGam+\Delta,\quad \lambda\le \theta/2,\quad \lambda\le \Delta/\ln(2) \quad \text{and}\quad \Gamma(\alpha)\le 1/\alpha.
\end{align}
We choose parameters satisfying these assumptions towards the end of the proof.
\medskip

If $\DGam$ is at least the median of the $\Gamma$-distribution then $c_{\DGam}\ge 1/2$. So let $\DGam=\alpha\theta$ be the mean of the $\Gamma$-distribution (the mean is an upper bound on the median of the $\Gamma$-distribution \citep{Chen68}), to see 
\begin{align}
\label{eq:insert3delta}
\frac{f_{A}(t)}{f_{A}(t+\Delta)}\stackrel{\text{(Lemma \ref{lem:DPratioGeneralt})}}\le \frac{2e^{(\Delta+\DGam)/\theta}e^{\alpha}(\alpha\theta+\DGam+\Delta)}{\alpha c_{\DGam}c_L\lambda}\le \frac{2\cdot (2\alpha\theta+\Delta)}{1/2\cdot 1/4}\frac{e^{\Delta/\theta}e^{2\alpha}}{\alpha \lambda}.
\end{align}
Suppose $\alpha\le 1/2$ and recall by assumption $\theta\le \DGam+\Delta=\alpha\theta+\Delta$, so $\theta\le \Delta/(1-\alpha)$. Then $\theta\le \Delta/(1-\alpha)\le \Delta/\alpha$ and so $\alpha\theta\le \Delta$. 
We revise our set of assumptions, to also ensure that $\alpha\theta\le \Delta$, and so our set of assumptions is:
\begin{align}
\label{eq:assumptions2}
\theta\le \frac{\Delta}{1-\alpha},\quad \lambda\le\min\{ \theta/2,\Delta/\ln(2)\},\quad \alpha\le 1/2 \quad \text{and}\quad \Gamma(\alpha)\le 1/\alpha.
\end{align}
Under these assumptions we have $2\alpha\theta+\Delta\le 3\Delta$ and inserting into (\ref{eq:insert3delta}), we conclude
\[
\frac{f_{A}(t)}{f_{A}(t+\Delta)}\le 48\Delta\frac{e^{\Delta/\theta}e^{2\alpha}}{\alpha \lambda}.
\]
Now define
\[
\alpha=1/e^{\varepsilon/k_\alpha},\quad \theta=\frac{k_\theta\Delta}{\varepsilon}\quad \text{and}\quad \lambda=1/e^{\varepsilon/k_\lambda}
\]
Observing $2\alpha\le 1$ and $\ln(48e)<4.9$
\[
\frac{f_{A}(t)}{f_{A}(t+\Delta)}\le 48e^{2\alpha}\Delta e^{\varepsilon(1/k_\theta+1/k_\alpha+1/k_\lambda)}< e^{\varepsilon(1/k_\theta+1/k_\alpha+1/k_\lambda)+4.9+\ln(\Delta)}.
\]
Hence, we ensure that 
\begin{align*}
\frac{f_{A}(t)}{f_{A}(t+\Delta)}\le e^\varepsilon
\end{align*}
when the assumptions in (\ref{eq:assumptions2}) are satisfied and
\begin{align}
\label{eq:restrforks}
\varepsilon(1/k_\theta+1/k_\alpha+1/k_\lambda)+5+\ln(\Delta)\le \varepsilon\quad \Leftrightarrow\quad 1/k_\theta+1/k_\alpha+1/k_\lambda\le 1-\frac{5+\ln(\Delta)}{\varepsilon}.
\end{align}

Let $k_\alpha=k_\lambda=k_\theta=4$. It is easy to check that the assumptions on $\theta$ and $\alpha$ in (\ref{eq:assumptions2}) are satisfied simultaneously for, $\varepsilon\ge 4\ln(2)$ (and we can check that $\Gamma(\alpha)\le 1/\alpha$ numerically). Furthermore, for $\varepsilon\ge 4\ln(2)$, we require $\lambda\le \Delta \min\{2/\varepsilon, 1/\ln(2)\}=2\Delta/\varepsilon$ and so the assumption on $\lambda$ is satisfied when $\Delta\ge \varepsilon\lambda/2=\varepsilon e^{-\varepsilon/4}/2 $. Observing that $\Delta\ge 2/e\ge \varepsilon e^{-\varepsilon/4}/2$, we conclude that the assumptions in (\ref{eq:assumptions2}) are satisfied for $\varepsilon\ge 4\ln(2)$ and $\Delta\ge 2/e$. The inequality in (\ref{eq:restrforks}) is satisfied for 
\[
3/4\le 1-\frac{5+\ln(\Delta)}{\varepsilon}\quad \Leftrightarrow\quad 20+4\ln(\Delta) \le \varepsilon.
\]
Finally, observe that if $\vert t\vert\ge \vert t+\Delta\vert$, the result follows by symmetry of $f_A$ (Corollary \ref{cor:symmetryArete}). 
\end{proof}

\end{document}